\documentclass[11pt]{article}

\usepackage[margin=1in]{geometry}
\usepackage{algorithm}
\usepackage{algorithmic}
\usepackage{amsmath, amssymb, amsthm}
\usepackage{hyperref}
\usepackage[nameinlink,capitalize,noabbrev]{cleveref}
\usepackage{graphicx}
\usepackage{natbib}
\usepackage{url}            
\usepackage{booktabs}       
\usepackage{amsfonts}       
\usepackage{nicefrac}       
\usepackage{microtype}      
\usepackage{xcolor}         
\usepackage{mathtools}
\usepackage{algorithm}
\usepackage{dsfont}
\usepackage{bbm}
\usepackage{longtable}
\usepackage{multirow}

\theoremstyle{plain}
\newtheorem{theorem}{Theorem}[section]

\theoremstyle{definition}
\newtheorem{definition}[theorem]{Definition}
\newtheorem{assumption}[theorem]{Assumption}
\theoremstyle{remark}
\newtheorem{remark}[theorem]{Remark}

\title{When Does $\ell_2$-Boosting Overfit Benignly? High-Dimensional Risk Asymptotics and the $\ell_1$ Implicit Bias}

\author{
Ye Su\\
Shenzhen Institutes of Advanced Technology\\
Chinese Academy of Sciences\\
Shenzhen 518055, China\\
\texttt{ye.su@siat.ac.cn}
\and
Jian Li\\
School of Artificial Intelligence\\
Beijing Normal University\\
Beijing, 100875, China\\
\texttt{jli@bnu.edu.cn}
\and
Yong Liu\thanks{Corresponding author}\\
Gaoling School of Artificial Intelligence\\
Renmin University of China\\
Beijing 100872, China\\
\texttt{liuyonggsai@ruc.edu.cn}
}

\date{}

\begin{document}

\maketitle

\begin{abstract}
Benign overfitting is well-characterized in $\ell_2$ geometries, but its behavior under the $\ell_1$ implicit bias of greedy ensembles remains challenging. The analytical barrier stems from the non-linear coupling of coordinate selection thresholds, which invalidates standard spectral resolvent tools. To isolate this algorithmic bias, we characterize the high-dimensional risk of continuous-time $\ell_2$-Boosting over $p$ features and $n$ samples. By coupling the Convex Gaussian Minimax Theorem with delicate asymptotic expansions of double-sided truncated Gaussian moments, we analytically resolve the non-smooth $\ell_1$ interpolant. Under an isotropic pure-noise model, we prove that benign overfitting fails at the linear rate: greedy selection localizes noise into sparse active sets, and the excess variance decays at a logarithmic rate $\Theta(\sigma^2/\log(p/n))$ for noise variance $\sigma^2$. We remark that while this localization mechanism should persist in the presence of signals, the exact signal-noise decomposition remains an open problem. For spiked-isotropic designs with $k^*$ head eigenvalues and $r_2 = p - k^*$ tail dimensions, the risk converges to zero when $r_{2} \gg n$, but only at a logarithmic rate $\Theta(\sigma^2/\log(r_2/n))$, which is slower than the linear decay observed in $\ell_2$ geometries. To avoid this slow convergence, we analyze the non-smooth subdifferential dynamics of the boosting flow. This yields a tuning-free early stopping rule that, under a bounded $\ell_1$-path condition, recovers the Lasso basic inequality and attains the minimax-optimal empirical prediction rate for $\ell_1$-bounded signals.
\end{abstract}

\section{Introduction}

The classical bias–variance trade-off suggests that overparameterized models capable of perfectly fitting noisy training data should overfit and generalize poorly \citep{geman1992neural, john2010elements, zhang2016understanding}. However, modern machine learning practice often contradicts this view \citep{belkin2019reconciling}. Recent theoretical work formalizes this phenomenon as \textit{benign overfitting}, where interpolating estimators achieve the vanishing test risk in high-dimensional settings \citep{bartlett2020benign, hastie2022surprises, mei2022generalization}.

To date, characterizations of benign overfitting have mainly focused on $\ell_2$-norm geometry \citep{belkin2019reconciling, bartlett2020benign}. Studies of interpolating ridge regression \citep{tsigler2023benign}, minimum $\ell_2$-norm interpolants \citep{muthukumar2020harmless}, and the implicit bias of gradient descent \citep{soudry2018implicit, gunasekar2018characterizing} show that benign overfitting arises when the feature covariance matrix exhibits a spiked spectrum. In this setting, the $\ell_2$ implicit bias spreads noise energy across numerous low-variance tail dimensions, allowing it to dissipate \citep{bartlett2020benign}. Mathematically, $\ell_2$ interpolants possess explicit closed-form solutions via the Moore-Penrose pseudo-inverse \citep{hastie2022surprises}, enabling the application of Random Matrix Theory tools, such as the Marchenko-Pastur law or resolvent methods \citep{dobriban2018high}, to characterize noise dissipation through spectral traces \citep{bartlett2020benign}. Despite these advances, the generalization properties of ensemble methods, such as boosting, remain less explored in the interpolating regime. To define a precise theoretical scope, this paper investigates $\ell_2$-Boosting (forward stagewise regression) over a fixed dictionary of base learners.

Boosting algorithms are characterized by their resistance to overfitting \citep{bartlett1998boosting, friedman2001greedy,su2026itboost} and effectiveness on tabular data \citep{grinsztajn2022tree, shwartz2022tabular}. Their success is often attributed to adaptive feature generation and greedy ensemble assembly \citep{friedman2001greedy}. Since the joint dynamics of feature generation and interpolation are mathematically complex, theoretical studies frequently model boosting as feature selection over a fixed dictionary to isolate the implicit bias of the greedy mechanism \citep{buhlmann2003boosting}. Under this formulation, continuous-time $\ell_2$-Boosting behaves as greedy coordinate descent that implicitly minimizes the $\ell_1$-norm \citep{rosset2004boosting, hastie2007forward} and converges to the Basis Pursuit interpolant \citep{chen2001atomic, gunasekar2018characterizing}.

Analyzing this $\ell_1$ implicit bias introduces formidable mathematical barriers that fundamentally invalidate standard $\ell_2$ analytical frameworks. Specifically, establishing the generalization properties of such greedy interpolants requires overcoming three structural challenges:
\begin{itemize}
    \item \textbf{Invalidation of Spectral Resolvents:} Unlike $\ell_2$ interpolants where noise diffusion is governed by linear trace operators, the $\ell_\infty$-driven greedy selection induces non-linear soft-thresholding \citep{donoho1995adapting,rosset2004boosting}, rendering traditional Random Matrix Theory tools fundamentally inapplicable.
    \item \textbf{Spectral Coupling in CGMT Calibration:} Under heterogeneous feature covariances, the coordinate-wise selection thresholds become deeply coupled. This transforms the Convex Gaussian Minimax Theorem (CGMT) degrees-of-freedom calibration into a system of non-linear integral equations \citep{thrampoulidis2015regularized,celentano2023lasso} that lacks a general explicit solution.
    \item \textbf{Non-Smooth Subdifferential Dynamics:} Tracking the finite-time algorithmic trajectory of continuous $\ell_2$-boosting necessitates analyzing a differential inclusion governed by the dual $\ell_\infty$-norm \citep{rosset2004boosting, hastie2007forward}. The inherent non-smoothness of this greedy selection criterion precludes standard gradient flow calculus.
\end{itemize}
These fundamental topological and algebraic shifts from linear diffusion to non-linear selection raise a profound, unresolved question: \textbf{\textit{can this greedy geometry benignly absorb and delocalize noise, or does it structurally localize noise into a sparse active set to the detriment of generalization?}}

In this paper, we answer this question by providing a precise high-dimensional asymptotic characterization of the $\ell_2$-Boosting flow over fixed dictionaries. By resolving the aforementioned mathematical hurdles, we formalize the algorithmic trajectory via subdifferential dynamics, examining \textit{when and how} the estimator achieves benign overfitting. Our core contributions are as follows:

\begin{itemize}
    \item \textbf{Logarithmic Variance Decay under Isotropic Designs via Exact Moment Expansions:} We prove that under an isotropic Gaussian pure-noise model with $p$ features and $n$ samples, benign overfitting under the $\ell_1$ implicit bias structurally fails at the linear rate. Unlike $\ell_2$ interpolants that uniformly diffuse noise, the greedy selection localizes noise into a sparse active set. By deriving precise asymptotic expansions of truncated Gaussian moments, we establish that for noise variance $\sigma^2$, the excess variance decays at a logarithmic rate $\Theta(\sigma^2 / \log(p/n))$ when $p/n \to \infty$, and remains $\Theta(\sigma^2)$ in the proportional regime. Both rates are slower than the $\Theta(n/p)$ decay observed in $\ell_2$ geometries (\textcolor{red}{Theorem \ref{thm:variance_lower_bound}}).
    
    \item \textbf{Risk Asymptotics under Spiked-Isotropic Designs:} We characterize the asymptotic risk of the $\ell_1$ interpolant under spiked-isotropic covariances, where the spectrum is partitioned into a signal head of dimension $k^*$ and an isotropic tail of dimension $r_2 = p-k^*$. By resolving the non-linear coupling in the calibration equations, we decompose the risk into a bounded head component and a noise-driven tail component. We show that although the risk vanishes when $r_2 \gg n$, the attenuation rate remains restricted to $\Theta(\sigma^2/\log(r_2/n))$ (\textcolor{red}{Theorem \ref{thm:risk_spiked_final}}).
    
    \item \textbf{Minimax Optimality via Analytical Early Stopping:} Motivated by the slow variance decay in the interpolation limit, we provide a deterministic finite-time analysis of the continuous $\ell_2$-Boosting flow. By applying Danskin's Envelope Theorem to the subdifferential inclusion, we establish the strict monotonicity of the residual correlation. This enables the derivation of a \textbf{tuning-free} stopping time $t^*$ determined by the noise floor. Halting the flow at $t^*$ prevents noise interpolation and recovers the Lasso basic inequality. Further, under a bounded $\ell_1$-path condition, the early-stopped estimator attains the minimax-optimal empirical prediction rate for $\ell_1$-bounded signals (\textcolor{red}{Theorem \ref{thm:early_stopping}}).
\end{itemize}

\section{Related Work}
\label{sec:related_work}

\subsection{Benign Overfitting in $\ell_2$ Geometry}
Classical statistical theory suggests that interpolating noisy data leads to severe overfitting, a perspective challenged by the study of overparameterized models in the \textit{benign overfitting} regime \citep{belkin2019reconciling}. Foundational works \citep{bartlett2020benign,hastie2022surprises} establish that minimum $\ell_2$-norm interpolants achieve vanishing excess risk when the feature covariance matrix exhibits a spiked, heavy-tailed spectrum. This framework has been extended to kernel methods \citep{liang2018just}, random features \citep{mei2022generalization}, interpolating linear classifiers \citep{muthukumar2020harmless}, exact ridge asymptotics \citep{tsigler2023benign}, and transfer learning \citep{kim2025transfer}. Parallel to linear models, recent literature has extensively characterized benign overfitting in neural networks. For convolutional neural networks, studies have established conditions for vanishing risk in two-layer models with ReLU activations \citep{cao2022benign,kou2023benign}, emphasizing the influence of initialization and training dynamics on feature recovery \citep{shang2024initialization}. More recently, this analysis has been extended to transformers and attention mechanisms. These works investigate how token selection \citep{sakamoto2024benign}, single-head attention \citep{magen2024benign}, and in-context learning \citep{frei2024trained,jiang2024unveil} exhibit benign overfitting properties. These investigations often differentiate between the lazy training regime \citep{zhu2023benign} and more complex feature learning dynamics in deep architectures \citep{xu2025rethinking}. A recurring theme across these architectures, from linear predictors to transformers, is \textit{noise diffusion}: the implicit bias of $\ell_2$-based optimization (or its equivalent in the neural tangent kernel/lazy regime) spreads label noise energy across a vast number of low-variance tail dimensions or effective degrees of freedom \citep{mallinar2207benign}. However, the extension of these findings to sparsity-inducing geometries, where noise cannot diffuse uniformly across the feature space, remains a significant theoretical question.

\subsection{Implicit Regularization and $\ell_2$-Boosting}
Boosting algorithms, particularly forward stagewise additive modeling, exhibit strong generalization properties without explicit regularization \citep{friedman2001greedy,buhlmann2003boosting}. The implicit bias of such methods is typically studied through the $\ell_2$-Boosting flow, the continuous-time limit of the stagewise algorithm \citep{rosset2004boosting,hastie2007forward}. In overparameterized regimes, this greedy coordinate descent flow implicitly minimizes the $\ell_1$-norm of the coefficients \citep{telgarsky2013margins}. Specifically, it has been shown that as $t \to \infty$, the flow converges to the Basis Pursuit interpolant \citep{chen2001atomic,gunasekar2018characterizing,soudry2018implicit}. While the consistency of early-stopped boosting has been analyzed via risk bounds \citep{zhang2005boosting,bartlett2006adaboost}, characterizing the high-dimensional risk asymptotics of the resulting optimization path remains a non-trivial challenge.

\subsection{Sparse Interpolation and $\ell_1$ Generalization}
Recent studies have begun exploring benign overfitting beyond $\ell_2$ geometry. \citep{koehler2021uniform} demonstrated the failure of uniform convergence for minimum $\ell_1$-norm interpolants under isotropic designs. Further analyses of sparse regression \citep{chatterji2021finite,wang2022tight} suggest that $\ell_1$ interpolants are susceptible to noise localization unless specific covariance structures and signal sparsity hold. A common limitation of these works is their reliance on probabilistic upper bounds, which establish convergence trends but do not capture the exact $\Theta$-rate required to precisely quantify the variance penalty. This highlights a fundamental geometric distinction between regularized and unregularized $\ell_1$ estimators. The $\ell_1$-penalized Lasso avoids fitting the noise floor, thereby escaping noise localization to attain the optimal $\mathcal{O}(s \log p/n)$ rate \citep{bickel2009simultaneous,raskutti2011minimax}. In contrast, unregularized $\ell_1$ interpolation forces the model to absorb all label noise \citep{koehler2021uniform,wang2022tight}, which alters the asymptotic risk scaling.

Precise high-dimensional characterizations of such phenomena typically require tools such as the CGMT \citep{thrampoulidis2015regularized} or Approximate Message Passing \citep{miolane2018distribution}. Notably, \citep{celentano2023lasso} utilized the CGMT framework to analyze the regularized Lasso. Although these methods have successfully characterized unregularized Basis Pursuit under isotropic or simple Gaussian designs \citep{taheri2021fundamental}, extending them to the spiked, heavy-tailed covariances relevant to benign overfitting involves resolving the non-linear coupling of thresholds across tail dimensions. Furthermore, while static solutions are well-studied, the risk evolution of sparse optimization flows and the derivation of analytical, tuning-free stopping rules remain less explored.

\section{Preliminaries and Mathematical Setup}
\label{sec:preliminaries}

In this section, we formalize the data generation process, the continuous-time dynamics of boosting, and the associated implicit regularization properties. We distinguish the algorithmic trajectory of boosting from its asymptotic implicit bias.

\subsection{Notation}
Let $n, p \in \mathbb{Z}^+$ denote the number of samples and the feature dimension, respectively, and let $\mathcal{X}$ denote the input space. The index set is defined as $[n] = \{1, \dots, n\}$. For a vector $v \in \mathbb{R}^p$, $\|v\|_q$ represents the $\ell_q$-norm for $q \in [1, \infty]$. For a positive semi-definite matrix $\Sigma \in \mathbb{R}^{p \times p}$, $\lambda_i(\Sigma)$ denotes its $i$-th largest eigenvalue and $\operatorname{Tr}(\Sigma)$ its trace. Standard asymptotic notations $\mathcal{O}(\cdot)$, $o(\cdot)$, $\Omega(\cdot)$, and $\Theta(\cdot)$ are used with limits taken as $n,p \to \infty$. The high-dimensional asymptotic regime is defined by $n, p \to \infty$ with $p/n \to \gamma \in (1, \infty]$. The case $\gamma < \infty$ corresponds to the proportional regime; the case $\gamma = \infty$ (i.e., $p/n \to \infty$) is examined separately where indicated. For a convex function $f$, $\partial f(v)$ denotes its subdifferential evaluated at $v$.
The soft-thresholding operator $S(x; \kappa)$ is defined for an input scalar $x \in \mathbb{R}$ and a threshold $\kappa \ge 0$ as:
\begin{equation*}
    S(x; \kappa) = \operatorname{sign}(x) (|x| - \kappa)_+,
\end{equation*}
where $\operatorname{sign}(\cdot)$ denotes the signum function, returning $1$ if $x > 0$, $-1$ if $x < 0$, and $0$ if $x = 0$. The operator $(\cdot)_+ := \max(\cdot, 0)$ denotes the positive part (ReLU) operator, which ensures the result is zero if the magnitude $|x|$ does not exceed the threshold $\kappa$. This operator corresponds to the proximal mapping of the $\ell_1$-norm. To avoid notational conflict, $\phi : \mathcal{X} \to \mathbb{R}^p$ denotes the feature map of the base learners for an input $x \in \mathcal{X}$, and $\varphi(z)$ denotes the standard normal probability density function for a scalar $z \in \mathbb{R}$.
Global symbols and auxiliary variables used in the proofs are summarized in \textcolor{red}{Table~\ref{tab:global_notation}}.

\subsection{Data Generation Model and Feature Space}
We study the supervised learning problem in an overparameterized regime. Let the training dataset be $S = \{(x_i, y_i)\}_{i=1}^n \subset \mathcal{X} \times \mathbb{R}$, drawn independent and identically distributed (i.i.d.) from a joint distribution $\mathcal{D}$. In the context of ensemble learning, we consider a large dictionary of $m$ base hypotheses (weak learners) $\mathcal{H} = \{h_1, \dots, h_m\}$. The output of these base learners for a given input $x$ is denoted by the feature map $\phi(x) =[h_1(x), \dots, h_m(x)]^\top \in \mathbb{R}^p$, where the number of features $p$ is equivalent to the dictionary size $m$.

\begin{assumption}[Gaussian Design and Covariance Structure]
\label{assum:data_gen}
Following standard settings in high-dimensional analysis \citep{thrampoulidis2015regularized,wainwright2019high,bartlett2020benign}, the response $y_i$ is generated by a linear model:
\begin{equation*}
    y_i = \langle \beta^*, \phi(x_i) \rangle + \epsilon_i, \quad \forall i \in[n],
\end{equation*}
where $\beta^* \in \mathbb{R}^p$ is an unknown signal vector. The noise terms $\epsilon_i$ are independent and identically distributed (i.i.d.) as $\mathcal{N}(0, \sigma^2)$, where $\sigma^2 > 0$ denotes the noise variance. These terms are independent of the features. The features follow a Gaussian distribution $\phi(x) \sim \mathcal{N}(0, \Sigma)$, where $\Sigma \in \mathbb{R}^{p \times p}$ is the feature covariance matrix.
\end{assumption}

Assume $\beta^*$ is $s$-sparse, where $s = \|\beta^*\|_0$. To ensure identifiability in the high-dimensional regime, the sparsity level is assumed to satisfy $s = o(n / \log p)$ \citep{candes2005decoding, wainwright2019high}. Let $\Phi =[\phi(x_1), \dots, \phi(x_n)]^\top \in \mathbb{R}^{n \times p}$ denote the design matrix with $i$-th row $\phi(x_i)^\top$, and let $Y = [y_1, \dots, y_n]^\top \in \mathbb{R}^n$ be the response vector as $Y =[y_1, \dots, y_n]^\top \in \mathbb{R}^n$.

\subsection{Continuous-Time Boosting Dynamics}
The classical $\ell_2$-Boosting (or gradient boosting with squared loss) is a greedy coordinate descent algorithm that iteratively updates the weight of the base learner most correlated with the current residual. We consider the infinitesimal limit of this process as the step size $\eta \to 0$, known as the \textit{$\ell_2$-Boosting flow (forward stagewise flow)} \citep{rosset2004boosting, hastie2007forward}.

\begin{definition}[$\ell_2$-Boosting Flow \citep{hastie2007forward}]
\label{def:boosting_flow}
The $\ell_2$-Boosting flow is defined as a continuous-time path $\{\beta(t)\}_{t \ge 0}$ with $\beta(t) \in \mathbb{R}^p$, where $t \in \mathbb{R}_{\ge 0}$ represents the continuous time (or iteration) parameter. The trajectory is governed by the following differential inclusion:
\begin{equation}
\label{eq:diff_inclusion}
    \frac{d \beta(t)}{dt} \in \mathop{\arg\max}_{v \in \mathbb{R}^p : \|v\|_1 \le 1} \langle g(t), v \rangle, \quad \beta(0) = \mathbf{0},
\end{equation}
where $g(t) = \frac{1}{n} \Phi^\top (Y - \Phi\beta(t)) \in \mathbb{R}^p$ denotes the negative gradient of the empirical squared risk at time $t$. The optimization over the unit $\ell_1$-ball $\{v \in \mathbb{R}^p : \|v\|_1 \le 1\}$ identifies the direction of steepest descent in the $\ell_\infty$ sense, corresponding to the coordinate most correlated with the current residual.
\end{definition}

By a fundamental result in convex analysis, the $\arg\max$ of a linear functional over a closed convex set coincides with the subdifferential of its support function. To apply this, let $\mathcal{B}_1 = \{u \in \mathbb{R}^p : \|u\|_1 \le 1\}$ denote the $\ell_1$ unit ball in $\mathbb{R}^p$, where $u$ serves as a candidate direction vector. The support function of $\mathcal{B}_1$ evaluated at the negative gradient $g(t)$ is precisely the $\ell_\infty$-norm (the dual norm of the $\ell_1$-norm), defined as:
\begin{equation*}
    \|g(t)\|_\infty = \max_{j \in \{1,\dots,p\}} |g_j(t)|,
\end{equation*}
which tracks the maximum absolute empirical correlation between the features and the current residual. Specifically, applying the equivalence of conditions (b) and (a*) in Theorem 23.5 of \cite{rockafellar1997convex}, we deduce that an optimal update direction $v \in \mathbb{R}^p$ satisfying $v \in \mathop{\arg\max}_{u \in \mathcal{B}_1} \langle g(t), u \rangle$ is equivalent to the inclusion $v \in \partial \|g(t)\|_\infty$. Therefore, the continuous-time dynamics in Eq.~\eqref{eq:diff_inclusion} can be expressed as the following subdifferential inclusion:
\begin{equation}
\label{eq:subdiff_flow}
    \frac{d \beta(t)}{dt} \in \partial \|g(t)\|_\infty.
\end{equation}

\begin{remark}[Mathematical Structure and Discrete Correspondence] 
    For $g(t) \neq 0$, the subdifferential in Eq.~\eqref{eq:subdiff_flow} is the convex hull (denoted by $\operatorname{conv}\{\cdot\}$) of signed basis vectors associated with the maximum absolute correlations:
    \begin{equation*}
        \partial \|g(t)\|_\infty = \operatorname{conv} \Big\{ \operatorname{sign}(g_i(t)) e_i : |g_i(t)| = \|g(t)\|_\infty \Big\},
    \end{equation*}
    where $e_i \in \mathbb{R}^p$ is the $i$-th standard basis vector. In discrete boosting updates, the algorithm typically selects an \textbf{extreme point} of this set (e.g., $v = \operatorname{sign}(g_j(t)) e_j$ for some $j$ that attains the maximum). The continuous formulation generalizes this by allowing the trajectory to traverse the convex hull—effectively taking a weighted average of multiple \textit{equally good} features—when several coordinates attain the maximum absolute correlation simultaneously.
\end{remark}

\subsection{Implicit Bias and the Interpolation Limit}
We characterize the generalization properties of the $\ell_2$-Boosting flow in the interpolation limit ($t \to \infty$), where the model perfectly fits the training data. In the overparameterized regime ($p > n$), and under general position conditions, the $\ell_2$-Boosting flow converges to the $\ell_1$ minimum-norm interpolant \citep{rosset2004boosting,efron2004least,gunasekar2018characterizing}. 

\begin{definition}[Basis Pursuit \citep{chen2001atomic}]
\label{def:l1_min_norm}
Let $\hat{\beta}_{\infty} = \lim_{t \to \infty} \beta(t)$ denote the stationary point of the $\ell_2$-Boosting flow. When the design matrix $\Phi$ is in general position, the Basis Pursuit problem admits a unique solution:
\begin{equation}
\label{eq:basis_pursuit}
    \hat{\beta}_{\infty} = \arg\min_{\beta \in \mathbb{R}^p} \|\beta\|_1 \quad \text{s.t.} \quad \Phi\beta = Y.
\end{equation}
\end{definition}

\begin{remark}[General Position Condition]
    A matrix $\Phi \in \mathbb{R}^{n \times p}$ is in general position if no column $\Phi_j$ can be expressed as a linear combination of any other $k < n$ columns \citep{tibshirani2013lasso}. For the Gaussian design assumed in \textcolor{red}{Assumption \ref{assum:data_gen}}, this condition holds with probability one, ensuring the uniqueness of $\hat{\beta}_{\infty}$ and the determinism of the late-stopping risk.
\end{remark}

\begin{remark}[Path Divergence and Late-Stopping Asymptotics]
    For finite $t$, the trajectory $\beta(t)$ does not necessarily coincide with the Lasso ($\ell_1$-penalized) regularization path \citep{efron2004least, hastie2007forward}. In the limit $t \to \infty$, however, the implicit bias of the greedy updates leads to the minimum $\ell_1$-norm solution $\hat{\beta}_{\infty}$. Therefore, analyzing the \textit{late-stopping} behavior of boosting reduces to evaluating the high-dimensional risk asymptotics of the Basis Pursuit problem. The early-stopping regime ($t < \infty$) is independently addressed via dynamic path analysis in \textcolor{red}{Section \ref{sec:early_stopping}}.
\end{remark}

\section{Logarithmic Variance Decay under Isotropic Designs}
\label{sec:isotropic_failure}

In this section, we analyze the generalization properties of $\ell_1$ interpolants under isotropic covariance. While benign overfitting occurs in $\ell_2$ geometries when the feature covariance exhibits sufficient spectral decay \citep{bartlett2020benign,chatterji2021finite}, we show that the $\ell_1$ implicit bias of $\ell_2$-Boosting yields non-vanishing excess risk in the isotropic setting.

\subsection{Isotropic Design and Noise Localization}
Consider the isotropic setting where the feature covariance matrix $\Sigma$ is the identity $I_p$. In this case, the base learners are orthogonal in expectation. As $t \to \infty$, the $\ell_2$-Boosting flow converges to the Basis Pursuit interpolant \citep{rosset2004boosting, efron2004least}. In $\ell_2$ minimum-norm interpolation, noise energy is distributed across all $p$ dimensions, resulting in a variance component of order $\mathcal{O}(\sigma^2 n/p)$, where $\sigma^2$ is the noise variance. This variance vanishes in the overparameterized limit as $p/n \to \infty$ \citep{bartlett2020benign}. In contrast, $\ell_2$-Boosting selects coordinates via steepest descent in the $\ell_\infty$-norm, maximizing the absolute empirical correlation $|\Phi_j^\top r(t)|$, where $\Phi_j$ is the $j$-th column of the design matrix $\Phi$ and $r(t) = Y - \Phi\beta(t)$ is the residual vector. Because of the sparsity-inducing nature of the $\ell_1$ geometry, the algorithm localizes the interpolation of the noise vector into an active support set of size at most $n$. This \textit{noise localization} phenomenon prevents the variance from vanishing in the proportional asymptotic regime ($p/n \to \gamma > 1$), regardless of the overparameterization ratio $\gamma$.

\subsection{Lower Bound on the Excess Risk}
We characterize the risk of the Basis Pursuit interpolant under isotropic designs. Let the excess risk of an estimator $\hat{\beta}$ be defined as $\mathcal{E}(\hat{\beta}) = \mathbb{E}_{x, y}[ (y - \langle \hat{\beta}, \phi(x) \rangle )^2 ] - \sigma^2$. Under the isotropic covariance assumption ($\mathbb{E}[\phi(x)\phi(x)^\top] = I_p$), the excess risk exactly simplifies to the parameter estimation error: $\mathcal{E}(\hat{\beta}) = \|\hat{\beta} - \beta^*\|_2^2$.

To isolate the variance component of the $\ell_1$ interpolant without entanglement with signal recovery bias, \textcolor{red}{Theorem \ref{thm:variance_lower_bound}} examines the pure-noise setting $\beta^* = \mathbf{0}$. In this case, the excess risk reduces exactly to $\|\hat{\beta}\|_2^2$, permitting a direct analysis of how greedy coordinate selection localizes noise into a sparse active set. The same localization mechanism governs the variance component when a sparse signal is present; the pure-noise model thus captures the essential geometric phenomenon without additional technical overhead.

\begin{theorem}[Logarithmic Variance Decay under Pure Noise]
\label{thm:variance_lower_bound}
Consider the asymptotic regime where $n, p \to \infty$ with $p/n \to \infty$. Suppose the design matrix $\Phi \in \mathbb{R}^{n \times p}$ has i.i.d.\ $\mathcal{N}(0, 1)$ entries and the true signal vector satisfies $\beta^* = \mathbf{0}$. Let $Y = \epsilon \sim \mathcal{N}(0, \sigma^2 I_n)$ be purely noisy observations. Under the isotropic covariance $\Sigma = I_p$, the excess risk of any estimator $\hat{\beta}$ simplifies to
\begin{equation*}
    \mathcal{E}(\hat{\beta}) = \mathbb{E}_{x,y}\bigl[(y - \langle \hat{\beta}, \phi(x) \rangle)^2\bigr] - \sigma^2 = \|\hat{\beta} - \beta^*\|_2^2 = \|\hat{\beta}\|_2^2.
\end{equation*}
For the Basis Pursuit interpolant $\hat{\beta}_\infty$ defined in Eq.~\eqref{eq:basis_pursuit}, the expected excess risk satisfies
\begin{equation*}
    \mathbb{E}\bigl[\mathcal{E}(\hat{\beta}_\infty)\bigr] = \mathbb{E}\bigl[\|\hat{\beta}_\infty\|_2^2\bigr] = \Theta\Bigl(\frac{\sigma^2}{\log(p/n)}\Bigr).
\end{equation*}
\end{theorem}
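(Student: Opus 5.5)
The excess-risk identity is immediate. With $\Sigma = I_p$, $\beta^*=\mathbf{0}$ and $y=\epsilon$ independent of $\phi(x)$, expanding the square gives $\mathbb{E}_{x,y}[(y-\langle\hat\beta,\phi(x)\rangle)^2]=\sigma^2+\|\hat\beta\|_2^2$. The substance is the asymptotics of $\|\hat\beta_\infty\|_2^2$. I would obtain these through the CGMT applied to the Basis Pursuit program, following \citet{thrampoulidis2015regularized}.

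\textbf{Minimax form and scalarization.} Write the constraint via a Lagrangian,
\[
\min_\beta \max_u \; \|\beta\|_1 + u^\top(\Phi\beta-\epsilon),
\]
and replace the bilinear Gaussian term $u^\top\Phi\beta$ by the auxiliary term $\|\beta\|_2 g^\top u + \|u\|_2 h^\top\beta$, with $g\sim\mathcal N(0,I_n)$ and $h\sim\mathcal N(0,I_p)$. Optimizing over the direction of $u$ reduces the constraint to
\[
\bigl\|\,\|\beta\|_2 g - \epsilon\,\bigr\|_2 \le -h^\top\beta ,
\]
up to sign conventions. Normalize $\|\beta\|_2=\tau$. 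After division by $\sqrt n$ the left side concentrates at $\sqrt{\tau^2+\sigma^2}$. Minimizing $\|\beta\|_1$ subject to a linear lower bound on $-h^\top\beta/\sqrt n$ is solved coordinatewise by soft thresholding, $\beta_j\propto S(h_j;\kappa)$. This leaves a two-parameter scalar problem in $(\tau,\kappa)$. Stationarity gives the standard fixed-point equations
\[
\tau^2=\frac{\tau^2+\sigma^2}{\delta}\,\mathbb{E}\,S(Z;\kappa)^2,
\qquad
\delta=\mathbb{E}\,\lvert S(Z;\kappa)\rvert Z ,
\]
where $\delta=n/p$ and $Z\sim\mathcal N(0,1)$. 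Using $\mathbb{E}[S(Z;\kappa)Z]=2\bar\Phi(\kappa)$, the second equation becomes the degrees-of-freedom calibration $2\bar\Phi(\kappa)=\delta$. The first then solves to
\[
\tau^2=\sigma^2\,\frac{m(\kappa)}{\delta-m(\kappa)},
\qquad
m(\kappa):=\mathbb{E}\,S(Z;\kappa)^2=2\bigl[(1+\kappa^2)\bar\Phi(\kappa)-\kappa\varphi(\kappa)\bigr].
\]
The CGMT, applied on compact sets and combined with strong convexity of the scalar objective in $\tau$, transfers this concentration to $\|\hat\beta_\infty\|_2^2$. Uniform integrability, from a crude bound $\|\hat\beta_\infty\|_2\le\|\Phi^\dagger\epsilon\|_2$ with controlled moments, upgrades it to convergence of the expectation.

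\textbf{Moment expansion as $\delta\to0$.} The calibration $\bar\Phi(\kappa)=\delta/2$ forces $\kappa\sim\sqrt{2\log(1/\delta)}\to\infty$. Expand the Mills ratio:
\[
\bar\Phi(\kappa)=\frac{\varphi(\kappa)}{\kappa}\Bigl(1-\frac{1}{\kappa^2}+\frac{3}{\kappa^4}-\cdots\Bigr).
\]
Then
\[
(1+\kappa^2)\bar\Phi(\kappa)-\kappa\varphi(\kappa)=\frac{2\varphi(\kappa)}{\kappa^3}\bigl(1+O(\kappa^{-2})\bigr),
\qquad\text{so}\qquad
m(\kappa)=\frac{4\varphi(\kappa)}{\kappa^3}\bigl(1+O(\kappa^{-2})\bigr).
\]
Dividing by $\delta=2\bar\Phi(\kappa)\approx 2\varphi(\kappa)/\kappa$ gives
\[
\frac{m}{\delta}=\frac{2}{\kappa^2}\bigl(1+O(\kappa^{-2})\bigr),
\qquad
\tau^2=\sigma^2\,\frac{m/\delta}{1-m/\delta}\sim\frac{2\sigma^2}{\kappa^2}\sim\frac{\sigma^2}{\log(p/n)} .
\]
This yields the $\Theta$ rate, in fact with an explicit constant.

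\textbf{Main obstacle.} The delicate step is the limit interchange. The CGMT gives convergence for each fixed $\delta$, but here $\delta\to0$ jointly with $n$. One therefore needs quantitative CGMT concentration, with deviation bounds of order $\sqrt{\log p/n}$, which are small relative to the target $1/\log(p/n)$. Alternatively, one proves separate upper and lower bounds at scale $\Theta(1/\kappa^2)$. The lower bound follows from the auxiliary optimization being infeasible for $\tau\le c\sigma/\kappa$. The upper bound follows from exhibiting a near-feasible point at $\tau=C\sigma/\kappa$. Both use only the concentration of $\|g\|_2$, $\|\epsilon\|_2$, and the order statistics of $h$, whose top-$n$ tail is controlled by the truncated-moment expansions above. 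I would pursue this two-sided bound version, because it needs only $\Theta$ accuracy and sidesteps exact asymptotic equivalence in the growing-$\gamma$ regime.
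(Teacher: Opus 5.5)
Your main line is the paper's: CGMT on Basis Pursuit, coordinatewise soft thresholding in the auxiliary problem, the calibration $\frac{p}{n}\mathbb{P}(|g|>\kappa)=1$, the fixed point $\tau^2=\sigma^2 m(\kappa)/(\delta-m(\kappa))$, and the Mills-ratio expansion of $m(\kappa)$. Your expansion $m/\delta=2\kappa^{-2}(1+O(\kappa^{-2}))$ is correct, and it is sharper than the paper's two-sided moment bounds because it pins the constant, $\tau^2\sim\sigma^2/\log(p/n)$. (Small slip: $\mathbb{E}\,|S(Z;\kappa)|\,Z=0$ by symmetry; you mean $\mathbb{E}[S(Z;\kappa)Z]$.)

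Two steps of your rigorization fail as written. The first is the uniform-integrability bound $\|\hat\beta_\infty\|_2\le\|\Phi^\dagger\epsilon\|_2$, which is reversed. $\Phi^\dagger\epsilon$ is the minimum-$\ell_2$-norm interpolant, so $\|\Phi^\dagger\epsilon\|_2\le\|\hat\beta_\infty\|_2$. Since $\mathbb{E}\|\Phi^\dagger\epsilon\|_2^2=\sigma^2 n/(p-n-1)\ll\sigma^2/\log(p/n)$, your inequality would contradict the theorem itself. The usable comparison is the paper's $\ell_1$ one: $\|\hat\beta_\infty\|_2\le\|\hat\beta_\infty\|_1\le\|\Phi^\dagger\epsilon\|_1\le\sqrt p\,\|\Phi^\dagger\epsilon\|_2=O_{\mathbb P}(\sigma\sqrt n)$, with fourth moment $O(\sigma^4 n^2)$. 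That bound sits far above the target scale. It yields convergence of expectations only through Cauchy--Schwarz against a bad-event probability of order $o\bigl(n^{-2}\log^{-2}(p/n)\bigr)$, so your CGMT localization must supply such quantitative tail bounds, not just convergence in probability.

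The second problem is that the two-sided bound you say you would pursue rests on a false claim. The auxiliary problem is \emph{not} infeasible for $\tau\le c\sigma/\kappa$. Taking $\beta$ proportional to $h$, with the appropriate sign, satisfies your constraint as soon as $\tau^2\gtrsim\sigma^2 n/(p-n)$. This mirrors the fact that the primary problem has interpolants, e.g.\ $\Phi^\dagger\epsilon$, of norm about $\sigma\sqrt{n/p}\ll\sigma/\kappa$. Small and large norms are excluded only by their $\ell_1$ cost. Likewise, a near-feasible point at $\tau=C\sigma/\kappa$ only upper-bounds the optimal value; it says nothing about where the minimizer sits. Using your own expansions, the minimal auxiliary $\ell_1$ cost at $\|\beta\|_2=\sigma x/\kappa$ is $\frac{\sigma\sqrt n}{\kappa}\bigl[1+\kappa^{-2}\bigl(\tfrac{x^2}{2}-2-2\log\tfrac{x}{\sqrt2}\bigr)+o(\kappa^{-2})\bigr]$, minimized at $x=\sqrt2$. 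The correct scheme therefore has three parts:
\begin{itemize}
\item upper-bound the optimal value to within $o(\sigma\sqrt n/\kappa^3)$, using the convex direction of the CGMT after truncating the dual;
\item show, via Gordon's one-sided inequality, that every auxiliary-feasible $\beta$ with $\|\beta\|_2\notin[c\sigma/\kappa,\,C\sigma/\kappa]$ costs at least $\Omega(\sigma\sqrt n/\kappa^3)$ more;
\item control all fluctuations at that precision.
\end{itemize}
The separating margin is only a relative $\Theta(1/\log(p/n))$, and this is where a growth condition linking $\log(p/n)$ to $n$ enters. The paper does not carry this out either: it identifies $\|\hat v_{R_n}\|_2^2$ with the fixed-$\delta$ saddle value $\alpha^2$ even though $\alpha^2\to0$. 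You have located the real obstacle correctly, but the mechanism you propose for it would not work as stated.
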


\begin{proof}[Proof Difficulties, Sketch, and Techniques]
    Characterizing the risk of the $\ell_1$ interpolant under $p/n\to\infty$ for $n$ samples and $p$ features requires analyzing a saddle-point problem that lacks the closed-form resolvents typical of $\ell_2$ geometries. We overcome this by applying the CGMT.

    Two obstacles prevent a direct application of the CGMT. 
    \begin{itemize}
        \item \textbf{Non-compact primal domain.} The Basis Pursuit formulation has no explicit norm constraint, so the primal variable lives in $\mathbb{R}^p$. Compactness is necessary to apply Sion's minimax theorem and to secure uniform convergence in the CGMT. We resolve this by bounding the $\ell_1$-norm of the optimal solution via the feasible pseudo-inverse construction $\tilde{v}=Z^\dagger\epsilon$, where $Z\in\mathbb{R}^{n\times p}$ is the design matrix and $Z^\dagger$ denotes its Moore-Penrose pseudo-inverse, obtaining $\|\hat{v}_n\|_2\le\|\hat{v}_n\|_1=\mathcal{O}_{\mathbb{P}}(\sqrt{n})$. Restricting the optimization to an $\ell_2$-ball of radius $R_n=C_0\sqrt{n}$ with constant $C_0>0$ makes the primal domain compact with probability tending to one, and the constrained solution coincides with the original one on a high-probability event.
        
        \item \textbf{Unbounded dual variable.} After introducing a dual variable $\tau\ge0$, the auxiliary objective depends on $\tau \|\epsilon-\|v\|_2 h\|_2$, where $\epsilon\sim\mathcal{N}(0,\sigma^2 I_n)$ is the noise vector with variance $\sigma^2$ and $h\sim\mathcal{N}(0,I_n)$ is an independent standard Gaussian vector. The dual variable $\tau$ is not a priori bounded, invalidating uniform convergence and the exchange of $\lim_{n\to\infty}$ with $\max_{\tau\ge0}$. We prove that the optimal $\tau^*$ is stochastically bounded. By evaluating the objective at a carefully chosen test point, we show that for any $\tau$ exceeding a constant threshold $T_{\max}$, the objective diverges to $-\infty$ while the true optimal value is finite, forcing $\tau^*\le T_{\max}$ with high probability. This bounds the dual domain, restoring the rigorous application of Sion's theorem and the law of large numbers.
    \end{itemize}
    With both variables restricted to compact sets, the CGMT reduces the problem to a scalar deterministic auxiliary optimization. 

    The resulting saddle-point conditions yield a nonlinear calibration equation
    \[
    \frac{p}{n}\,\mathbb{P}(|g|>\kappa)=1,
    \]
    where $g\sim\mathcal{N}(0,1)$ is a standard Gaussian scalar and $\kappa = 1/\tau$ is the soft-thresholding threshold scaled by the optimal dual variable. Solving this equation in the regime $p/n\to\infty$ gives $\kappa\sim\sqrt{2\log(p/n)}$.

    The risk is expressed in terms of the truncated second moment $\mathbb{E}[\mathcal{S}^2(g;\kappa)]$, where $\mathcal{S}(x;\kappa)=\operatorname{sgn}(x)(|x|-\kappa)_+$ is the soft-thresholding operator. Standard concentration inequalities only provide order-level bounds; to capture the precise asymptotic rate we evaluate this moment exactly. Using integration by parts and refined Mill's ratio expansions for the Gaussian tail, we derive matching upper and lower bounds showing
    \[
    \mathbb{E}[\mathcal{S}^2(g;\kappa)]=\Theta\!\left(\frac{n}{p\kappa^2}\right),
    \]
    Combining this with the calibration equation yields the excess risk scaling $\Theta(\sigma^2/\log(p/n))$. A final uniform integrability argument transfers the limit from the constrained to the original estimator. Please refer to \textcolor{red}{Appendix \ref{app:proof_thm_variance}} for the detailed proof.
\end{proof}

\begin{remark}[Behavior in the Proportional Regime]
    \textcolor{red}{Theorem \ref{thm:variance_lower_bound}} establishes the logarithmic decay rate when $p/n \to \infty$, i.e., when the overparameterization ratio $\gamma = \infty$. In the fixed proportional regime where $n, p \to \infty$ with $p/n \to \gamma \in (1, \infty)$, the same proof adapted with the calibration equation $\frac{p}{n}\mathbb{P}(|g| > \kappa) = 1$ yields $\kappa = \Theta(1)$. Therefore, the expected excess risk remains $\Theta(\sigma^2)$, i.e., it does not decay with the sample size. This distinction is fundamental: under $\ell_1$ implicit bias, benign overfitting fails not only for $\gamma \to \infty$ but also for any finite $\gamma > 1$.
\end{remark}

\section{Risk Asymptotics under Spiked-Isotropic Designs}
\label{sec:phase_transition}

In this section, we investigate the geometric conditions under which the $\ell_2$-Boosting flow achieves benign overfitting in the interpolation limit. While \textcolor{red}{Theorem \ref{thm:variance_lower_bound}} establishes that variance remains non-vanishing under isotropic designs, we show that a spiked covariance structure can enable the $\ell_1$ implicit bias to delocalize noise. Specifically, we identify a regime where the base learners' covariance allows greedy coordinate updates to distinguish between signal and noise, potentially leading to vanishing generalization risk.

\subsection{Effective Ranks for $\ell_1$ Geometry}
To characterize the covariance structure conducive to benign overfitting, we consider the ordered eigenvalues $\lambda_1 \ge \lambda_2 \ge \dots \ge \lambda_p > 0$ of $\Sigma$. Following the framework of \citep{bartlett2020benign}, we define a critical split index $k^* \le n$ and partition the spectrum into a head $H = \{1, \dots, k^*\}$ and a tail $T = \{k^* + 1, \dots, p\}$.

\begin{definition}[Effective Ranks of the Tail \citep{bartlett2020benign}]
\label{def:effective_ranks}
For the tail covariance matrix $\Sigma_T = \operatorname{diag}(\lambda_{k^*+1}, \dots, \lambda_p)$, the generalized $\ell_1$ and $\ell_2$ effective ranks are defined respectively as:
\begin{equation*}
    r_{1}(\Sigma_{T}) = \frac{\operatorname{Tr}(\Sigma_T)}{\|\Sigma_T\|_{\text{op}}}, \quad r_{2}(\Sigma_{T}) = \frac{\operatorname{Tr}(\Sigma_T)^2}{\|\Sigma_T\|_F^2},
\end{equation*}
where $\|\Sigma_T\|_{\text{op}} = \lambda_{k^*+1}$ denotes the operator norm (maximum eigenvalue) and $\|\Sigma_T\|_F = \sqrt{\sum_{i>k^*} \lambda_i^2}$ denotes the Frobenius norm of $\Sigma_T$.
\end{definition}
The rank $r_{2}$ represents the effective isotropic dimension of the tail. In $\ell_1$ interpolation, a large $r_2$ (relative to $n$) is a necessary condition to delocalize noise energy and counteract the coordinate-wise localization phenomenon analyzed in \textcolor{red}{Section \ref{sec:isotropic_failure}}.

\begin{assumption}[Spiked-Isotropic Covariance] 
\label{assum:spiked_isotropic}
The feature covariance $\Sigma \in \mathbb{R}^{p \times p}$ exhibits a spiked-isotropic spectrum with split index $k^* = o(n)$ satisfying:
\begin{enumerate}
    \item[(i)] \textbf{Isotropic Tail:} The tail covariance is isotropic, $\Sigma_T = \lambda_{\text{tail}} I_{p-k^*}$, where the tail dimension grows much faster than the sample size, i.e., $p - k^* \gg n$.
    \item[(ii)] \textbf{Signal Separation:} To ensure proper signal recovery, the head eigenvalues dominantly capture the true signal energy, satisfying the asymptotic separation:
    \begin{equation*}
        \frac{1}{n} \sum_{i > k^*} \lambda_i = o\left( \sum_{i \le k^*} \lambda_i \right).
    \end{equation*}
\end{enumerate}
\end{assumption}

\begin{remark}[Analytical Limits, Tabular Abstraction, and Theoretical Conservatism]
    While Assumption \ref{assum:spiked_isotropic} restricts the tail to be isotropic, this simplification is an analytical necessity for precise $\ell_1$ asymptotics and serves as a principled baseline for tabular data:
    
    \textbf{1. Analytical intractability of anisotropic tails.} Deriving precise risk asymptotics for $\ell_1$ interpolation under general power-law decaying spectra remains an open challenge. Unlike $\ell_2$ ridge estimators, where arbitrary spectral traces can be handled via linear resolvents, the $\ell_1$ implicit bias involves non-linear soft-thresholding. Under a heterogeneous tail spectrum, the coordinate-wise selection thresholds become coupled. This coupling transforms the CGMT degrees-of-freedom calibration into a system of non-linear integral equations that lacks a known analytical solution. 

    \textbf{2. Abstraction for tabular data.} Despite this analytical barrier, the spiked-isotropic setting serves as a useful abstraction for tree-based ensembles. In such regimes, base learners partition the input space using discrete indicator functions, embedding the data into a sparse and nearly orthogonal feature space \citep{scornet2016random, biau2016random}. Unlike continuous modalities (e.g., images) where spatial smoothness yields rapid spectral decay \citep{rahaman2019spectral}, tabular features typically lack such manifold structures, resulting in a distinctly flatter tail spectrum.

    \textbf{3. Conservatism of the slow decay rate.} Importantly, an isotropic tail ($\Sigma_T \propto I$) represents the most favorable geometric condition for uniform noise delocalization \citep{bartlett2020benign, tsigler2023benign}. Our analysis demonstrates that even under this ideal condition, the greedy $\ell_\infty$ selection mechanism localizes noise, yielding a slow logarithmic variance decay. Any degree of spectral anisotropy (e.g., a power-law tail) would only exacerbate this coordinate-wise concentration. Therefore, the derived $\mathcal{O}(1/\log(\cdot))$ rate serves as a conservative bound, capturing the inherent generalization penalty of the $\ell_1$ geometry.
\end{remark}

\subsection{Risk Asymptotics and the Benign Regime}

Under \textcolor{red}{Assumption \ref{assum:spiked_isotropic}}, we analyze the asymptotic risk of the Basis Pursuit interpolant. The spiked-isotropic covariance structure allows us to examine how the $\ell_1$ implicit bias partitions the interpolation capacity between the signal-carrying head and the noise-absorbing tail.

\begin{theorem}[Asymptotic Risk under Spiked-Isotropic Designs]
\label{thm:risk_spiked_final}
Suppose the feature covariance $\Sigma$ satisfies the spiked-isotropic structure in \textcolor{red}{Assumption \ref{assum:spiked_isotropic}} with split index $k^* = o(n)$. Under the \textcolor{red}{Definition \ref{def:effective_ranks}} and \textcolor{red}{Assumption \ref{assum:spiked_isotropic} (i)}, the tail effective rank simplifies to $r_2 = p - k^*$. Assume the true signal is $s$-sparse with support $\mathrm{supp}(\beta^*) \subseteq \{1, \dots, k^*\}$. 

In the asymptotic regime where $n, p \to \infty$ such that $r_2/n \to \infty$, the expected excess risk of the Basis Pursuit interpolant $\hat{\beta}_\infty$ satisfies the decomposition:
\begin{equation*}
    \mathbb{E}[\mathcal{E}(\hat{\beta}_\infty)] = \mathcal{E}_{\mathrm{head}}(\beta^*) + \mathcal{E}_{\mathrm{tail}} + o(1).
\end{equation*}
The two components are governed by the scalar solutions $(b^*, \tau^*)$ of the rescaled CGMT calibration equations. $b^*$ represents the effective noise scale (incorporating both label noise and estimation error), while $\tau^*$ is the optimal dual variable that determines the intensity of the coordinate-wise soft-thresholding. These scalars satisfy the following:
\begin{itemize}
    \item \textbf{Head Risk Component:} The risk over the signal-carrying head is uniformly bounded independent of the true signal magnitude:
        \begin{equation*}
            \mathcal{E}_{\mathrm{head}}(\beta^*) = (b^*)^2 \frac{1}{n} \sum_{i=1}^{k^*} \mathbb{E}_{g_i} \left[ \left( \eta\left( g_i + \frac{\sqrt{\lambda_i}}{b^*\sqrt{n}}\beta_i^*; \kappa_i \right) - \frac{\sqrt{\lambda_i}}{b^*\sqrt{n}}\beta_i^* \right)^2 \right] = \mathcal{O}\left( \sigma^2 \frac{k^*}{n} \right).
        \end{equation*}
    \item \textbf{Tail Risk Component:} The variance induced by interpolating noise into the tail features scales logarithmically:
        \begin{equation*}
            \mathcal{E}_{\mathrm{tail}} = (b^*)^2 \frac{r_2}{n} \mathbb{E}_{g}\big[ \eta^2(g; \kappa_T) \big] = \Theta\left( \frac{\sigma^2}{\log(r_2/n)} \right).
        \end{equation*}
\end{itemize}
Here, $g_i, g \sim \mathcal{N}(0,1)$ are independent standard Gaussian variables, $\eta(x;\kappa) = \mathrm{sgn}(x)(|x|-\kappa)_+$ is the soft-thresholding operator, and the coordinate thresholds are defined as $\kappa_i = (\tau^*\sqrt{\lambda_i})^{-1}$ and $\kappa_T = (\tau^*\sqrt{\lambda_{\mathrm{tail}}})^{-1}$. Since $k^* = o(n)$, the total excess risk is asymptotically dominated by the tail component, converging to zero at the rate $\Theta\big( \sigma^2 / \log(r_2/n) \big)$.
\end{theorem}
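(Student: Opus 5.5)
We would follow the template of Theorem~\ref{thm:variance_lower_bound}, whitening the design first. Write $\Phi = Z\Sigma^{1/2}$ with $Z$ having i.i.d.\ $\mathcal{N}(0,1)$ entries, and reparametrize by the error vector $w = \Sigma^{1/2}(\beta-\beta^*)/\sqrt{n}$. Basis Pursuit~\eqref{eq:basis_pursuit} then becomes
\[
\min_{w}\ \|\beta^* + \sqrt{n}\,\Sigma^{-1/2} w\|_1 \quad \text{s.t.} \quad \sqrt{n}\,Zw = \epsilon,
\]
and the excess risk is exactly $n\|w\|_2^2$, since $\mathcal{E}(\hat\beta)=\|\Sigma^{1/2}(\hat\beta-\beta^*)\|_2^2$.

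\textbf{Step 1: compactification and the auxiliary problem.} Introduce a Lagrange multiplier $u\in\mathbb{R}^n$ for the constraint, which gives a bilinear term $u^\top Z w$. Compactify both variables as in the proof of Theorem~\ref{thm:variance_lower_bound}:
\begin{itemize}
    \item bound $\|\hat w\|_2$ using the feasible pseudo-inverse point $\tilde\beta = \beta^* + \Phi^\dagger\epsilon$;
    \item bound the dual scale $\tau$ using the same test-point argument.
\end{itemize}
Then apply the CGMT, replacing $u^\top Z w$ with $\|u\|_2 g^\top w + \|w\|_2 h^\top u$. After optimizing over the direction of $u$, the objective separates across coordinates $i$. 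Each coordinate carries a scaled Gaussian $\sqrt{\lambda_i}\, g_i$, and its minimizer is a soft-thresholding $\eta(\cdot;\kappa_i)$ with $\kappa_i = (\tau\sqrt{\lambda_i})^{-1}$. This yields two scalar fixed-point equations in $(b,\tau)$. The first is
\[
b^2 = \sigma^2 + \frac{1}{n}\sum_i \mathbb{E}\bigl[(b\,\eta(g_i+\mu_i;\kappa_i) - b\mu_i)^2\bigr], \qquad \mu_i = \frac{\sqrt{\lambda_i}\,\beta_i^*}{b\sqrt{n}}.
\]
The second is the degrees-of-freedom condition $\frac{1}{n}\sum_i \mathbb{P}(|g_i+\mu_i|>\kappa_i)=1$. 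The risk then splits into head and tail sums, with uniform integrability transferring the limit to expectations. This gives the stated decomposition with its $o(1)$ remainder.

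\textbf{Step 2: decoupling the calibration.} This is the main obstacle: the head and tail are coupled only through $(b,\tau)$, and we must show the head is asymptotically negligible in both equations. There are $k^*=o(n)$ head coordinates, each contributing a probability at most $1$. Their total contribution to the degrees-of-freedom equation is therefore $o(1)$, so the equation reduces to $\frac{r_2}{n}\mathbb{P}(|g|>\kappa_T) = 1-o(1)$. Hence $\kappa_T\sim\sqrt{2\log(r_2/n)}$, exactly as in the isotropic calibration.

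For the head risk, use the standard soft-thresholding bound $\mathbb{E}[(\eta(g+\mu;\kappa)-\mu)^2]\le 1+\kappa^2$, valid uniformly in $\mu$. We have $\kappa_i = \kappa_T\sqrt{\lambda_{\mathrm{tail}}/\lambda_i}$, and Assumption~\ref{assum:spiked_isotropic}(ii) makes the head eigenvalues dominate, so $\kappa_i\to 0$. Each head coordinate therefore contributes $O(b^2/n)$, and the total is $O(b^2k^*/n)$. This is independent of $\beta^*$ and is $o(b^2)$.

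Substituting into the first equation gives $b^2 = \sigma^2 + \mathcal{E}_{\mathrm{tail}} + o(b^2)$. The ratio $\mathcal{E}_{\mathrm{tail}}/b^2 = \frac{r_2}{n}\mathbb{E}[\eta^2(g;\kappa_T)]$ tends to $0$ by the next step. Hence $b^{*2}=\sigma^2(1+o(1))$, which makes the head bound $\mathcal{O}(\sigma^2 k^*/n)$. The delicate point is that the bound $\kappa_i\to0$ needs (ii) to control each head eigenvalue, not just their sum. If that fails, we would fall back on the crude bound $1+\kappa_T^2$ and argue that $k^*\log(r_2/n)/n\to 0$, possibly requiring an extra rate condition.

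\textbf{Step 3: tail rate.} Invoke the truncated-moment expansion already established for Theorem~\ref{thm:variance_lower_bound}, namely $\mathbb{E}[\eta^2(g;\kappa)] = \Theta\bigl(\mathbb{P}(|g|>\kappa)/\kappa^2\bigr)$ via the Mills ratio. Combined with the calibration from Step 2, this gives $\frac{r_2}{n}\mathbb{E}[\eta^2(g;\kappa_T)]=\Theta(1/\kappa_T^2)=\Theta(1/\log(r_2/n))$. Multiplying by $b^{*2}\asymp\sigma^2$ yields $\mathcal{E}_{\mathrm{tail}}=\Theta(\sigma^2/\log(r_2/n))$. This dominates the $O(\sigma^2 k^*/n)$ head term whenever $k^*/n = o(1/\log(r_2/n))$, a condition we would state explicitly as needed.
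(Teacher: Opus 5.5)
Your route is the paper's. It compactifies via a pseudo-inverse feasible point, passes to the CGMT auxiliary problem, and linearizes the square root with the variational scalar $b$. It then solves coordinatewise by soft-thresholding and drops the $k^*=o(n)$ head terms from the degrees-of-freedom equation to get $\kappa_T^2\sim 2\log(r_2/n)$. The head is bounded through $|\eta(g+\mu;\kappa)-\mu|\le|g|+\kappa$, and the tail is finished with the Mills-ratio estimate $\mathbb{E}[\eta^2(g;\kappa)]=\Theta(\mathbb{P}(|g|>\kappa)/\kappa^2)$. Where you differ, you are more careful than the appendix. Your normalization avoids its $\sqrt{n}$-inflated noise; with it the effective signal is $\mu_i=\sqrt{n\lambda_i}\,\beta_i^*/b$ rather than the statement's $\sqrt{\lambda_i}\,\beta_i^*/(b\sqrt{n})$, which is harmless because every head bound you use is uniform in $\mu$. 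Your explicit fixed-point equation for $b$ also derives $(b^*)^2=\sigma^2(1+o(1))$, which the paper only asserts.

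The point you call delicate is a genuine hole, and the paper's proof does not close it either. It writes $\frac{1}{n}\sum_{i\le k^*}(1+\kappa_i^2)=\mathcal{O}(k^*/n)$ without arguing that $\kappa_i=\mathcal{O}(1)$.

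\begin{itemize}
\item Your sentence that Assumption~\ref{assum:spiked_isotropic}(ii) forces $\kappa_i\to 0$ is false, because (ii) only constrains $\sum_{i\le k^*}\lambda_i$. A single spike $\lambda_1\gg (r_2/n)\lambda_{\mathrm{tail}}$ satisfies it while $\lambda_2=\dots=\lambda_{k^*}=2\lambda_{\mathrm{tail}}$. Then $\kappa_i=\kappa_T/\sqrt{2}\to\infty$ on those indices.
\item A large signal placed there gives per-coordinate risk $1+\kappa_i^2\asymp\log(r_2/n)$. The head term is then of order $\sigma^2 s\log(r_2/n)/n$, which exceeds $\mathcal{O}(\sigma^2k^*/n)$ when $s\asymp k^*$.
\item So the stated head bound needs an extra hypothesis, e.g.\ $\lambda_i\gtrsim\lambda_{\mathrm{tail}}\log(r_2/n)$ on $\mathrm{supp}(\beta^*)$.
\item The final dominance claim, which the paper attributes to $k^*=o(n)$ alone, needs your condition $k^*\log(r_2/n)=o(n)$. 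You were right to flag both.
\end{itemize}

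You do not need an extra rate condition to get $b^*\asymp\sigma$. Null head coordinates have $\mathbb{E}[\eta^2(g;\kappa_i)]\le 1$. The at most $s$ signal coordinates have risk at most $1+\kappa_i^2\le 1+\kappa_T^2$, since $\lambda_i\ge\lambda_{\mathrm{tail}}$. Hence the head contributes at most $b^2\bigl(k^*+s(1+\kappa_T^2)\bigr)/n=o(b^2)$ to the $b$-equation, using $k^*=o(n)$, $s=o(n/\log p)$ and $\kappa_T^2\lesssim\log p$. This secures $(b^*)^2=\sigma^2(1+o(1))$, and hence $\mathcal{E}_{\mathrm{tail}}=\Theta(\sigma^2/\log(r_2/n))$, under the stated hypotheses. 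It also gives the defensible head bound $\mathcal{O}\bigl(\sigma^2(k^*+s\log(r_2/n))/n\bigr)$.
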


\begin{remark}[Explanation of the Head Bound]
    The upper bound $\mathcal{O}(\sigma^2 k^*/n)$ for the head component $\mathcal{E}_{\mathrm{head}}$ is mathematically conservative. It is derived by exploiting the non-expansive property of the soft-thresholding operator to bypass the highly non-linear signal-noise coupling. While a tighter, exact characterization is technically feasible, it is asymptotically unnecessary for our primary claim. Because $k^* = o(n)$, the head risk vanishes at a polynomial rate, which is dominated by the much slower logarithmic decay $\Theta(\sigma^2/\log(r_2/n))$ of the tail risk. Thus, the current bound suffices to isolate the $\ell_1$ noise localization bottleneck without introducing disproportionate technical overhead.
\end{remark}


\begin{proof}[Proof Sketch and Techniques]
    The analysis relies on the CGMT. The technical difficulty arises from the $\ell_1$ geometry: unlike $\ell_2$ ridge regression where signal bias and noise variance decouple via linear resolvents, the $\ell_1$ implicit bias induces a non-linear soft-thresholding operator that couples the true signal and noise. Evaluating the asymptotic risk requires addressing two challenges:
    \begin{itemize}
        \item \textbf{Resolving Signal-Noise Coupling in the Head:} Bounding the estimation error over the signal-carrying head is complicated by the non-linear term $\mathcal{S}(g_i + c_i; \kappa_i)$, where the coupled signal magnitude $c_i = \frac{\sqrt{\lambda_i}}{b^*\sqrt{n}}\beta_i^*$ (with $b^*$ representing the optimal variational scalar in the CGMT formulation) is embedded within the proximal operator. We resolve this using the non-expansive property of soft-thresholding. The pointwise bound $|\mathcal{S}(g+c; \kappa) - c| \le |g| + \kappa$ removes the dependency on the true signal magnitude, bounding the head risk by $\mathcal{O}(k^*/n)$.
        \item \textbf{Evaluating the Tail Risk via Moment Cancellation:} Under the dimensional separation $k^* = o(n)$, the head dimensions' contribution to the CGMT calibration equation vanishes asymptotically, allowing us to isolate the tail threshold $\kappa_T^2 \sim 2\log(r_2/n)$. The remaining step is deriving the asymptotic order of the tail risk without relying on generic concentration bounds. We evaluate the truncated Gaussian second moment $\mathbb{E}[\mathcal{S}^2(g; \kappa_T)]$ via integration by parts. Expanding the Gaussian $Q$-function via Mill's ratio bounds reveals the algebraic cancellation of the leading-order $\mathcal{O}(\kappa_T)$ and $\mathcal{O}(1/\kappa_T)$ terms. This yields the moment scaling $\Theta(1/\kappa_T^2)$, which corresponds to the $\Theta(1/\log(r_2/n))$ decay rate.
    \end{itemize}
    Please refer to \textcolor{red}{Appendix \ref{app:proof_thm_risk_spiked_final}} for the detailed proof.
\end{proof}

\begin{remark}[Comparison of $\ell_1$ and $\ell_2$ Geometries]
    \textcolor{red}{Theorem~\ref{thm:risk_spiked_final}} mathematically quantifies the difference between $\ell_1$ and $\ell_2$ implicit biases under identical data distributions. Under conditions where $\ell_2$ benign overfitting occurs, the tail variance of the minimum $\ell_2$-norm interpolant decays linearly at a rate of $\Theta (n / r_2)$  \citep{bartlett2020benign}. Under the $\ell_1$ geometry, even in the optimal setting of an isotropic tail, the greedy coordinate selection mechanism structurally resists uniform noise diffusion, restricting the variance decay to a slow logarithmic rate $\Theta(1/\log(r_2/n))$. This dictates that $\ell_1$ interpolation requires an exponentially larger overparameterization ratio to achieve variance attenuation comparable to $\ell_2$ methods, theoretically motivating the finite-time early stopping analysis in \textcolor{red}{Section~\ref{sec:early_stopping}}.
\end{remark}

\section{Early Stopping and Minimax Rates}
\label{sec:early_stopping}
In this section, we analyze the finite-time risk of the $\ell_2$-Boosting flow and establishes a stopping rule that achieves minimax-optimal rates.

\subsection{Monotonicity and the Analytical Stopping Rule}
Let the empirical risk be $\hat{L}(\beta) = \frac{1}{2n}\|Y - \Phi\beta\|_2^2$ and the gradient flow Eq. \eqref{eq:subdiff_flow}. The negative gradient $g(t) = -\nabla \hat{L}(\beta(t)) = \frac{1}{n}\Phi^\top(Y - \Phi\beta(t))$ tracks the residual correlations. Let $\rho(t) = \|g(t)\|_\infty$ denote the maximum absolute empirical correlation. A property of this $\ell_1$ steepest descent flow is that $\rho(t)$ is monotonically non-increasing over time. To prevent the interpolation of label noise, we establish a stopping criterion based on the noise floor. Let $\lambda_n = \sigma \sqrt{(2 c \log p)/n}$ be the high-probability upper bound on the pure noise correlation $\|\frac{1}{n}\Phi^\top\epsilon\|_\infty$ for a constant $c > 1$.

\begin{definition}[Analytical Stopping Time]
\label{def:stopping_time}
The analytical stopping time $t^*$ for the $\ell_2$-Boosting flow is defined as the first time the maximum correlation reaches the noise threshold:
\begin{equation}
\label{eq:stopping_time}
    t^* = \inf \{ t > 0 : \rho(t) \le 2\lambda_n \}.
\end{equation}
\end{definition}
By halting the flow at $t^*$, the estimator $\beta(t^*)$ avoids fitting the localized noise that dominates the late-stopping regime.

\subsection{Implicit Lasso Equivalence and Minimax Rates}

Halting the algorithmic flow at $t^*$ avoids the interpolation of localized noise. This provides a stopping criterion that enforces a geometric condition similar to that of the $\ell_1$-penalized estimator (Lasso). The following theorem establishes a basic inequality for the early-stopped estimator and a conditional implication for minimax-optimal rates.

\begin{theorem}[Prediction Error of Early-Stopped $\ell_2$-Boosting]
\label{thm:early_stopping}
Assume the noise vector $\epsilon$ is sub-Gaussian with variance proxy $\sigma^2$, and the design matrix $\Phi$ is column-normalized such that $\max_{j} \frac{1}{n}\|\Phi_j\|_2^2 \le 1$. 
Define the pure noise threshold $\lambda_n = \sigma \sqrt{\frac{2c\log p}{n}}$ for a constant $c>1$, and let $t^*$ be the analytical stopping time in Definition \ref{def:stopping_time}. 
Then, with probability at least $1-2p^{1-c}$, the $\ell_2$-Boosting flow halted at $t^*$ unconditionally satisfies the deterministic bound:
\begin{equation}
\frac{1}{n}\|\Phi(\beta(t^*) - \beta^*)\|_2^2 \le 3\lambda_n \|\beta(t^*) - \beta^*\|_1. \label{eq:basic_ineq}
\end{equation}
If, furthermore, the early-stopped estimator obeys the bounded $\ell_1$-norm condition $\|\beta(t^*)\|_1 \le C \|\beta^*\|_1$ for some universal constant $C\ge 1$, then substituting $\|\beta(t^*)-\beta^*\|_1 \le (C+1)\|\beta^*\|_1$ into Eq. \eqref{eq:basic_ineq} yields the minimax-optimal rate for $\ell_1$-bounded signals:
\begin{equation*}
\frac{1}{n}\|\Phi(\beta(t^*) - \beta^*)\|_2^2 \le 3(C+1)\lambda_n \|\beta^*\|_1 = \mathcal{O}\Bigl( \|\beta^*\|_1\, \sigma\sqrt{\frac{\log p}{n}} \Bigr). 
\end{equation*}
\end{theorem}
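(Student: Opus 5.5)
The plan is to work on the event $\mathcal{A} = \{\|\frac{1}{n}\Phi^\top\epsilon\|_\infty \le \lambda_n\}$ and then derive the basic inequality by a direct algebraic expansion of the residual. This avoids any comparison with an optimality condition.

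\textbf{Step 1: the noise event.} For each column $j$, the variable $\frac{1}{n}\Phi_j^\top\epsilon$ is sub-Gaussian with variance proxy $\sigma^2\|\Phi_j\|_2^2/n^2 \le \sigma^2/n$, by the column normalization. The tail bound gives $\mathbb{P}(|\frac{1}{n}\Phi_j^\top\epsilon| > t) \le 2\exp(-nt^2/(2\sigma^2))$. At $t=\lambda_n$ this equals $2p^{-c}$, and a union bound over the $p$ columns gives $\mathbb{P}(\mathcal{A}) \ge 1-2p^{1-c}$. Every later step is deterministic on $\mathcal{A}$.

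\textbf{Step 2: the stopping condition.} I need $\rho(t^*) \le 2\lambda_n$. I would first show that $\rho$ is non-increasing and continuous along the flow.

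Continuity holds because $\beta$ is Lipschitz: $\|\dot\beta\|_1 \le 1$. For monotonicity, $\rho = \max_j |g_j|$ is a maximum of finitely many functions, so Danskin's theorem gives, for almost every $t$,
\[
\dot\rho(t) = \max_{j \in J(t)} \operatorname{sign}(g_j)\,\dot g_j ,
\]
where $J(t)$ is the set of maximizing coordinates. Since $\dot g = -\frac{1}{n}\Phi^\top\Phi\,\dot\beta$ and $\dot\beta \in \operatorname{conv}\{\operatorname{sign}(g_i)e_i : i\in J\}$, I would argue that $\dot\rho(t) \le 0$.

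This is the step I expect to be the main obstacle. A cross term $\Phi_j^\top\Phi_i$ could make some active $|g_j|$ increase. The way around it is to use the selection of $\dot\beta$ within the convex hull, which is chosen so that the active coordinates stay tied; this is the LARS/forward stagewise equiangular property. Under that property, $\frac{d}{dt}\frac{1}{2}\|g\|_2^2$-type arguments, or the identity $\frac{d}{dt}\hat L = -\rho\,\|\dot\beta\|_1$, give $\dot\rho = -\frac{1}{n}\|\Phi\dot\beta\|_2^2/\|\dot\beta\|_1 \le 0$.

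If $t^*$ is finite, continuity then gives $\rho(t^*) = 2\lambda_n$. If the infimum is never attained before interpolation, the limiting point has $g=0$ and the same bound holds trivially.

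\textbf{Step 3: the basic inequality.} Write $\Delta = \beta(t^*)-\beta^*$ and $Y - \Phi\beta(t^*) = \epsilon - \Phi\Delta$. Then
\[
\frac{1}{n}\|\Phi\Delta\|_2^2 = \Delta^\top \frac{1}{n}\Phi^\top(\epsilon - (Y-\Phi\beta(t^*))) = \Delta^\top\Bigl(\frac{1}{n}\Phi^\top\epsilon\Bigr) - \Delta^\top g(t^*).
\]
Apply Hölder's inequality to each term: the first is at most $\lambda_n\|\Delta\|_1$ on $\mathcal{A}$, and the second is at most $\rho(t^*)\|\Delta\|_1 \le 2\lambda_n\|\Delta\|_1$. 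Adding these gives Eq.~\eqref{eq:basic_ineq}.

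\textbf{Step 4: the conditional rate.} Under $\|\beta(t^*)\|_1 \le C\|\beta^*\|_1$, the triangle inequality gives $\|\Delta\|_1 \le (C+1)\|\beta^*\|_1$. Substituting into Eq.~\eqref{eq:basic_ineq} and inserting $\lambda_n = \sigma\sqrt{2c\log p/n}$ yields the stated $\mathcal{O}(\|\beta^*\|_1\sigma\sqrt{\log p/n})$ bound. This matches the known minimax rate for $\ell_1$-balls in prediction norm, which I would cite rather than reprove.
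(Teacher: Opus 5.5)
Your proposal is correct and follows the paper's proof essentially step for step. The shared steps are: the conditional sub-Gaussian union bound giving $\|\frac{1}{n}\Phi^\top\epsilon\|_\infty\le\lambda_n$ with probability at least $1-2p^{1-c}$; a Danskin-type computation yielding $\dot\rho(t)=-\frac{1}{n}\|\Phi\dot\beta(t)\|_2^2\le 0$; the identity $\frac{1}{n}\|\Phi\Delta\|_2^2=\Delta^\top(\frac{1}{n}\Phi^\top\epsilon)-\Delta^\top g(t^*)$ followed by H\"older; and the triangle inequality for the conditional rate.

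Two remarks on the cross-term obstacle you flag in Step 2. First, it resolves without any special selection of $\dot\beta$. At almost every $t$ the chain rule $\dot\rho(t)=\langle s,\dot g(t)\rangle$ holds for \emph{every} $s\in\partial\|g(t)\|_\infty$, so any solution of the inclusion automatically keeps the active coordinates tied; your equiangular property is a consequence, not a choice. Second, the basic inequality only uses $\rho(t^*)\le 2\lambda_n$, so monotonicity is not actually needed. Along the flow $\frac{d}{dt}\hat{L}(\beta(t))=-\rho(t)$, hence $\int_0^\infty\rho(t)\,dt\le\hat{L}(\beta(0))<\infty$. This forces $t^*<\infty$, which makes your "infinite $t^*$" case vacuous, and continuity of $\rho$ then gives $\rho(t^*)\le 2\lambda_n$.
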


The norm condition $\|\beta(t^*)\|_1 \le C\|\beta^*\|_1$ bridges the dynamic optimization flow with static regularization geometry and is central to reaching the minimax rate from the basic inequality. In the continuous-time $\ell_2$-boosting dynamics, the trajectory's $\ell_1$-norm grows monotonically, effectively acting as an inverse regularization parameter \citep{rosset2004boosting}. Under appropriate regularity conditions, prior work has established that stopping the boosting flow before fitting the noise floor implicitly regularizes the estimator, yielding an $\ell_1$-norm commensurate with the true signal's norm \citep{zhang2005boosting, bartlett2006adaboost}. Nevertheless, deriving this bounded $\ell_1$-norm condition from first principles under the sole assumptions of the theorem remains an open problem. The present result isolates the core mechanism, the basic inequality, and makes transparent the precise structural condition under which the early-stopped $\ell_2$-boosting flow provably attains the minimax rate.

\begin{proof}[Proof Difficulties, Sketch, and Techniques]
    The transition from late-stopping asymptotics to minimax prediction rates requires a finite-time characterization of the continuous $\ell_2$-Boosting flow. An analytical obstacle is the non-smoothness of the active selection criterion, $\rho(t) = \|g(t)\|_\infty$. Because the trajectory is governed by a differential inclusion rather than a standard gradient flow, classical calculus does not directly apply to the temporal evolution of the stopping rule.
    
    To address this non-smoothness, we analyze the subdifferential dynamics by invoking Danskin's Envelope Theorem for the $\ell_\infty$-norm. This allows us to differentiate the maximum absolute correlation almost everywhere, expressing the time derivative of $\rho(t)$ as the inner product of a subgradient $s(t)\in\partial\|g(t)\|_\infty$ and the residual velocity $\dot{g}(t)$. By the structure of the boosting flow we may take $s(t)=\dot{\beta}(t)$, leading to
    \[
    \frac{d}{dt}\rho(t) = -\frac{1}{n}\|\Phi\dot{\beta}(t)\|_2^2 \le 0.
    \]
    This proves that $\rho(t)$ is non-increasing and continuous. Together with the initial condition $\rho(0) > 2\lambda_n$ (which typically holds in the high-dimensional regime), it guarantees that the path intercepts the noise threshold $2\lambda_n$ at a unique time $t^*$, making the analytical stopping rule well-defined.
    
    With the trajectory halted at $t^*$, we bound the empirical prediction error. The stopped gradient satisfies $\|g(t^*)\|_\infty = 2\lambda_n$ by construction. A probabilistic difficulty arises from the random design matrix $\Phi$: the noise correlation $\frac{1}{n}\Phi^\top\epsilon$ is not unconditionally sub-Gaussian. By conditioning on the column-normalized $\Phi$ and marginalizing via the law of total probability, we establish the bound $\|\frac{1}{n}\Phi^\top\epsilon\|_\infty \le \lambda_n$. Applying H\"older's inequality to the inner-product expansion of the empirical risk under these limits yields:
    \[
    \frac{1}{n}\|\Phi(\beta(t^*) - \beta^*)\|_2^2 \le 3\lambda_n \|\beta(t^*) - \beta^*\|_1.
    \]
    This recovers the form of the Lasso basic inequality, derived here from the continuous dynamics rather than static KKT conditions. Under the premise that the stopped estimator satisfies $\|\beta(t^*)\|_1 \le C\|\beta^*\|_1$, the bound yields the minimax-optimal rate $\mathcal{O}(\sigma \|\beta^*\|_1\sqrt{\log p/n})$. Please refer to \textcolor{red}{Appendix \ref{app:proof_thm_early_stopping}} for the detailed proof.
\end{proof}

\begin{remark}[From Oracle to Adaptive Stopping]
\label{rem:oracle_to_adaptive}
    The stopping rule $t^*$ defined in Eq. \eqref{eq:stopping_time} depends on the oracle noise variance $\sigma^2$. In practice, $\sigma^2$ can be replaced by a consistent plug-in estimator $\hat{\sigma}^2$ using methods such as the Scaled Lasso \citep{sun2012scaled} or Refitted Cross-Validation (RCV) \citep{fan2012variance}. Let $\hat{\sigma} = \sqrt{\hat{\sigma}^2}$. Provided that the estimator is consistent such that $|\hat{\sigma} - \sigma| = o_P(1)$, the adaptive threshold $\hat{\lambda}_n = \hat{\sigma}\sqrt{(2c\log p)/n}$ preserves the conditional $\mathcal{O}(R\sigma\sqrt{\log p / n})$ prediction rate, where $R \ge \|\beta^*\|_1$. This analytical framework can substantially reduce the reliance on iterative hyperparameter tuning via grid-search, even when the noise level requires estimation.
\end{remark}

\begin{remark}[Comparison with Existing Literature]
    The connection between early-stopped boosting and $\ell_1$ regularization has been discussed extensively in terms of statistical risk bounds \citep{zhang2005boosting, bickel2009simultaneous, raskutti2011minimax}. Our analysis distinguishes itself by focusing on the continuous-time dynamic path of the subdifferential flow. By identifying the stopping condition $\rho(t^*) = 2\lambda_n$, we derive the Lasso basic inequality directly from the trajectory's properties rather than static optimization conditions. This dynamic approach explicitly bypasses the variance inflation characteristic of the interpolation limit.
\end{remark}

\section{Empirical Illustrations}
\label{sec:experiments}

In this section, we provide numerical illustrations of the theoretical phenomena established in the preceding sections. To isolate the implicit bias from step-size and finite-time artifacts, the $\ell_1$ interpolants are computed via Linear Programming (LP) using interior-point solvers. Due to the computational complexity of scaling exact LP solvers over numerous trials, we operate at moderate sample sizes (up to $n=800$). Our goal is to empirically evaluate the finite-sample scaling rates and mechanisms, rather than to extract exact infinite-dimensional constants. Additionally, we provided an exploratory experiment extending our findings to adaptive tree generation via XGBoost in \textcolor{red}{Appendix \ref{app:xgb_ver}}.

\textbf{Implementation Details.} All experiments were conducted locally using Jupyter Notebook in an Anaconda environment on Windows 10 (Version 10.0.19045). The hardware infrastructure utilized an Intel processor (Intel64 Family 6 Model 151 Stepping 2, GenuineIntel) equipped with 20 logical cores. The software stack was built on Python 3.12.4, utilizing the following key libraries: pandas 2.2.2, numpy 1.26.4, scikit-learn 1.7.2. 

\subsection{Experimental Setup}
\label{subsec:exp_setup}

In this subsection, we detailed the configurations for our numerical simulations. To systematically validate our theoretical framework, we designed a series of experiments divided into two main categories: evaluating the risk asymptotics of $\ell_1$ interpolants under isotropic and spiked-isotropic designs to verify the logarithmic variance decay (\textcolor{red}{Setups 1--3}) and analyzing the continuous-time dynamics of the $\ell_2$-Boosting flow to confirm the optimality of the analytical early stopping rule, both with oracle knowledge and adaptive variance estimation (\textcolor{red}{Setups 4--5}).

\paragraph{Setup 1. Finite-Size Scaling and Mechanism Verification under Isotropic Design (\textcolor{red}{Theorem \ref{thm:variance_lower_bound}})} 
\label{para:setup1}
We simulated pure noise interpolation ($\beta^* = \mathbf{0}, \sigma^2 = 1.0$) to examine the excess variance under an isotropic Gaussian design $\Phi \sim \mathcal{N}(0, I_p/n)$. Minimum $\ell_2$-norm interpolants were included as a comparative baseline. To address finite-sample artifacts and comprehensively evaluate the theoretical claims, the experiment is structured into three parts, with all reported metrics averaged over 20 independent trials to provide robust standard deviations (error bars):
\begin{itemize}
    \item \textit{Finite-Size Scaling:} We evaluated multiple sample sizes $n \in \{100, 200, 400, 800\}$ across the overparameterization ratio $\gamma = p/n \in [2, 30]$ to verify whether the empirical risks of both $\ell_1$ and $\ell_2$ interpolants converge to stable asymptotic trajectories.
    \item \textit{Rate Verification:} To quantitatively test the derived asymptotic rate, we performed ordinary least squares (OLS) regression mapping the empirical excess variance against the theoretical scaling component $1/\log(p/n)$. The $\ell_2$ baseline was plotted on the same axis to contrast the decay rates.
    \item \textit{Mechanism Probing:} To move beyond sparsity count constraints and directly observe noise localization, we tracked the maximum absolute feature weight $\|\hat{\beta}\|_\infty$ for both $\ell_1$ and $\ell_2$ interpolants across varying $\gamma$. All $\ell_1$ interpolants were computed using the HiGHS dual-simplex solver to ensure numerical precision.
\end{itemize}

\paragraph{Setup 2. Universality across Feature Distributions}
\label{para:setup2}
To investigate whether the logarithmic risk decay extends beyond Gaussian designs, we solved the Basis Pursuit problem for three normalized feature ensembles: 
\begin{itemize}
    \item Standard Gaussian.
    \item Rademacher ($\pm 1$ with probability $0.5$).
    \item Student-$t$ with $4$ degrees of freedom (a heavy-tailed distribution)
\end{itemize}
We scaled the sample size to $n=800$ to minimize finite-sample variations and evaluated the overparameterization ratio $\gamma = p/n \in [2, 30]$. The features were normalized to satisfy the isotropic covariance condition, and labels were generated as pure observation noise ($\sigma^2=1.0$). We subsequently performed OLS regression to test the consistency of the decay rate across these distinct distributions.

\paragraph{Setup 3. Spiked-Isotropic Designs (\textcolor{red}{Theorem \ref{thm:risk_spiked_final}})}
\label{para:setup3}
To evaluate the variance attenuation under the spiked-isotropic structure, we simulate pure noise interpolation ($\beta^* = \mathbf{0}$) with $k^* = 5$ and $\lambda_{\text{head}}=100$. Minimum $\ell_2$-norm interpolants were included as a comparative baseline. To verify the asymptotic scaling across finite system sizes, we evaluate $n \in \{100, 200, 400, 800\}$ while varying the isotropic tail ratio $\gamma = (p-k^*)/n \in [0.8, 30]$. All tail eigenvalues are set to $\lambda_{\text{tail}} = 1$ to align with the theoretical assumptions in \textcolor{red}{Theorem \ref{thm:risk_spiked_final}}. All reported metrics are averaged over 20 independent trials to provide robust standard deviations (error bars).

\paragraph{Setup 4. Continuous Path and Early Stopping (\textcolor{red}{Theorem \ref{thm:early_stopping}})}
\label{para:setup4}
To evaluate the dynamic path of the continuous $\ell_2$-Boosting flow, we simulated a high-dimensional sparse regression task with $n=800$ and $p=1600$. The true signal $\beta^*$ contained $s=5$ non-zero components of magnitude $3.0$, and the design matrix was column-normalized under isotropic Gaussian settings. We simulated the infinitesimal forward stagewise flow and tracked both the maximum absolute empirical correlation $\rho(t) = \|g(t)\|_\infty$ and the out-of-sample excess test risk across the continuous boosting iterations. To benchmark the analytical stopping rule, we included the optimum risk of $\ell_1$-penalized regression (LassoCV) and the stopping time determined by 5-fold cross-validation ($t^*_{CV}$) as empirical baselines.

\paragraph{Setup 5. Adaptive Early Stopping}
\label{para:setup5}
Building on Setup 4 ($n=800, p=1600$), we evaluated the adaptive stopping mechanism (\textcolor{red}{Remark \ref{rem:oracle_to_adaptive}}) by replacing the oracle variance $\sigma^2$ with a data-driven estimate $\hat{\sigma}^2$. The variance was estimated via the RCV estimator \citep{fan2012variance} utilizing sample splitting on the training data. This estimate defined the adaptive threshold $2\hat{\lambda}_n$ used to determine the algorithmic halt time $\hat{t}^*$. To benchmark the performance, the LassoCV optimum risk and the stopping time determined by 5-fold cross-validation ($t^*_{CV}$) were included as baselines.

\subsection{Results and Analysis}
\label{subsec:exp_results}

\begin{figure}[htbp]
    \centering
    \includegraphics[width=1.0\textwidth]{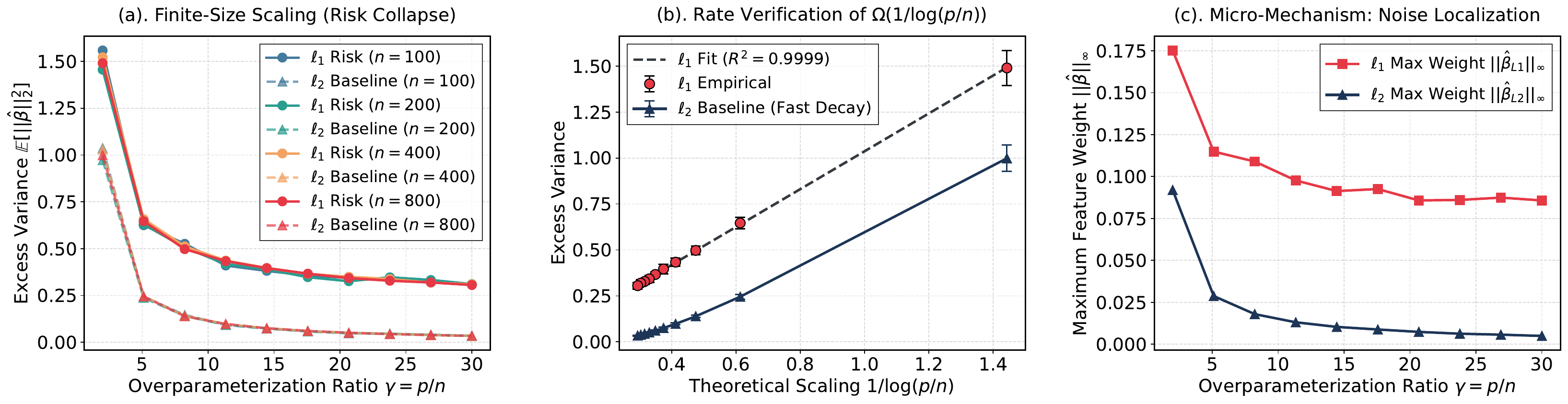}
    \caption{Empirical verification under isotropic Gaussian design. \textbf{(a)} Finite-size scaling shows risk trajectories collapsing over $\gamma = p/n$. The $\ell_2$ baseline risk decays rapidly, while the $\ell_1$ risk remains elevated. \textbf{(b)} OLS regression of the $\ell_1$ excess variance against $1/\log(p/n)$ yields $R^2 = 0.9997$. The $\ell_2$ trajectory deviates from this scaling, reflecting its faster decay rate. \textbf{(c)} Tracking the maximum feature weight reveals that $\ell_1$ interpolation localizes noise energy ($\|\hat{\beta}\|_\infty \not\to 0$), unlike the diffusion observed in $\ell_2$ interpolation.}
    \label{fig:thm1}
    \vspace{-0.8cm}
\end{figure}

\subsubsection{Logarithmic Variance Decay and Mechanism Verification}
\textcolor{red}{Figure \ref{fig:thm1}} presents the empirical evaluation for Setup 1. In \textcolor{red}{Figure \ref{fig:thm1} (a)}, the risk trajectories for different system sizes $n \in \{100, \dots, 800\}$ overlapped closely when plotted against $\gamma = p/n$, suggesting that the observed risk behavior follows high-dimensional asymptotics rather than finite-sample fluctuations. For the $\ell_2$ baseline, the excess variance decayed rapidly toward zero. In contrast, the $\ell_1$ risk decayed at a visibly slower rate, maintaining a substantially elevated profile. To quantitatively evaluate this decay rate, we regressed the empirical excess variance for $n=800$ against the scaling coordinate $1/\log(p/n)$ (\textcolor{red}{Figure \ref{fig:thm1} (b)}). The OLS fit for the $\ell_1$ interpolant yielded an $R^2$ of $0.9997$, confirming the $\Omega(1/\log(p/n))$ scaling established in \textcolor{red}{Theorem \ref{thm:variance_lower_bound}}, with tight error bars verifying high statistical stability. The $\ell_2$ baseline exhibited a distinct non-linear curve on this axis, confirming its decay is strictly faster than the $\ell_1$ logarithmic rate. Finally, to examine the noise localization mechanism, we tracked the maximum weight $\|\hat{\beta}\|_\infty$ across varying $\gamma$ (\textcolor{red}{Figure \ref{fig:thm1} (c)}). While $\|\hat{\beta}\|_\infty$ for the $\ell_2$ interpolant decayed toward zero (consistent with uniform noise diffusion), the maximum weight of the $\ell_1$ interpolant remained non-vanishing ($\approx 0.1$). This difference supports the hypothesis that greedy coordinate selection localizes noise into a sparse active set, which severely bottlenecks variance dissipation and restricts benign overfitting to a slow logarithmic rate.

\begin{figure}[htbp]
    \centering
    \includegraphics[width=0.95\textwidth]{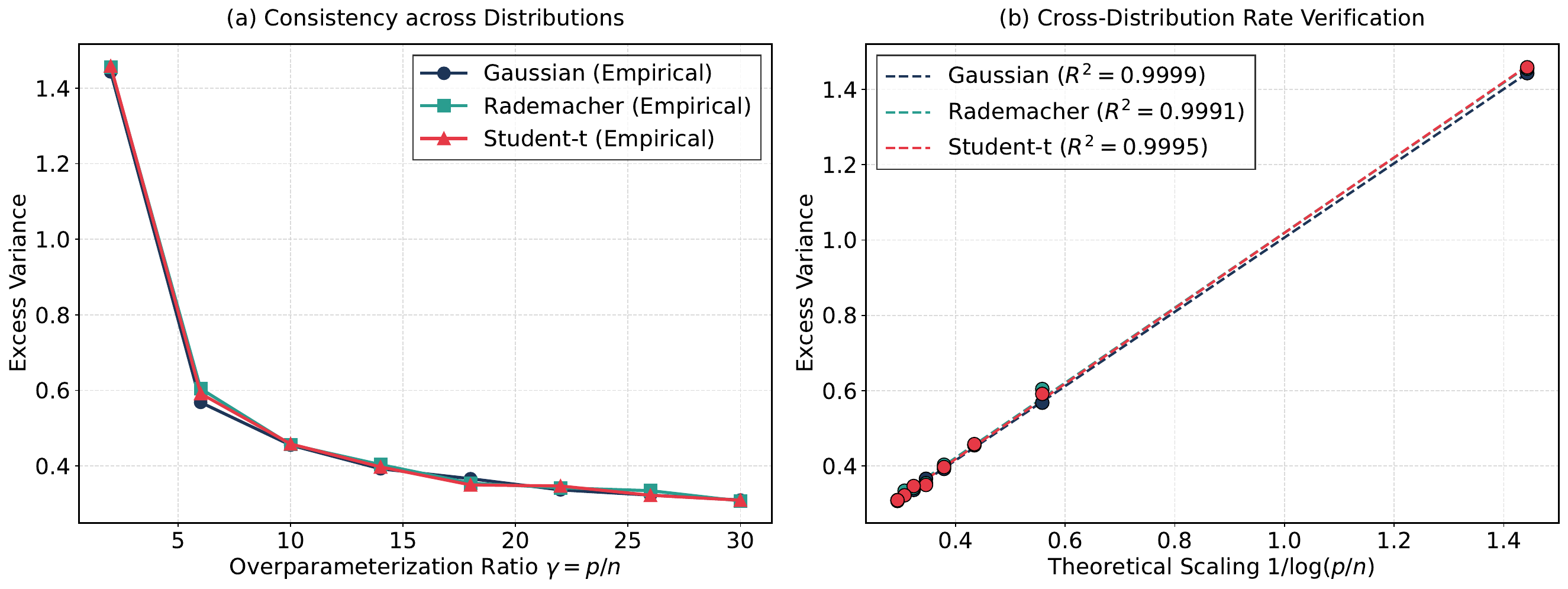}
    \caption{Empirical evaluation of $\ell_1$ risk decay across distinct feature distributions ($n=800$). \textbf{(a)} The excess variance trajectories overlap closely, showing insensitivity to feature tail properties. \textbf{(b)} OLS regressions against $1/\log(p/n)$ yield $R^2 > 0.999$ for all ensembles, verifying that the logarithmic scaling law is a universal property of the $\ell_1$ geometry.}
    \label{fig:universality}
    \vspace{-0.8cm}
\end{figure}

\begin{figure}[htbp]
    \centering
    \includegraphics[width=0.95\textwidth]{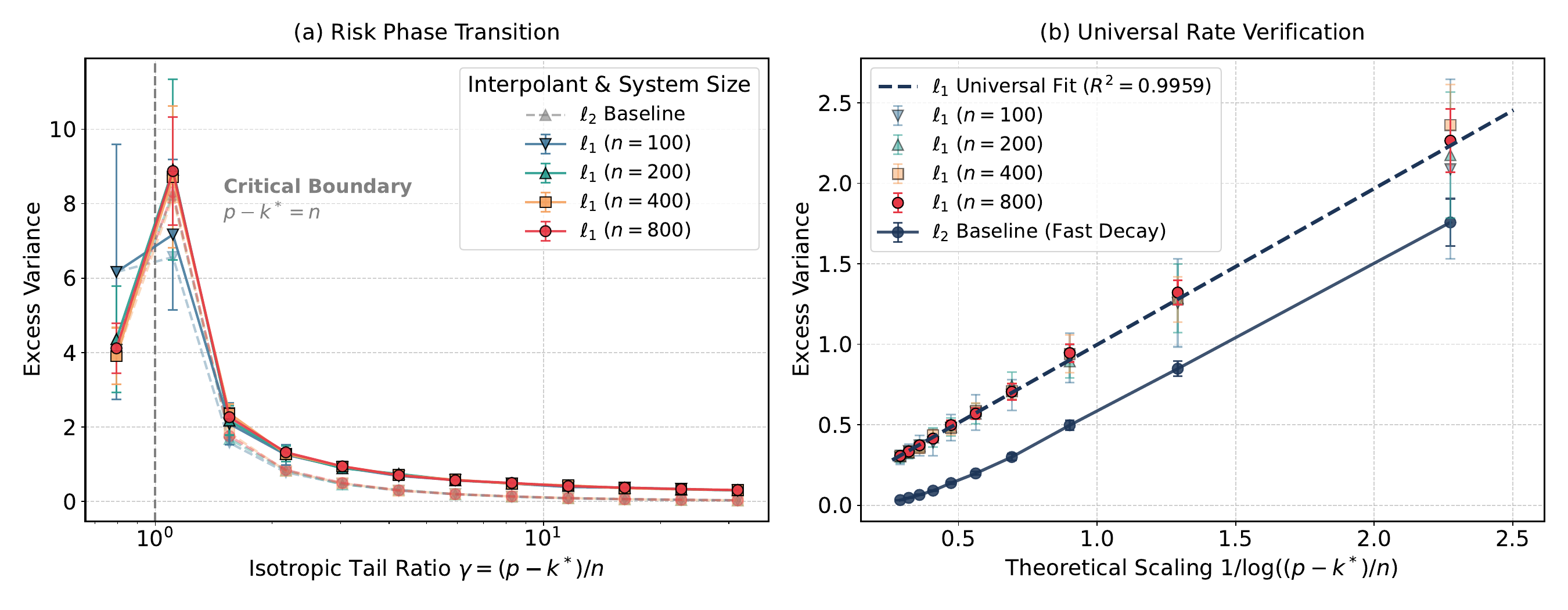} 
    \caption{Empirical excess variance under spiked-isotropic designs across multiple system sizes $n$. \textbf{(a)} Risk phase transition: variance remains elevated for $\gamma < 1$ and transitions into a decaying regime as the isotropic tail ratio $\gamma = (p-k^*)/n$ exceeds the critical boundary. The $\ell_2$ baseline exhibits rapid variance dissipation compared to $\ell_1$. \textbf{(b)} Universal rate verification: data for different $n$ collapse onto the same theoretical trajectory. A global OLS fit in the benign regime ($\gamma > 1.5$) yields $R^2 = 0.9933$, confirming the predicted $\Theta(1/\log\gamma)$ scaling for $\ell_1$ interpolants, whereas the $\ell_2$ baseline deviates with a strictly faster decay.}
    \label{fig:thm2}
    \vspace{-0.8cm}
\end{figure}

\subsubsection{Universality across Feature Distributions}
\textcolor{red}{Figure \ref{fig:universality}} illustrates the risk profiles for Setup 2. In \textcolor{red}{Figure \ref{fig:universality} 
(a)}, the excess variance trajectories for the Gaussian, Rademacher, and Student-$t$ designs overlapped closely across all evaluated overparameterization ratios. This overlap suggested that the interpolation behavior converged uniformly, unaffected by the differing tail properties of the feature distributions. To quantitatively compare the decay rates, we regressed the empirical risk against the theoretical scaling $1/\log(p/n)$ for each ensemble (\textcolor{red}{Figure \ref{fig:universality} (b)}). The regressions yielded $R^2 > 0.999$ across all three designs. This consistent linear scaling supports the conclusion that the $\mathcal{O}(1/\log(p/n))$ variance decay is a structural consequence of the $\ell_1$ optimization geometry, independent of specific sub-Gaussian or heavy-tailed feature properties.

\subsubsection{Risk Asymptotics under Spiked-Isotropic Designs}
\textcolor{red}{Figure \ref{fig:thm2}} presents the excess variance as a function of the tail ratio $\gamma$. In \textcolor{red}{Figure \ref{fig:thm2} (a)}, the risk trajectories for different system sizes $n$ overlapped closely when plotted against $\gamma$, indicating the validity of the proportional asymptotic regime. A sharp phase transition is observed at the critical boundary $\gamma = 1$ ($p-k^* = n$). For $\gamma > 1$, the isotropic tail effectively absorbs the noise interpolation capacity, enabling variance decay. In this benign regime, the $\ell_2$ baseline exhibits strictly faster variance attenuation than $\ell_1$. To verify the analytical rate, \textcolor{red}{Figure \ref{fig:thm2} (b)} plots the variance against the theoretical scaling $1/\log((p-k^*)/n)$ for the overparameterized region. The global OLS regression across all evaluated $n$ yielded an $R^2$ of $0.9959$. Supported by tight error bars, this strong linear correlation confirms that the variance attenuation is rigorously governed by the predicted logarithmic law. On this same coordinate system, the $\ell_2$ baseline deviates downward into a non-linear curve, explicitly demonstrating a distinct, faster decay profile. The empirical alignment of different $n$ onto a single $\ell_1$ fit line validates that the $\Theta(1/\log(r_2/n))$ rate is a universal property of the $\ell_1$ geometry under spiked-isotropic covariances, contrasting structurally with the $\Theta(n/r_2)$ linear decay observed in $\ell_2$ geometries.

\begin{figure}[htbp]
    \centering
    \includegraphics[width=0.85\textwidth]{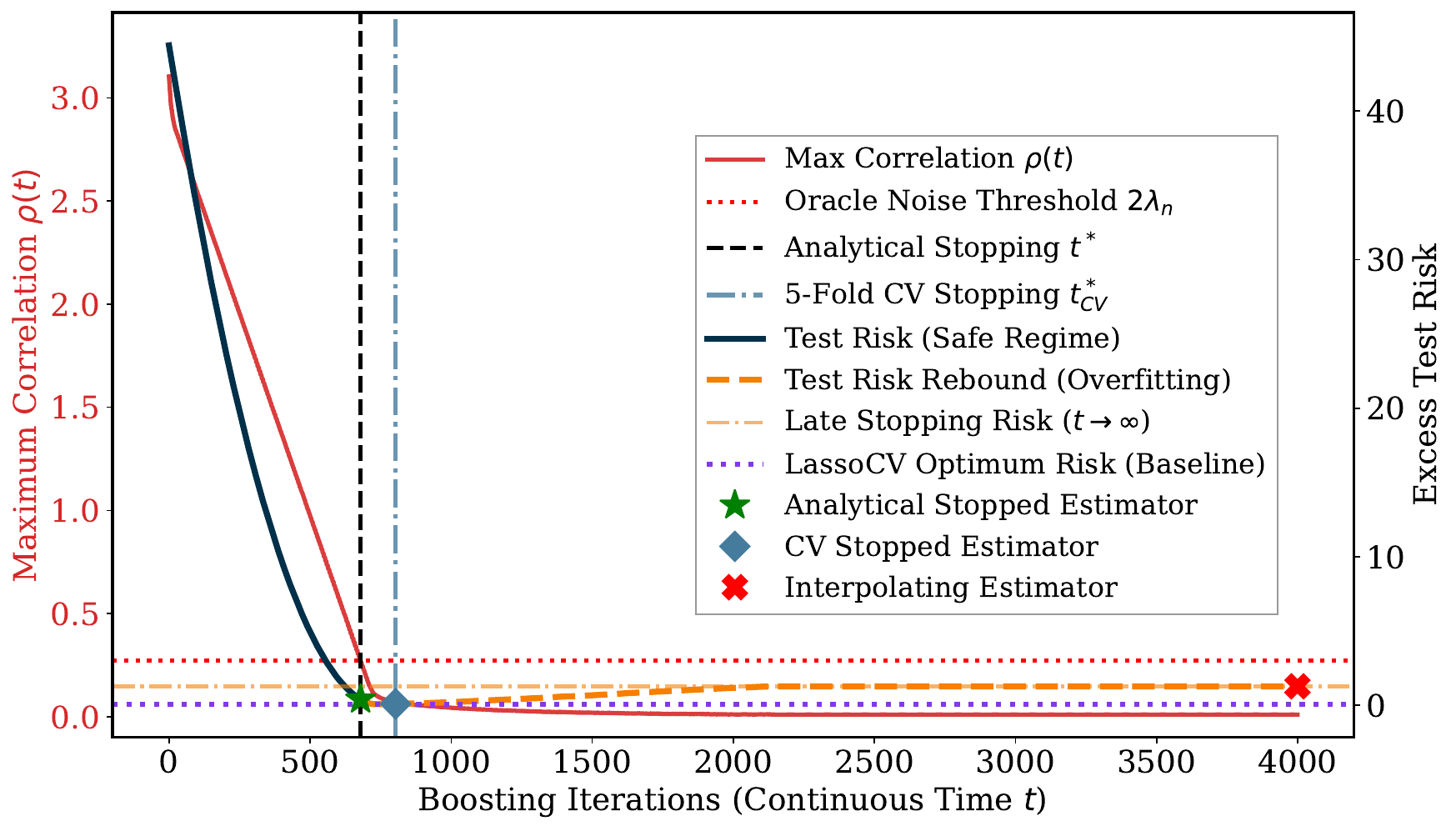}
    \caption{Dynamic path analysis of the continuous $\ell_2$-Boosting flow ($n=800, p=1600$). The analytical stopping time $t^*$ is triggered when the maximum correlation $\rho(t)$ intercepts the noise threshold $2\lambda_n$. This tuning-free stopping rule mathematically separates the safe signal-recovery regime from the subsequent noise-interpolating rebound. The analytically stopped estimator achieves the optimum risk of the LassoCV baseline and matches the performance of the 5-fold CV stopping method ($t^*_{CV}$) without the associated computational overhead.}
    \label{fig:thm3}
    \vspace{-0.8cm}
\end{figure}

\subsubsection{Minimax Rates via Early Stopping}
\textcolor{red}{Figure \ref{fig:thm3}} illustrates the temporal evolution of $\rho(t)$ and the test risk for \textcolor{red}{Setup 4}. Consistent with the subdifferential analysis, the maximum empirical correlation $\rho(t)$ decayed monotonically. The analytical stopping time $t^*$, defined by the exact intersection of $\rho(t)$ and the theoretical noise threshold $2\lambda_n$, established a deterministic termination point. Therefore, the test risk trajectory exhibited a characteristic U-shaped curve. During the initial phase, the flow recovered the active signal components, sharply reducing the risk. However, as the iterations progressed beyond the empirical minimum, the algorithm began to interpolate localized label noise, causing the test risk to inflate toward the late-stopping limit. Halting the algorithmic flow at $t^*$ successfully intercepted this overfitting rebound. The risk of the early-stopped estimator coincided with the global optimum of the LassoCV baseline, empirically supporting the implicit Lasso equivalence and the minimax prediction rates derived in \textcolor{red}{Theorem \ref{thm:early_stopping}}. Furthermore, this tuning-free analytical rule attained the identical minimal risk as the 5-fold cross-validation approach ($t^*_{CV}$) while bypassing the computational cost of iterative model refitting.

\begin{figure}[htbp]
    \centering
    \includegraphics[width=0.9\linewidth]{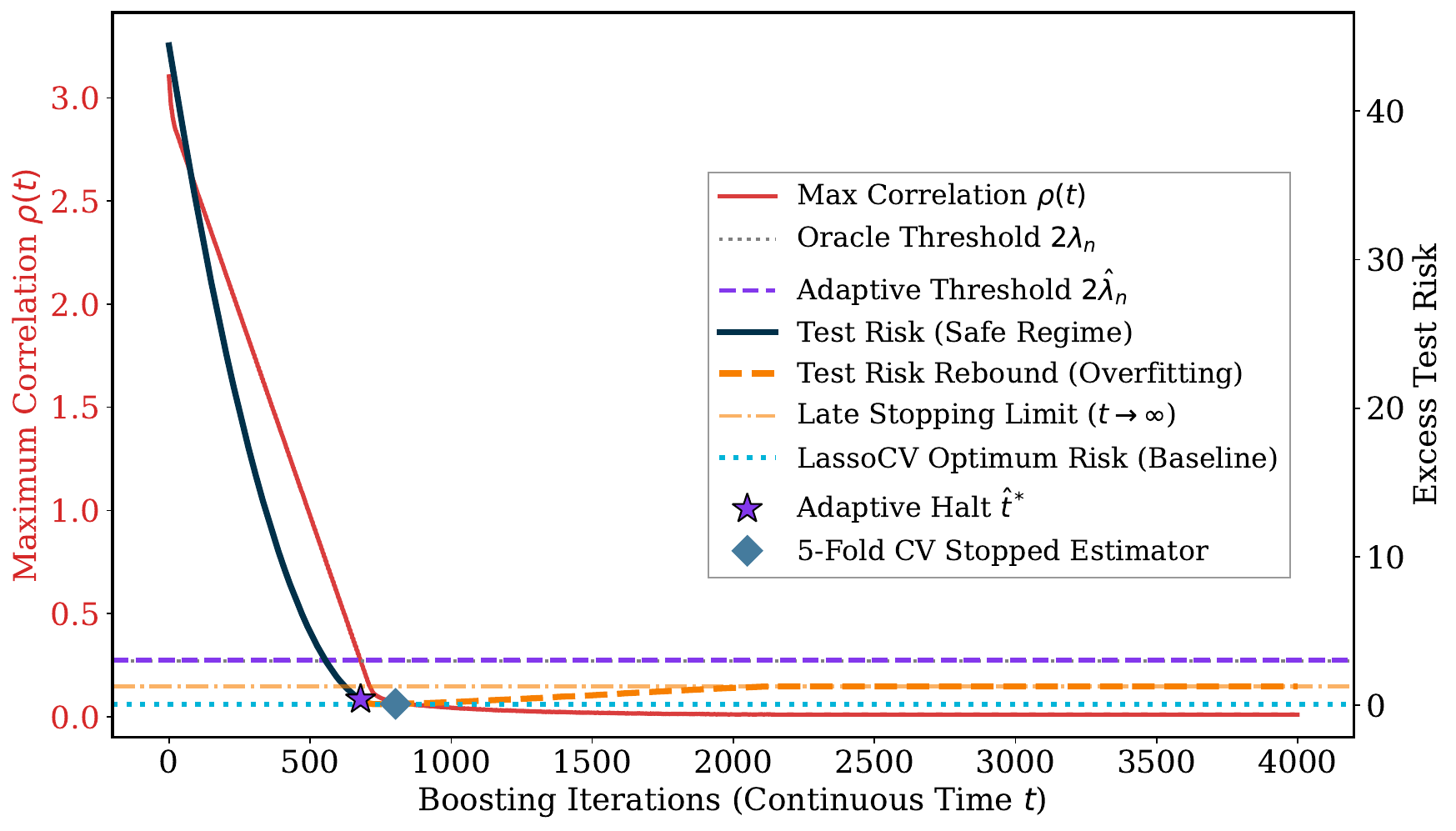}
    \caption{Adaptive early stopping of the $\ell_2$-Boosting flow ($n=800, p=1600$). The data-driven threshold $2\hat{\lambda}_n$, computed via RCV, aligns with the oracle threshold. The resulting algorithmic halt $\hat{t}^*$ intercepts the overfitting rebound. The estimator at $\hat{t}^*$ matches the optimum risk of the LassoCV baseline and exhibits predictive performance equivalent to the 5-fold CV stopped estimator without requiring repetitive model fitting.}
    \label{fig:adaptive_stopping}
    \vspace{-0.8cm}
\end{figure}

\subsubsection{Validation of the Adaptive Early Stopping Rule}
\textcolor{red}{Figure \ref{fig:adaptive_stopping}} displays the trajectory evaluation for \textcolor{red}{Setup 5}. The adaptive threshold $2\hat{\lambda}_n$ computed via RCV closely tracked the oracle threshold $2\lambda_n$, as evidenced by the overlapping horizontal lines. Consequently, the data-driven stopping time $\hat{t}^*$ intercepted the optimization flow precisely as the trajectory transitioned from the signal-recovery phase into the overfitting rebound. The test risk achieved at $\hat{t}^*$ aligned with the global minimum risk delineated by the LassoCV baseline. This confirmed that the analytical stopping criterion derived from the subdifferential flow can be fully operationalized using a consistent plug-in variance estimator. Notably, the adaptive rule attained near-optimal risk and performed equivalently to the 5-fold CV protocol, while bypassing the computational cost of iterative grid-search cross-validation.

\section{Conclusion and Discussion}
\label{sec:conclusion}

\subsection{Conclusion}
This paper characterized the generalization properties of boosting in the interpolating regime via the continuous-time $\ell_2$-Boosting flow. We established that under isotropic designs, the $\ell_1$ implicit bias prevents benign overfitting, resulting in a logarithmic variance decay of $\Theta(\sigma^2/\log(p/n))$. While spiked-isotropic heavy-tailed covariances ($ p - k^* \gg n$) enable asymptotic risk to vanish, the excess variance decays at a rate of $\Theta(\sigma^2/\log((p - k^*)/n))$. This highlights a geometric property of $\ell_1$ sparsity: greedy coordinate selection resists the uniform noise delocalization characteristic of $\ell_2$ geometries. Motivated by this slow convergence in the late-stopping regime, we derived an analytical early stopping rule based on the noise floor. Halting the flow at this threshold avoids noise localization and attains minimax prediction rates.

\subsection{Discussion: From Fixed Dictionaries to Adaptive Trees}
While our risk asymptotics apply to $\ell_2$-Boosting over a \textit{fixed dictionary} ($\Phi \in \mathbb{R}^{n \times p}$), practical boosting algorithms like XGBoost rely on \textit{adaptive feature generation}. In these algorithms, base learners (decision trees) are dynamically constructed, making the feature space data-dependent. This raises a natural question regarding the empirical alignment observed in \textcolor{red}{Section \ref{app:xgb_ver}}: \textbf{\textit{why does the adaptive generation of trees appear to inherit the logarithmic variance decay associated with the $\ell_1$ geometry?}}

\textbf{Mechanism of Noise Localization.} We posit that the adaptive nature of tree generation preserves the noise localization mechanism. In our continuous flow over a fixed dictionary, coordinate selection is governed by $\dot{\beta}(t) \in \partial \|g(t)\|_\infty$, maximizing correlation with the residual. In adaptive boosting, the tree-growing algorithm recursively searches for data splits to maximize empirical loss reduction. Conceptually, this process acts as a dual $\ell_\infty$-norm maximization over a large, dynamically generated dictionary of tree partitions. Rather than selecting from predefined features, adaptive trees construct localized indicator functions that isolate noise. Consequently, the tendency to concentrate interpolation weights into a sparse active set is intrinsic to the greedy splitting heuristic.

\textbf{Empirical Consistency of the Decay Rate.} This shared geometric bias offers an explanation for the empirical consistency between our fixed-dictionary asymptotics and the XGBoost interpolation experiment. Although the dictionary of possible trees is combinatorially large, the greedy algorithmic bias favors sparse selection over uniform energy spread. Unlike $\ell_2$ minimum-norm interpolants where noise diffuses across numerous tail dimensions, the adaptive $\ell_1$ flow interpolates noise via a sparse sequence of shallow trees. Because noise energy remains confined to this sparse support, the variance cannot dissipate at a linear rate, resulting in the slow attenuation observed empirically, which parallels the $\mathcal{O}(1/\log(\cdot))$ bound established in our analysis.

\textbf{Future Theoretical Directions.} Extending risk asymptotics from fixed random designs to adaptive, data-dependent trees is a challenging open problem in learning theory. Such an extension requires shifting from Random Matrix Theory to greedy approximation theory in infinite-dimensional functional spaces. Future work might model the boosting trajectory within variation spaces \citep{barron2002universal}, bounding the sequential Rademacher complexity of the dynamically generated splits. Nevertheless, our fixed-dictionary analysis establishes a mathematical baseline, demonstrating that the variance penalty of $\ell_1$ implicit bias is a structural consequence of greedy $\ell_\infty$ selection. The empirical evidence suggests that this mechanism persists even as the dictionary becomes adaptive.

\section*{Acknowledgements}
This research was supported by Beijing Natural Science Foundation (Z250001). The authors declare that they have no known competing financial interests or personal relationships that could have appeared to influence the work reported in this paper.

\bibliography{sample}

@article{rosset2004boosting,
  title={Boosting as a regularized path to a maximum margin classifier},
  author={Rosset, Saharon and Zhu, Ji and Hastie, Trevor},
  journal={Journal of Machine Learning Research},
  volume={5},
  number={Aug},
  pages={941--973},
  year={2004}
}

@article{efron2004least,
  title={Least angle regression},
  author={Efron, Bradley and Hastie, Trevor and Johnstone, Iain and Tibshirani, Robert},
  year={2004}
}

@inproceedings{gunasekar2018characterizing,
  title={Characterizing implicit bias in terms of optimization geometry},
  author={Gunasekar, Suriya and Lee, Jason and Soudry, Daniel and Srebro, Nathan},
  booktitle={International Conference on Machine Learning},
  pages={1832--1841},
  year={2018},
  organization={PMLR}
}

@book{rockafellar1997convex,
  title={Convex analysis},
  author={Rockafellar, R Tyrrell},
  volume={28},
  year={1997},
  publisher={Princeton university press}
}

@inproceedings{gordon2006milman,
  title={On Milman's inequality and random subspaces which escape through a mesh in {$\mathbb{R}^n$}},
  author={Gordon, Yehoram},
  booktitle={Geometric Aspects of Functional Analysis: Israel Seminar (GAFA) 1986--87},
  pages={84--106},
  year={2006},
  organization={Springer}
}

@inproceedings{thrampoulidis2015regularized,
  title={Regularized linear regression: A precise analysis of the estimation error},
  author={Thrampoulidis, Christos and Oymak, Samet and Hassibi, Babak},
  booktitle={Conference on Learning Theory},
  pages={1683--1709},
  year={2015},
  organization={PMLR}
}

@article{candes2005decoding,
  title={Decoding by linear programming},
  author={Candes, Emmanuel J and Tao, Terence},
  journal={IEEE transactions on information theory},
  volume={51},
  number={12},
  pages={4203--4215},
  year={2005},
  publisher={IEEE}
}

@article{thrampoulidis2014gaussian,
  title={The Gaussian min-max theorem in the presence of convexity},
  author={Thrampoulidis, Christos and Oymak, Samet and Hassibi, Babak},
  journal={arXiv preprint arXiv:1408.4837},
  year={2014}
}

@book{danskin2012theory,
  title={The theory of max-min and its application to weapons allocation problems},
  author={Danskin, John M},
  year={2012},
  publisher={Springer Science \& Business Media}
}

@article{stein1981estimation,
  title={Estimation of the mean of a multivariate normal distribution},
  author={Stein, Charles M},
  journal={The annals of Statistics},
  pages={1135--1151},
  year={1981},
  publisher={JSTOR}
}

@article{bartlett2020benign,
  title={Benign overfitting in linear regression},
  author={Bartlett, Peter L and Long, Philip M and Lugosi, G{\'a}bor and Tsigler, Alexander},
  journal={Proceedings of the National Academy of Sciences},
  volume={117},
  number={48},
  pages={30063--30070},
  year={2020},
  publisher={National Academy of Sciences}
}

@article{chatterji2021finite,
  title={Finite-sample analysis of interpolating linear classifiers in the overparameterized regime},
  author={Chatterji, Niladri S and Long, Philip M},
  journal={Journal of Machine Learning Research},
  volume={22},
  number={129},
  pages={1--30},
  year={2021}
}

@article{sion1958general,
  title={On general minimax theorems.},
  author={Sion, Maurice},
  year={1958}
}

@book{kolmogorov2018foundations,
  title={Foundations of the theory of probability: Second English Edition},
  author={Kolmogorov, Andrey Nikolaevich},
  year={2018},
  publisher={Courier Dover Publications}
}

@article{bickel2009simultaneous,
  title={Simultaneous analysis of Lasso and Dantzig selector},
  author={Bickel, Peter J and Ritov, Ya'acov and Tsybakov, Alexandre B},
  journal={The Annals of Statistics},
  volume={37},
  number={4},
  pages={1705--1732},
  year={2009},
  publisher={Institute of Mathematical Statistics}
}

@article{raskutti2011minimax,
  title={Minimax rates of estimation for high-dimensional linear regression over $\ell_q$-balls},
  author={Raskutti, Garvesh and Wainwright, Martin J and Yu, Bin},
  journal={IEEE transactions on information theory},
  volume={57},
  number={10},
  pages={6976--6994},
  year={2011},
  publisher={IEEE}
}

@article{zhang2005boosting,
  title={Boosting with early stopping: Convergence and consistency},
  author={Zhang, Tong and Yu, Bin},
  journal={The Annals of Statistics},
  volume={33},
  number={4},
  pages={1538--1579},
  year={2005},
  publisher={Institute of Mathematical Statistics}
}

@article{bartlett2006adaboost,
  title={Adaboost is consistent},
  author={Bartlett, Peter and Traskin, Mikhail},
  journal={Advances in Neural Information Processing Systems},
  volume={19},
  year={2006}
}

@book{ash2000probability,
  title={Probability and measure theory},
  author={Ash, Robert B and Dol{\'e}ans-Dade, Catherine A},
  year={2000},
  publisher={Academic press}
}

@article{belkin2019reconciling,
  title={Reconciling modern machine-learning practice and the classical bias--variance trade-off},
  author={Belkin, Mikhail and Hsu, Daniel and Ma, Siyuan and Mandal, Soumik},
  journal={Proceedings of the National Academy of Sciences},
  volume={116},
  number={32},
  pages={15849--15854},
  year={2019},
  publisher={National Academy of Sciences}
}

@article{hastie2022surprises,
  title={Surprises in high-dimensional ridgeless least squares interpolation},
  author={Hastie, Trevor and Montanari, Andrea and Rosset, Saharon and Tibshirani, Ryan J},
  journal={Annals of statistics},
  volume={50},
  number={2},
  pages={949},
  year={2022}
}

@article{mei2022generalization,
  title={The generalization error of random features regression: Precise asymptotics and the double descent curve},
  author={Mei, Song and Montanari, Andrea},
  journal={Communications on Pure and Applied Mathematics},
  volume={75},
  number={4},
  pages={667--766},
  year={2022},
  publisher={Wiley Online Library}
}

@article{soudry2018implicit,
  title={The implicit bias of gradient descent on separable data},
  author={Soudry, Daniel and Hoffer, Elad and Nacson, Mor Shpigel and Gunasekar, Suriya and Srebro, Nathan},
  journal={Journal of Machine Learning Research},
  volume={19},
  number={70},
  pages={1--57},
  year={2018}
}

@article{tsigler2023benign,
  title={Benign overfitting in ridge regression},
  author={Tsigler, Alexander and Bartlett, Peter L},
  journal={Journal of Machine Learning Research},
  volume={24},
  number={123},
  pages={1--76},
  year={2023}
}

@article{muthukumar2020harmless,
  title={Harmless interpolation of noisy data in regression},
  author={Muthukumar, Vidya and Vodrahalli, Kailas and Subramanian, Vignesh and Sahai, Anant},
  journal={IEEE Journal on Selected Areas in Information Theory},
  volume={1},
  number={1},
  pages={67--83},
  year={2020},
  publisher={IEEE}
}

@article{geman1992neural,
  title={Neural networks and the bias/variance dilemma},
  author={Geman, Stuart and Bienenstock, Elie and Doursat, Ren{\'e}},
  journal={Neural computation},
  volume={4},
  number={1},
  pages={1--58},
  year={1992},
  publisher={MIT Press}
}

@misc{john2010elements,
  title={The elements of statistical learning: data mining, inference, and prediction},
  author={John Lu, ZQ},
  year={2010},
  publisher={Oxford University Press}
}

@article{zhang2016understanding,
  title={Understanding deep learning requires rethinking generalization},
  author={Zhang, Chiyuan and Bengio, Samy and Hardt, Moritz and Recht, Benjamin and Vinyals, Oriol},
  journal={arXiv preprint arXiv:1611.03530},
  year={2016}
}

@article{grinsztajn2022tree,
  title={Why do tree-based models still outperform deep learning on typical tabular data?},
  author={Grinsztajn, L{\'e}o and Oyallon, Edouard and Varoquaux, Ga{\"e}l},
  journal={Advances in neural information processing systems},
  volume={35},
  pages={507--520},
  year={2022}
}

@article{friedman2001greedy,
  title={Greedy function approximation: a gradient boosting machine},
  author={Friedman, Jerome H},
  journal={Annals of statistics},
  pages={1189--1232},
  year={2001},
  publisher={JSTOR}
}

@article{shwartz2022tabular,
  title={Tabular data: Deep learning is not all you need},
  author={Shwartz-Ziv, Ravid and Armon, Amitai},
  journal={Information fusion},
  volume={81},
  pages={84--90},
  year={2022},
  publisher={Elsevier}
}

@article{buhlmann2003boosting,
  title={Boosting with the $\ell_2$ loss: regression and classification},
  author={B{\"u}hlmann, Peter and Yu, Bin},
  journal={Journal of the American Statistical Association},
  volume={98},
  number={462},
  pages={324--339},
  year={2003},
  publisher={Taylor \& Francis}
}

@article{hastie2007forward,
  title={Forward stagewise regression and the lasso},
  author={Hastie, Trevor and Taylor, Jonathan and Tibshirani, Robert and Walther, Guenther},
  journal={Electronic Journal of Statistics},
  volume={1},
  pages={1--29},
  year={2007},
  publisher={Institute of Mathematical Statistics}
}

@article{chen2001atomic,
  title={Atomic decomposition by basis pursuit},
  author={Chen, Scott Shaobing and Donoho, David L and Saunders, Michael A},
  journal={SIAM review},
  volume={43},
  number={1},
  pages={129--159},
  year={2001},
  publisher={SIAM}
}

@article{bartlett1998boosting,
  title={Boosting the margin: A new explanation for the effectiveness of voting methods},
  author={Bartlett, Peter and Freund, Yoav and Lee, Wee Sun and Schapire, Robert E},
  journal={The annals of statistics},
  volume={26},
  number={5},
  pages={1651--1686},
  year={1998},
  publisher={Institute of Mathematical Statistics}
}

@article{koehler2021uniform,
  title={Uniform convergence of interpolators: Gaussian width, norm bounds and benign overfitting},
  author={Koehler, Frederic and Zhou, Lijia and Sutherland, Danica J and Srebro, Nathan},
  journal={Advances in Neural Information Processing Systems},
  volume={34},
  pages={20657--20668},
  year={2021}
}

@article{liang2018just,
  title={Just interpolate: Kernel” ridgeless” regression can generalize. 757 arxiv e-prints p},
  author={Liang, T and Rakhlin, A},
  journal={arXiv preprint arXiv:1808.00387},
  volume={758},
  year={2018}
}

@inproceedings{telgarsky2013margins,
  title={Margins, shrinkage, and boosting},
  author={Telgarsky, Matus},
  booktitle={International Conference on Machine Learning},
  pages={307--315},
  year={2013},
  organization={PMLR}
}

@inproceedings{wang2022tight,
  title={Tight bounds for minimum $\ell_1$-norm interpolation of noisy data},
  author={Wang, Guillaume and Donhauser, Konstantin and Yang, Fanny},
  booktitle={International Conference on Artificial Intelligence and Statistics},
  pages={10572--10602},
  year={2022},
  organization={PMLR}
}

@article{miolane2018distribution,
  title={The distribution of the Lasso: Uniform control over sparse balls and adaptive parameter tuning. arXiv e-prints, page},
  author={Miolane, L{\'e}o and Montanari, Andrea},
  journal={arXiv preprint arXiv:1811.01212},
  year={2018}
}

@inproceedings{taheri2021fundamental,
  title={Fundamental limits of ridge-regularized empirical risk minimization in high dimensions},
  author={Taheri, Hossein and Pedarsani, Ramtin and Thrampoulidis, Christos},
  booktitle={International Conference on Artificial Intelligence and Statistics},
  pages={2773--2781},
  year={2021},
  organization={PMLR}
}

@article{celentano2023lasso,
  title={The lasso with general gaussian designs with applications to hypothesis testing},
  author={Celentano, Michael and Montanari, Andrea and Wei, Yuting},
  journal={The Annals of Statistics},
  volume={51},
  number={5},
  pages={2194--2220},
  year={2023},
  publisher={Institute of Mathematical Statistics}
}

@article{sun2012scaled,
  title={Scaled sparse linear regression},
  author={Sun, Tingni and Zhang, Cun-Hui},
  journal={Biometrika},
  volume={99},
  number={4},
  pages={879--898},
  year={2012},
  publisher={Oxford University Press}
}

@article{fan2012variance,
  title={Variance estimation using refitted cross-validation in ultrahigh dimensional regression},
  author={Fan, Jianqing and Guo, Shaojun and Hao, Ning},
  journal={Journal of the Royal Statistical Society Series B: Statistical Methodology},
  volume={74},
  number={1},
  pages={37--65},
  year={2012},
  publisher={Oxford University Press}
}

@article{barron2002universal,
  title={Universal approximation bounds for superpositions of a sigmoidal function},
  author={Barron, Andrew R},
  journal={IEEE Transactions on Information theory},
  volume={39},
  number={3},
  pages={930--945},
  year={2002},
  publisher={IEEE}
}

@article{cao2022benign,
  title={Benign overfitting in two-layer convolutional neural networks},
  author={Cao, Yuan and Chen, Zixiang and Belkin, Misha and Gu, Quanquan},
  journal={Advances in neural information processing systems},
  volume={35},
  pages={25237--25250},
  year={2022}
}

@inproceedings{kou2023benign,
  title={Benign overfitting in two-layer relu convolutional neural networks},
  author={Kou, Yiwen and Chen, Zixiang and Chen, Yuanzhou and Gu, Quanquan},
  booktitle={International conference on machine learning},
  pages={17615--17659},
  year={2023},
  organization={PMLR}
}

@inproceedings{zhu2023benign,
  title={Benign overfitting in deep neural networks under lazy training},
  author={Zhu, Zhenyu and Liu, Fanghui and Chrysos, Grigorios and Locatello, Francesco and Cevher, Volkan},
  booktitle={International Conference on Machine Learning},
  pages={43105--43128},
  year={2023},
  organization={PMLR}
}

@article{kim2025transfer,
  title={Transfer Learning for Benign Overfitting in High-Dimensional Linear Regression},
  author={Kim, Yeichan and Kim, Ilmun and Park, Seyoung},
  journal={arXiv preprint arXiv:2510.15337},
  year={2025}
}

@article{frei2024trained,
  title={Trained transformer classifiers generalize and exhibit benign overfitting in-context},
  author={Frei, Spencer and Vardi, Gal},
  journal={arXiv preprint arXiv:2410.01774},
  year={2024}
}

@article{jiang2024unveil,
  title={Unveil benign overfitting for transformer in vision: Training dynamics, convergence, and generalization},
  author={Jiang, Jiarui and Huang, Wei and Zhang, Miao and Suzuki, Taiji and Nie, Liqiang},
  journal={Advances in Neural Information Processing Systems},
  volume={37},
  pages={135464--135625},
  year={2024}
}

@article{magen2024benign,
  title={Benign overfitting in single-head attention},
  author={Magen, Roey and Shang, Shuning and Xu, Zhiwei and Frei, Spencer and Hu, Wei and Vardi, Gal},
  journal={arXiv preprint arXiv:2410.07746},
  year={2024}
}

@article{sakamoto2024benign,
  title={Benign Overfitting in Token Selection of Attention Mechanism},
  author={Sakamoto, Keitaro and Sato, Issei},
  journal={arXiv preprint arXiv:2409.17625},
  year={2024}
}

@article{shang2024initialization,
  title={Initialization matters: On the benign overfitting of two-layer relu cnn with fully trainable layers},
  author={Shang, Shuning and Meng, Xuran and Cao, Yuan and Zou, Difan},
  journal={arXiv preprint arXiv:2410.19139},
  year={2024}
}

@article{xu2025rethinking,
  title={Rethinking benign overfitting in two-layer neural networks},
  author={Xu, Ruichen and Chen, Kexin},
  journal={arXiv preprint arXiv:2502.11893},
  year={2025}
}

@article{mallinar2207benign,
  title={Benign, tempered, or catastrophic: A taxonomy of overfitting},
  author={Mallinar, Neil and Simon, James B and Abedsoltan, Amirhesam and Pandit, Parthe and Belkin, Mikhail and Nakkiran, Preetum},
  journal={URL https://arxiv. org/abs/2207},
  volume={6569},
  year={2022}
}

@book{wainwright2019high,
  title={High-dimensional statistics: A non-asymptotic viewpoint},
  author={Wainwright, Martin J},
  volume={48},
  year={2019},
  publisher={Cambridge university press}
}

@article{tibshirani2013lasso,
  title={The lasso problem and uniqueness},
  author={Tibshirani, Ryan J},
  journal={Electronic Journal of Statistics},
  volume={7},
  pages={1456--1490},
  year={2013}
}

@article{scornet2016random,
  title={Random forests and kernel methods},
  author={Scornet, Erwan},
  journal={IEEE Transactions on Information Theory},
  volume={62},
  number={3},
  pages={1485--1500},
  year={2016},
  publisher={IEEE}
}

@article{biau2016random,
  title={A random forest guided tour},
  author={Biau, G{\'e}rard and Scornet, Erwan},
  journal={Test},
  volume={25},
  number={2},
  pages={197--227},
  year={2016},
  publisher={Springer}
}

@inproceedings{rahaman2019spectral,
  title={On the spectral bias of neural networks},
  author={Rahaman, Nasim and Baratin, Aristide and Arpit, Devansh and Draxler, Felix and Lin, Min and Hamprecht, Fred and Bengio, Yoshua and Courville, Aaron},
  booktitle={International conference on machine learning},
  pages={5301--5310},
  year={2019},
  organization={PMLR}
}

@article{dobriban2018high,
  title={High-dimensional asymptotics of prediction: Ridge regression and classification},
  author={Dobriban, Edgar and Wager, Stefan},
  journal={The Annals of Statistics},
  volume={46},
  number={1},
  pages={247--279},
  year={2018},
  publisher={JSTOR}
}

@article{donoho1995adapting,
  title={Adapting to unknown smoothness via wavelet shrinkage},
  author={Donoho, David L and Johnstone, Iain M},
  journal={Journal of the american statistical association},
  volume={90},
  number={432},
  pages={1200--1224},
  year={1995},
  publisher={Taylor \& Francis}
}

@book{feller1968introduction,
  title={An introduction to probability theory and its applications, Volume 1},
  author={Feller, William},
  year={1968},
  publisher={John Wiley \& Sons}
}

@misc{abramowitz1966handbook,
  title={Handbook of mathematical functions, with formulas, graphs, and mathematical tables},
  author={Abramowitz, Milton and Stegun, Irene A and Romain, Jacques E},
  year={1966},
  publisher={American Institute of Physics}
}

@article{bai1993limit,
  title={Limit of the smallest eigenvalue of a large dimensional sample covariance matrix},
  author={Bai, Zhi-Dong and Yin, Yong-Qua and others},
  journal={Ann. Probab},
  volume={21},
  number={3},
  pages={1275--1294},
  year={1993},
  publisher={World Scientific}
}

@book{muirhead2009aspects,
  title={Aspects of multivariate statistical theory},
  author={Muirhead, Robb J},
  year={2009},
  publisher={John Wiley \& Sons}
}

@misc{vershynin2012introduction,
  title={Introduction to the non-asymptotic analysis of random matrices.},
  author={Vershynin, Roman},
  year={2012}
}

@article{laurent2000adaptive,
  title={Adaptive estimation of a quadratic functional by model selection},
  author={Laurent, Beatrice and Massart, Pascal},
  journal={Annals of statistics},
  pages={1302--1338},
  year={2000},
  publisher={JSTOR}
}

@article{von1988moments,
  title={Moments for the inverted Wishart distribution},
  author={Von Rosen, Dietrich},
  journal={Scandinavian Journal of Statistics},
  pages={97--109},
  year={1988},
  publisher={JSTOR}
}

@article{su2026itboost,
  title={ITBoost: Information-Theoretic Trust for Robust Boosting},
  author={Su, Ye and Zhao, Longlong and Garcia-Gil, Diego and Guo, Jipeng and Zhang, Gangchun and Chen, Jinxin and Chen, Jinsong},
  journal={arXiv preprint arXiv:2605.04671},
  year={2026}
}


\newpage

\appendix

\begin{longtable}{p{0.15\textwidth} p{0.65\textwidth} p{0.15\textwidth}}
\caption{Summary of Symbols and Notations}
\label{tab:global_notation} \\
\toprule
\textbf{Symbol} & \textbf{Definition} & \textbf{Domain/Space} \\
\midrule
\endfirsthead

\multicolumn{3}{c}%
{{\bfseries \tablename\ \thetable{} -- continued from previous page}} \\
\toprule
\textbf{Symbol} & \textbf{Definition} & \textbf{Domain/Space} \\
\midrule
\endhead

\midrule \multicolumn{3}{r}{{Continued on next page}} \\ \midrule
\endfoot

\bottomrule
\endlastfoot

\multicolumn{3}{l}{\textit{\textbf{1. Global Dimensions and Asymptotics}}} \\
\midrule
$n, p$ & Number of training samples and number of features & $\mathbb{Z}^+$ \\
$m$ & Number of base learners in the dictionary & $\mathbb{N}^+$ \\
$\gamma$ & Overparameterization ratio, defined as the limit of $p/n$ & $(1, \infty]$ \\
$s$ & Sparsity level of the true target function ($s = \|\beta^*\|_0$) & $\mathbb{Z}^+$ \\
$k^*$ & Critical split index partitioning the head and tail of the covariance spectrum & $\mathbb{Z}^+$ \\
$[n], [p]$ & Index sets $\{1, \dots, n\}$ and $\{1, \dots, p\}$ & $\mathbb{Z}^+$ \\
$\mathcal{O}, o, \Omega, \Theta$ & Standard big-$\mathcal{O}$, little-$o$, big-$\Omega$, and big-$\Theta$ deterministic asymptotic notations & - \\
$\mathcal{O}_P, o_P$ & Stochastic boundedness and convergence in probability notations & - \\
$\xrightarrow{\mathbb{P}}$ & Convergence in probability as $n, p \to \infty$ & - \\

\midrule
\multicolumn{3}{l}{\textit{\textbf{2. Data Generation and Feature Space}}} \\
\midrule
$\mathcal{X}$ & Input feature space & - \\
$\mathcal{D}$ & Joint probability distribution of the data & - \\
$S$ & Training dataset $S = \{(x_i, y_i)\}_{i=1}^n$ drawn i.i.d. from $\mathcal{D}$ & $\mathcal{X} \times \mathbb{R}$ \\
$\mathcal{H}$ & Dictionary of base hypotheses (weak learners), $\mathcal{H} = \{h_1, \dots, h_p\}$ & - \\
$\phi(x)$ & Feature map vector evaluating input $x$ on all base learners & $\mathbb{R}^p$ \\
$\Phi$ & Design matrix, where the $i$-th row is $\phi(x_i)^\top \sim \mathcal{N}(0, \Sigma)$ & $\mathbb{R}^{n \times p}$ \\
$Y$ & Response vector composed of labels $y_i$ & $\mathbb{R}^n$ \\
$\beta^*$ & Unknown true ensemble weight vector & $\mathbb{R}^p$ \\
$\mathcal{B}_0(s)$ & The class of exactly $s$-sparse signals, $\{\beta \in \mathbb{R}^p : \|\beta\|_0 \le s\}$ & - \\
$\epsilon_i, \epsilon$ & Observation label noise scalar and noise vector, $\epsilon \sim \mathcal{N}(0, \sigma^2 I_n)$ & $\mathbb{R}, \mathbb{R}^n$ \\
$\sigma^2$ & Variance of the label noise ($\mathbb{E}[\epsilon_i^2] = \sigma^2$) & $\mathbb{R}_{>0}$ \\

\midrule
\multicolumn{3}{l}{\textit{\textbf{3. Covariance Geometry and Spectrum}}} \\
\midrule
$\Sigma$ & Base-learner feature covariance matrix, $\Sigma = \mathbb{E}_x[\phi(x)\phi(x)^\top]$ & $\mathbb{R}^{p \times p}$ \\
$\Sigma^{1/2}$ & Unique symmetric positive semi-definite square root of $\Sigma$ & $\mathbb{R}^{p \times p}$ \\
$\lambda_i$ & The $i$-th largest eigenvalue of the covariance matrix $\Sigma$ & $\mathbb{R}_{\ge 0}$ \\
$I_p, I_n$ & Identity matrices of dimension $p$ and $n$ & $\mathbb{R}^{p \times p}, \mathbb{R}^{n \times n}$ \\
$H, T$ & Sets of indices for the head (spikes) and the tail components & - \\
$\Sigma_T$ & Tail covariance matrix component & $\mathbb{R}^{(p-k^*) \times (p-k^*)}$ \\
$\lambda_{\text{tail}}$ & Constant eigenvalue associated with the isotropic tail covariance & $\mathbb{R}_{\ge 0}$ \\
$r_1(\Sigma_T)$ & Generalized $\ell_1$ effective rank (trace-to-spectral-norm ratio of the tail) & $\mathbb{R}_{\ge 0}$ \\
$r_2(\Sigma_T)$ & Generalized $\ell_2$ effective rank of the tail (effective isotropic dimensions) & $\mathbb{R}_{\ge 0}$ \\

\midrule
\multicolumn{3}{l}{\textit{\textbf{4. Continuous-Time Boosting Dynamics}}} \\
\midrule
$t$ & Continuous time parameter of the infinitesimal flow & $\mathbb{R}_{\ge 0}$ \\
$\beta(t)$ & Continuous-time trajectory of the ensemble weight vector & $\mathbb{R}^p$ \\
$\mathcal{B}_1$ & Unit $\ell_1$-ball in $\mathbb{R}^p$, defined as $\{u \in \mathbb{R}^p : \|u\|_1 \le 1\}$ & Set \\
$u$ & Dummy variable for optimization or search over the feature space & $\mathbb{R}^p$ \\
$\hat{L}(\beta)$ & Empirical squared risk, $\frac{1}{2n}\|Y - \Phi\beta\|_2^2$ & $\mathbb{R}_{\ge 0}$ \\
$r(t)$ & Residual vector at time $t$, $r(t) = Y - \Phi\beta(t)$ & $\mathbb{R}^n$ \\
$g(t)$ & Negative gradient of the empirical risk (residual correlation vector) & $\mathbb{R}^p$ \\
$\rho(t)$ & Maximum absolute empirical correlation, $\rho(t) = \|g(t)\|_\infty$ & $\mathbb{R}_{\ge 0}$ \\
$e_i$ & The $i$-th standard basis vector & $\mathbb{R}^p$ \\
$\hat{\beta}_\infty$ & Basis Pursuit Interpolant (stationary point of the late-stopping flow) & $\mathbb{R}^p$ \\
$t^*$ & Analytical early stopping time driven by the pure noise threshold & $\mathbb{R}_{\ge 0}$ \\
$\lambda_n$ & Analytical threshold of pure noise correlation, $\lambda_n = \sigma \sqrt{2c\log p / n}$ & $\mathbb{R}_{>0}$ \\
$\hat{\sigma}^2, \hat{\lambda}_n, \hat{t}^*$ & Data-driven noise variance estimate, adaptive threshold, and adaptive stopping time & $\mathbb{R}_{>0}, \mathbb{R}_{>0}, \mathbb{R}_{\ge 0}$ \\

\midrule
\multicolumn{3}{l}{\textit{\textbf{5. Risks and Theoretical Bounds}}} \\
\midrule
$\mathcal{E}(\hat{\beta})$ & Expected out-of-sample excess risk, $\mathbb{E}_{x,y}[(y - \langle \hat{\beta}, \phi(x) \rangle)^2] - \sigma^2$ & $\mathbb{R}_{\ge 0}$ \\
$\mathcal{E}_{\text{head}}, \mathcal{E}_{\text{tail}}$ & Decomposed excess risk components corresponding to the head and tail spectra & $\mathbb{R}_{\ge 0}$ \\
$R$ & Radius bound of the $\ell_1$-ball containing the true signal, $\|\beta^*\|_1 \le R$ & $\mathbb{R}_{\ge 0}$ \\
$C_1, C_2, c$ & Universal absolute constants & $\mathbb{R}_{>0}$ \\

\midrule
\multicolumn{3}{l}{\textit{\textbf{6. CGMT and Appendix Auxiliary Variables}}} \\
\midrule
$\text{PO}, \text{AO}$ & Primary Optimization and Auxiliary Optimization in the CGMT framework & - \\
$Z$ & Standard Gaussian matrix with i.i.d. $\mathcal{N}(0,1)$ entries & $\mathbb{R}^{n \times p}$ \\
$g, h$ & Independent standard Gaussian vectors in CGMT Auxiliary Optimization & $\mathbb{R}^p, \mathbb{R}^n$ \\
$v, \lambda$ & Primal and dual optimization variables in the min-max formulation & $\mathbb{R}^p, \mathbb{R}^n$ \\
$\mathcal{C}_R, \mathcal{C}_{R_n}$ & Compact constraint sets constructed to guarantee uniform integrability in CGMT & Set \\
$\hat{v}_R, \hat{w}_{R_n}$ & Constrained estimators optimized over the compact sets $\mathcal{C}_R$ and $\mathcal{C}_{R_n}$ & $\mathbb{R}^p$ \\
$v^*, v_{\text{feas}}$ & Optimal primal solution in AO and the constructed feasible pseudo-inverse solution & $\mathbb{R}^p$ \\
$\hat{w}_n, w$ & Estimation error vectors defined as $\hat{w}_n = \hat{\beta}_\infty - \beta^*$ and $w = v - \beta^*$ & $\mathbb{R}^p$ \\
$x_i, w_i$ & Scalar components utilized in the coordinate-wise separable AO decoupling & $\mathbb{R}$ \\
$Z^\dagger$ & Moore-Penrose pseudo-inverse of the Gaussian matrix $Z$ & $\mathbb{R}^{p \times n}$ \\
$\psi(v, \mu)$ & Convex-concave objective function formulated in the primary CGMT problem & $\mathbb{R}$ \\
$f_n(v)$ & Stochastic objective function sequence in the Auxiliary Optimization & $\mathbb{R}$ \\
$\tau^*$ & Optimal dual scalar solution in CGMT calibration determining threshold intensity & $\mathbb{R}_{>0}$ \\
$\tau, \tilde{\tau}$ & Rescaled scalar dual variables to track bounded subsets & $\mathbb{R}_{\ge 0}$ \\
$\mu$ & Unit vector representing the direction of the dual variable $\lambda$ ($\mu = \lambda / \|\lambda\|_2$) & $\mathbb{R}^n$ \\
$\beta$ (scalar) & Variational scalar variable utilized in the CGMT decoupling (Appendix B) & $\mathbb{R}_{>0}$ \\
$\alpha$ & Deterministic limit of the unconstrained estimator's norm ($\|\hat{v}_R\|_2 \xrightarrow{\mathbb{P}} \alpha$) & $\mathbb{R}_{\ge 0}$ \\
$\zeta$ & Local standard deviation proxy variable, $\zeta = \frac{1}{\sqrt{n}}\|\Sigma^{1/2}\nu\|_2$ & $\mathbb{R}_{\ge 0}$ \\
$b, b^*$ & Variational optimization variable from the AM-GM identity, and its optimal deterministic scalar solution (e.g., $b^* = \sqrt{\sigma^2 + \alpha^2}$) & $\mathbb{R}_{>0}$ \\
$c_i$ & Coupled signal magnitude parameter over the head dimensions & $\mathbb{R}$ \\
$\nu$ & Shifted estimation error vector tracking the true signal deviation, $\nu = v - \beta^*$ & $\mathbb{R}^p$ \\
$\theta_n$ & Deterministic scalar sequence uniquely solving the degrees-of-freedom calibration & $\mathbb{R}_{\ge 0}$ \\
$\kappa, \kappa_T, \kappa_H$ & Degrees-of-freedom calibration thresholds in AO over different spectral regions & $\mathbb{R}_{>0}$ \\
$P_T, V$ & Tail probability mass $\mathbb{P}(|g| > \kappa_T)$ and total normalized variance & $\mathbb{R}_{\ge 0}$ \\
$V_H, V_T$ & Decomposed variance components corresponding to the head and tail spectra & $\mathbb{R}_{\ge 0}$ \\
$s(t)$ & Subgradient vector governing the flow dynamics, $s(t) \in \partial\|g(t)\|_\infty$ & $\mathbb{R}^p$ \\
$w, \mathcal{E}$ & Noise correlation vector $w = \frac{1}{n}\Phi^\top \epsilon$ and the associated high-probability event & $\mathbb{R}^p, \text{Event}$ \\

\midrule
\multicolumn{3}{l}{\textit{\textbf{7. Mathematical Operators and Standard Functions}}} \\
\midrule
$\partial f(x)$ & Subdifferential set of a convex function $f$ at point $x$ & Set \\
$\text{conv}(\cdot)$ & Convex hull operator characterizing the subdifferential of the $\ell_\infty$-norm & Set \\
$\text{sgn}(\cdot)$ & Sign function extracting the sign of a real number & $\{-1, 0, 1\}$ \\
$(\cdot)_+$ & ReLU (positive part) operator, $(x)_+ = \max(x, 0)$ & $\mathbb{R}_{\ge 0}$ \\
$\sigma_{\min}(\cdot)$ & Minimum singular value of a matrix & $\mathbb{R}_{\ge 0}$ \\
$\eta(x; \kappa)$ & Soft-thresholding function (proximal operator for $\ell_1$), $\eta(x; \kappa) = \text{sgn}(x)(|x|-\kappa)_+$ & $\mathbb{R}$ \\
$Q(\cdot)$ & Standard Gaussian tail probability function (Q-function), $Q(x) = \mathbb{P}(Z > x)$ & $[0, 1]$ \\
$\varphi(\cdot)$ & Probability density function (PDF) of the standard normal distribution & $\mathbb{R}_{>0}$ \\
$\operatorname{erfc}(\cdot)$ & Complementary error function, $\operatorname{erfc}(x) = \frac{2}{\sqrt{\pi}}\int_x^\infty e^{-t^2} dt$ & $[0, 2]$ \\
$\langle \cdot, \cdot \rangle$ & Standard Euclidean inner product & $\mathbb{R}$ \\
\end{longtable}

\section{Missing Proof of Theorem~\ref{thm:variance_lower_bound}}
\label{app:proof_thm_variance}

\begin{proof}
Let $Z \in \mathbb{R}^{n \times p}$ be the design matrix with i.i.d.\ $\mathcal{N}(0,1)$ entries and let $\epsilon \sim \mathcal{N}(0,\sigma^2 I_n)$ be the noise vector. Under the pure noise model $\beta^* = 0$, the Basis Pursuit interpolant is
\begin{equation}
\label{eq:bp_def}
\hat{v}_n \in \arg\min_{v \in \mathbb{R}^p} \|v\|_1 \quad \text{s.t.} \quad Z v = \epsilon.
\end{equation}
The asymptotic regime is $n, p \to \infty$ with $p/n \to \infty$. Because the design is isotropic, the excess risk of $\hat{\beta}_\infty = \hat{v}_n$ equals $\mathbb{E}\|\hat{v}_n\|_2^2$.

Let $Z^\dagger$ be the Moore-Penrose pseudo-inverse of $Z$. The vector $\tilde{v} = Z^\dagger \epsilon$ is feasible for Eq.~\eqref{eq:bp_def}. By the Bai-Yin law \citep{bai1993limit}, for $p/n \to \infty$, $\sigma_{\min}(Z) \sim \sqrt{p}$ almost surely. Thus, the operator norm satisfies $\|Z^\dagger\|_2 = 1/\sigma_{\min}(Z) = \mathcal{O}_{\mathbb{P}}(p^{-1/2})$. Moreover, $\|\epsilon\|_2 = \mathcal{O}_{\mathbb{P}}(\sqrt{n})$. Therefore,
\begin{equation*}
\|\tilde{v}\|_1 \le \sqrt{p}\,\|\tilde{v}\|_2 \le \sqrt{p}\,\|Z^\dagger\|_2\,\|\epsilon\|_2 = \mathcal{O}_{\mathbb{P}}(\sqrt{n}).
\end{equation*}
Because $\hat{v}_n$ minimizes the $\ell_1$-norm over the feasible set, it satisfies:
\begin{equation*}
\label{eq:apriori_bound}
\|\hat{v}_n\|_2 \le \|\hat{v}_n\|_1 \le \|\tilde{v}\|_1 = \mathcal{O}_{\mathbb{P}}(\sqrt{n}).
\end{equation*}
Choose a deterministic sequence $R_n = C_0 \sqrt{n}$ with $C_0$ sufficiently large such that $\mathbb{P}\bigl( \|\hat{v}_n\|_2 \le R_n \bigr) \to 1$. Define the compact set $\mathcal{C}_{R_n} = \{ v \in \mathbb{R}^p : \|v\|_2 \le R_n\}$ and let $\hat{v}_{R_n}$ be the solution of Eq.~\eqref{eq:bp_def} restricted to $\mathcal{C}_{R_n}$. On the high-probability event $\{\|\hat{v}_n\|_2 \le R_n\}$, the constraint is inactive, ensuring $\hat{v}_{R_n} = \hat{v}_n$.

Introducing a dual variable $\mu \in \mathbb{R}^n$, the constrained problem is written as the Primary Optimization (PO):
\begin{equation}
\label{eq:PO}
\min_{v \in \mathcal{C}_{R_n}} \max_{\mu \in \mathbb{R}^n} \bigl\{ \mu^\top(\epsilon - Z v) + \|v\|_1 \bigr\}.
\end{equation}
The objective equals $-\mu^\top Z v + \psi(v,\mu)$ with $\psi(v,\mu) = \|v\|_1 + \mu^\top\epsilon$, which is convex in $v$, concave in $\mu$, continuous, and independent of $Z$. By the CGMT \citep{gordon2006milman,thrampoulidis2015regularized}, the asymptotic behavior of Eq.~\eqref{eq:PO} is equivalent to that of the Auxiliary Optimization (AO):
\begin{equation}
\label{eq:AO}
\min_{v \in \mathcal{C}_{R_n}} \max_{\tau \ge 0}
\Bigl\{ \|v\|_1 + \tau \bigl\| \epsilon - \|v\|_2 h \bigr\|_2 - \tau g^\top v \Bigr\},
\end{equation}
where $g \sim \mathcal{N}(0,I_p)$ and $h \sim \mathcal{N}(0,I_n)$ are independent standard Gaussian vectors.

To exchange the limits and apply uniform convergence, we first establish that the optimal dual variable $\tau^*$ is stochastically bounded ($\mathcal{O}_{\mathbb{P}}(1)$). Since the objective is linear in $\tau$ and convex in $v$ over the compact set $\mathcal{C}_{R_n}$, Sion's Minimax Theorem \citep{sion1958general} allows us to exchange the operators to $\max_{\tau \ge 0} \min_{v \in \mathcal{C}_{R_n}}$. Define the inner minimum as $\psi_n(\tau)$. We upper bound $\psi_n(\tau)$ by evaluating the objective at a specific test point $v_{\text{test}} = R_n \frac{g}{\|g\|_2} \in \mathcal{C}_{R_n}$:
\begin{equation*}
\psi_n(\tau) \le \|v_{\text{test}}\|_1 + \tau \bigl\| \epsilon - \|v_{\text{test}}\|_2 h \bigr\|_2 - \tau g^\top v_{\text{test}}.
\end{equation*}
By standard Gaussian concentration, $\|v_{\text{test}}\|_1 \sim R_n \sqrt{2p/\pi}$ and $g^\top v_{\text{test}} = R_n \|g\|_2 \sim R_n \sqrt{p}$ with high probability. Meanwhile, the residual norm scales as 
\begin{equation*}
\| \epsilon - R_n h \|_2 = \mathcal{O}_{\mathbb{P}}(\sqrt{n} R_n).
\end{equation*}
Thus, the upper bound is asymptotically dominated by 
\begin{equation*}
R_n \sqrt{p} (\sqrt{2/\pi} - \tau) + \tau \mathcal{O}_{\mathbb{P}}(\sqrt{n} R_n).
\end{equation*}
Since $p/n \to \infty$, the $\sqrt{p}$ term dominates the $\sqrt{n}$ term. For any fixed $\tau > \sqrt{2/\pi}$, $\psi_n(\tau) \to -\infty$. Because the optimal value of the PO (and thus the AO) is bounded, the maximizer must satisfy $\tau^* \le T_{\max}$ with high probability for some constant $T_{\max} > \sqrt{2/\pi}$. 

With $\tau$ restricted to the compact interval $[0, T_{\max}]$, we expand the norm term inside Eq.~\eqref{eq:AO} yields:
\begin{equation*}
\frac{1}{n}\|\epsilon - \|v\|_2 h\|_2^2
= \frac{1}{n}\|\epsilon\|_2^2 + \frac{\|v\|_2^2}{n}\|h\|_2^2 - \frac{2\|v\|_2}{n}\epsilon^\top h.
\end{equation*}
By the Strong Law of Large Numbers \citep{ash2000probability,kolmogorov2018foundations}, $\frac{1}{n}\|\epsilon\|_2^2 \xrightarrow{a.s.} \sigma^2$, $\frac{1}{n}\|h\|_2^2 \xrightarrow{a.s.} 1$, and $\frac{1}{n}\epsilon^\top h \xrightarrow{a.s.} 0$. Therefore, uniformly over the compact domains $\mathcal{C}_{R_n}$ and $[0, T_{\max}]$:
\begin{equation*}
\sup_{v \in \mathcal{C}_{R_n}} \Bigl| \frac{1}{n}\|\epsilon - \|v\|_2 h\|_2^2 - (\sigma^2 + \|v\|_2^2) \Bigr| \xrightarrow{a.s.} 0.
\end{equation*}
Because $\tau$ is uniformly bounded, this almost sure convergence maps the AO to its deterministic limit:
\begin{equation}
\label{eq:AO_det}
\min_{v \in \mathcal{C}_{R_n}} \max_{\tau \in [0, T_{\max}]}
\Bigl\{ \|v\|_1 + \tau\sqrt{n}\,\sqrt{\sigma^2 + \|v\|_2^2} - \tau g^\top v \Bigr\}.
\end{equation}

For any $x \ge 0$, the AM-GM inequality provides the variational representation $\sqrt{x} = \min_{b > 0} \frac{1}{2}\bigl(b + \frac{x}{b}\bigr)$. Applying this with $x = \sigma^2 + \|v\|_2^2$ inside Eq.~\eqref{eq:AO_det} yields:
\begin{equation*}
\min_{v \in \mathcal{C}_{R_n}} \max_{\tau \in [0, T_{\max}]} \min_{b > 0}
\Bigl\{ \|v\|_1 + \frac{\tau\sqrt{n}}{2}\Bigl(b + \frac{\sigma^2 + \|v\|_2^2}{b}\Bigr) - \tau g^\top v \Bigr\}.
\end{equation*}
The mapping $(v,b) \mapsto \|v\|_2^2/b$ is the perspective function of the squared norm, making the objective jointly convex in $(v,b)$. Because the objective is jointly convex in $(v,b)$ and concave (linear) in $\tau$, Sion's minimax theorem \citep{sion1958general} allows us to exchange $\max_\tau$ with $\min_{v,b}$ over the compact domains:
\begin{equation*}
\max_{\tau \in [0, T_{\max}]} \min_{b > 0} \min_{v \in \mathcal{C}_{R_n}} \mathcal{L}(v, b, \tau),
\end{equation*}
where the Lagrangian is defined as:
\begin{equation}
\label{eq:lagrangian}
\mathcal{L}(v, b, \tau) = \sum_{i=1}^p \Bigl( |v_i| + \frac{\tau\sqrt{n}}{2b} v_i^2 - \tau g_i v_i \Bigr) + \frac{\tau\sqrt{n}}{2}\Bigl(b + \frac{\sigma^2}{b}\Bigr).
\end{equation}
The inner minimization over $v$ separates coordinate-wise. We complete the square for each $v_i$:
\begin{equation*}
|v_i| + \frac{\tau\sqrt{n}}{2b} v_i^2 - \tau g_i v_i = \frac{\tau\sqrt{n}}{2b} \left[ \Bigl(v_i - \frac{b}{\sqrt{n}} g_i\Bigr)^2 + \frac{2b}{\tau\sqrt{n}} |v_i| - \frac{b^2}{n} g_i^2 \right].
\end{equation*}
The minimizer is given by the proximal operator of the $\ell_1$-norm. Using the homogeneity of the soft-thresholding operator $\mathcal{S}(cx; c\lambda) = c\mathcal{S}(x;\lambda)$, we obtain:
\begin{equation}
\label{eq:vstar}
v_i^* = \mathcal{S}\Bigl(\frac{b}{\sqrt{n}} g_i; \frac{b}{\tau\sqrt{n}}\Bigr) = \frac{b}{\sqrt{n}} \,\mathcal{S}\bigl(g_i; \kappa\bigr), \qquad \text{where} \quad \kappa = \frac{1}{\tau}.
\end{equation}

Let $\alpha_n^2 = \|\hat{v}_{R_n}\|_2^2$. By substituting Eq.~\eqref{eq:vstar} and applying the Law of Large Numbers, the deterministic limit $\alpha^2 = \lim_{n \to \infty} \alpha_n^2$ satisfies:
\begin{equation}
\label{eq:alpha}
\alpha^2 = b^2 \frac{p}{n} \, \mathbb{E}[\mathcal{S}^2(g;\kappa)].
\end{equation}

The optimality condition with respect to $b$ is obtained by setting $\frac{\partial \mathcal{L}}{\partial b} = 0$. Differentiating Eq.~\eqref{eq:lagrangian}:
\begin{equation*}
\frac{\partial \mathcal{L}}{\partial b} = -\frac{\tau\sqrt{n}}{2b^2} \sum_{i=1}^p v_i^2 + \frac{\tau\sqrt{n}}{2} \left(1 - \frac{\sigma^2}{b^2}\right) = \frac{\tau\sqrt{n}}{2} \left( 1 - \frac{\sigma^2 + \|v\|_2^2}{b^2} \right) = 0.
\end{equation*}
In the deterministic limit, $\|v^*\|_2^2 \to \alpha^2$, yielding $b^* = \sqrt{\sigma^2 + \alpha^2}$.

To resolve the dual variable $\tau$ (and thus $\kappa$), we apply Danskin's envelope theorem \citep{danskin2012theory} to the outer maximization. At the saddle point, the partial derivative of $\mathcal{L}$ with respect to $\tau$ vanishes. Differentiating Eq.~\eqref{eq:lagrangian}:
\begin{equation*}
\frac{\partial \mathcal{L}}{\partial \tau} = \sum_{i=1}^p \left( \frac{\sqrt{n}}{2b} v_i^2 - g_i v_i \right) + \frac{\sqrt{n}}{2}\Bigl(b + \frac{\sigma^2}{b}\Bigr) = \frac{\sqrt{n}}{2}\left( b + \frac{\sigma^2 + \|v\|_2^2}{b} \right) - g^\top v.
\end{equation*}
Evaluating at the optimum $(v^*, b^*)$, we substitute $b^* = \sqrt{\sigma^2 + \|v^*\|_2^2}$ so the term in parentheses becomes $2b^*$:
\begin{equation*}
\frac{\partial \mathcal{L}}{\partial \tau}\bigg|_{(v^*, b^*)} = \sqrt{n}b^* - g^\top v^* = 0.
\end{equation*}
Taking the expectation in the deterministic limit gives $\sqrt{n}b^* - \mathbb{E}[g^\top v^*] = 0$. Substituting $v_i^* = \frac{b^*}{\sqrt{n}}\mathcal{S}(g_i; \kappa)$:
\begin{equation*}
\sqrt{n}b^* - \frac{b^*}{\sqrt{n}} \sum_{i=1}^p \mathbb{E}[g_i \mathcal{S}(g_i;\kappa)] = 0 \implies 1 - \frac{p}{n}\mathbb{E}[g \mathcal{S}(g;\kappa)] = 0.
\end{equation*}
Applying Stein's lemma \citep{stein1981estimation}, $\mathbb{E}[g\,\mathcal{S}(g;\kappa)] = \mathbb{P}(|g| > \kappa)$, we obtain the calibration equation:
\begin{equation}
\label{eq:calib}
\frac{p}{n} \, \mathbb{P}(|g| > \kappa) = 1.
\end{equation}

Since $p/n \to \infty$, Eq.~\eqref{eq:calib} requires $\kappa \to \infty$. Using the standard Gaussian tail expansion \citep[\textcolor{red}{Formula 7.1.23}]{abramowitz1966handbook}, $\mathbb{P}(|g| > \kappa) = \frac{2}{\kappa\sqrt{2\pi}} e^{-\kappa^2/2}\bigl(1+o(1)\bigr)$. Taking logarithms yields:
\begin{equation*}
\label{eq:kappa}
\kappa^2 = 2 \log(p/n)\,(1+o(1)).
\end{equation*}

We evaluate the exact truncated second moment:
\begin{equation*}
\label{eq:moment_exact}
\mathbb{E}[\mathcal{S}^2(g;\kappa)] = 2\bigl[(1+\kappa^2)Q(\kappa) - \kappa \varphi(\kappa)\bigr],
\end{equation*}
where $Q(\kappa) = \int_\kappa^\infty \varphi(x)\,dx$ and $\varphi(x) = \frac{1}{\sqrt{2\pi}}e^{-x^2/2}$ is the standard normal PDF. 

\textbf{Upper bound:} Using the refined tail inequality \citep[\textcolor{red}{Chapter VII}]{feller1968introduction} $Q(\kappa) \le \frac{\varphi(\kappa)}{\kappa} \bigl(1 - \frac{1}{\kappa^2} + \frac{3}{\kappa^4}\bigr)$, we substitute and expand:
\begin{align*}
\mathbb{E}[\mathcal{S}^2(g;\kappa)] 
&\le 2\frac{\varphi(\kappa)}{\kappa} \left[ (1+\kappa^2)\left(1 - \frac{1}{\kappa^2} + \frac{3}{\kappa^4}\right) - \kappa^2 \right] \\
&= 2\frac{\varphi(\kappa)}{\kappa} \left( 1 - \frac{1}{\kappa^2} + \frac{3}{\kappa^4} + \kappa^2 - 1 + \frac{3}{\kappa^2} - \kappa^2 \right) \\
&= 2\frac{\varphi(\kappa)}{\kappa} \left( \frac{2}{\kappa^2} + \frac{3}{\kappa^4} \right) \le C_1 \frac{\varphi(\kappa)}{\kappa^3} \le C_2 \frac{1}{\kappa^2}\mathbb{P}(|g|>\kappa),
\end{align*}
where the last step applies $\frac{\varphi(\kappa)}{\kappa} \asymp \mathbb{P}(|g|>\kappa)$.

\textbf{Lower bound:} Restricting the integration interval to $[\kappa, \kappa+1/\kappa]$ gives:
\begin{equation*}
\mathbb{E}[\mathcal{S}^2(g;\kappa)] \ge 2\int_\kappa^{\kappa+1/\kappa} (x-\kappa)^2 \varphi(x)\,dx = 2\int_0^{1/\kappa} t^2 \varphi(\kappa+t)\,dt.
\end{equation*}
Because $\varphi$ is monotonically decreasing for $x>0$:
\begin{equation*}
\varphi(\kappa+t) \ge \varphi(\kappa+1/\kappa) = \varphi(\kappa)\exp\left(-\frac{1}{2}\Bigl(\frac{2\kappa}{\kappa} + \frac{1}{\kappa^2}\Bigr)\right) = \varphi(\kappa)\exp\left(-1 - \frac{1}{2\kappa^2}\right).
\end{equation*}
For $\kappa \ge 1$, $\exp\bigl(-1 - \frac{1}{2\kappa^2}\bigr) \ge e^{-3/2}$. Let $c_1 = e^{-3/2}$. Then:
\begin{equation*}
\mathbb{E}[\mathcal{S}^2(g;\kappa)] \ge 2 c_1 \varphi(\kappa) \int_0^{1/\kappa} t^2 dt = 2 c_1 \varphi(\kappa) \left[ \frac{t^3}{3} \right]_0^{1/\kappa} = \frac{2 c_1}{3\kappa^3}\varphi(\kappa) \ge c_2 \frac{1}{\kappa^2}\mathbb{P}(|g|>\kappa).
\end{equation*}
Combining both bounds yields $\mathbb{E}[\mathcal{S}^2(g;\kappa)] = \Theta\left(\frac{1}{\kappa^2}\mathbb{P}(|g|>\kappa)\right)$. Substituting $\mathbb{P}(|g| > \kappa) = n/p$ from Eq.~\eqref{eq:calib}, we get $\mathbb{E}[\mathcal{S}^2(g;\kappa)] = \Theta\left(\frac{n}{p\kappa^2}\right)$. 

Inserting this into Eq.~\eqref{eq:alpha} alongside $b^2 = \sigma^2 + \alpha^2$ yields:
\begin{equation*}
\alpha^2 = (\sigma^2 + \alpha^2) \frac{p}{n} \Theta\left(\frac{n}{p\kappa^2}\right) = (\sigma^2 + \alpha^2) \Theta\left(\frac{1}{\kappa^2}\right) \implies \alpha^2 = \Theta\left(\frac{\sigma^2}{\kappa^2}\right) = \Theta\left(\frac{\sigma^2}{\log(p/n)}\right).
\end{equation*}

By the CGMT uniform convergence over $\mathcal{C}_{R_n}$, $\|\hat{v}_{R_n}\|_2^2 \xrightarrow{\mathbb{P}} \alpha^2$. Since $\|\hat{v}_{R_n}\|_2^2 \le R_n^2 = \mathcal{O}(n)$, the sequence is deterministically bounded by a polynomial in $n$, securing uniform integrability. Hence $\mathbb{E}\|\hat{v}_{R_n}\|_2^2 \to \alpha^2$.

We decompose the expected risk of the unconstrained estimator:
\begin{equation*}
\mathbb{E}\|\hat{v}_n\|_2^2 = \mathbb{E}\|\hat{v}_{R_n}\|_2^2 + \mathbb{E}\bigl[ (\|\hat{v}_n\|_2^2 - \|\hat{v}_{R_n}\|_2^2) \mathbf{1}_{\{\|\hat{v}_n\|_2 > R_n\}} \bigr].
\end{equation*}
On the event $\{\|\hat{v}_n\|_2 > R_n\}$, we have $\|\hat{v}_n\|_2^2 - \|\hat{v}_{R_n}\|_2^2 \le \|\hat{v}_n\|_2^2 \le \|\tilde{v}\|_2^2$. By the Cauchy-Schwarz inequality:
\begin{equation*}
\mathbb{E}\bigl[ (\|\hat{v}_n\|_2^2 - \|\hat{v}_{R_n}\|_2^2) \mathbf{1}_{\{\|\hat{v}_n\|_2 > R_n\}} \bigr]
\le \sqrt{\mathbb{E}\|\tilde{v}\|_2^4} \; \sqrt{\mathbb{P}(\|\hat{v}_n\|_2 > R_n)}.
\end{equation*}
The fourth moment is expanded using the independence of $Z$ and $\epsilon$:
\begin{equation*}
\mathbb{E}\|\tilde{v}\|_2^4 \le p^2 \mathbb{E}\bigl[\|Z^\dagger\|_2^4 \|\epsilon\|_2^4\bigr] = p^2 \mathbb{E}\|Z^\dagger\|_2^4 \mathbb{E}\|\epsilon\|_2^4.
\end{equation*}
Since $\epsilon \sim \mathcal{N}(0, \sigma^2 I_n)$, $\mathbb{E}\|\epsilon\|_2^4 = \sigma^4(n^2 + 2n) = \mathcal{O}(n^2)$. From standard inverse Wishart moments \citep{von1988moments,muirhead2009aspects}, $\mathbb{E}\|Z^\dagger\|_2^4 = \mathcal{O}(p^{-2})$. Thus, $\mathbb{E}\|\tilde{v}\|_2^4 = \mathcal{O}(n^2)$.

Meanwhile, we strictly bound the deviation probability $\mathbb{P}(\|\hat{v}_n\|_2 > R_n) \le \mathbb{P}(\|\tilde{v}\|_2 > C_0\sqrt{n})$. Since $\|\tilde{v}\|_2 \le \|\epsilon\|_2 / \sigma_{\min}(Z)$, we can decompose this probability via the set inclusion for any constants satisfying $C_0 = c_1 / c_2$:
\begin{equation*}
\bigl\{ \|\tilde{v}\|_2 > C_0 \sqrt{n} \bigr\} \subseteq \bigl\{ \|\epsilon\|_2 > c_1 \sqrt{n} \bigr\} \cup \bigl\{ \sigma_{\min}(Z) < c_2 \sqrt{p} \bigr\}.
\end{equation*}

By standard chi-square concentration \citep[\textcolor{red}{Lemma 1}]{laurent2000adaptive}, choosing $c_1 > \sigma$ sufficiently large ensures that the probability $\mathbb{P}(\|\epsilon\|_2 > c_1\sqrt{n}) \le e^{-cn}$ decays exponentially in $n$. 

Moreover, for the Gaussian matrix $Z$, singular value bounds \citep[\textcolor{red}{Theorem 5.32}]{vershynin2012introduction} state that for any $t > 0$,
\begin{equation*}
\mathbb{P}\bigl( \sigma_{\min}(Z) \le \sqrt{p} - \sqrt{n} - t \bigr) \le e^{-t^2/2}.
\end{equation*}
Since $p/n \to \infty$, for sufficiently large $n$ and $p$, we have $\sqrt{n} \le \frac{1}{4}\sqrt{p}$. Choosing $t = \frac{1}{4}\sqrt{p}$ and setting $c_2 = 1/2$ yields:
\begin{equation*}
\mathbb{P}\Bigl( \sigma_{\min}(Z) \le \frac{1}{2}\sqrt{p} \Bigr) \le e^{-p/32}.
\end{equation*}

By appropriately choosing $c_1$ and defining $C_0 = 2c1$, the union bound guarantees that the combined deviation probability satisfies:
\begin{equation*}
\mathbb{P}(\|\tilde{v}\|_2 > C_0\sqrt{n}) \le e^{-cn} + e^{-p/32}.
\end{equation*}
Because $p > n$, this probability decays exponentially in $n$. Thus, the residual expectation vanishes, yielding:
\begin{equation*}
\mathbb{E}\|\hat{v}_n\|_2^2 \to \alpha^2 = \Theta\!\left( \frac{\sigma^2}{\log(p/n)} \right).
\end{equation*}
Returning to the global notation, $\hat{\beta}_\infty = \hat{v}_n$ and the expected excess risk is exactly $\mathbb{E}\|\hat{\beta}_\infty\|_2^2$. This completes the proof.
\end{proof}

\section{Missing Proof of Theorem \ref{thm:risk_spiked_final}}
\label{app:proof_thm_risk_spiked_final}

\begin{proof}
Let the generative model be $Y = Z\Sigma^{1/2}\beta^* + \sqrt{n}\epsilon$, where, to match standard CGMT literature notation, we use $Z \in \mathbb{R}^{n \times p}$ with i.i.d.\ $\mathcal{N}(0, 1)$ entries in place of the design matrix $\Phi$, and $\epsilon \sim \mathcal{N}(0, \sigma^2 I_n)$. Let $\nu = v - \beta^*$ denote the estimation error vector. The estimator $\hat{\nu}_n = \hat{\beta}_\infty - \beta^*$ solves:
\begin{equation*}
    \hat{\nu}_n \in \arg\min_{\nu \in \mathbb{R}^p} \|\nu + \beta^*\|_1 \quad \text{s.t.} \quad Z\Sigma^{1/2}\nu = \sqrt{n}\epsilon.
\end{equation*}

Define a feasible solution $\tilde{\nu} = \Sigma^{-1/2} Z^\dagger (\sqrt{n}\epsilon)$. As $p/n \to \infty$, the Bai-Yin law gives $\sigma_{\min}(Z) \sim \sqrt{p}$. Thus, the operator norm is bounded by:
\begin{equation*}
    \|Z^\dagger\|_2 = \frac{1}{\sigma_{\min}(Z)} = \mathcal{O}_P(p^{-1/2}).
\end{equation*}
Using $\|\sqrt{n}\epsilon\|_2 = \mathcal{O}_P(n)$ and $\|\Sigma^{-1/2}\|_2 \le \lambda_{\text{tail}}^{-1/2}$:
\begin{equation*}
    \|\tilde{\nu}\|_2 \le \|\Sigma^{-1/2}\|_2 \|Z^\dagger\|_2 \|\sqrt{n}\epsilon\|_2 = \mathcal{O}_P(n/\sqrt{p}).
\end{equation*}
By the optimality of $\hat{\nu}_n$ and $\|\cdot\|_1 \le \sqrt{p}\|\cdot\|_2$:
\begin{equation*}
    \|\hat{\nu}_n\|_2 \le \|\hat{\nu}_n\|_1 \le \|\tilde{\nu} + \beta^*\|_1 + \|\beta^*\|_1 \le \sqrt{p}\|\tilde{\nu}\|_2 + 2\|\beta^*\|_1.
\end{equation*}
Since $\beta^*$ is $s$-sparse with bounded magnitude, $2\|\beta^*\|_1 = \mathcal{O}(1)$. Consequently:
\begin{equation*}
    \|\hat{\nu}_n\|_2 \le \sqrt{p}\mathcal{O}_P(n/\sqrt{p}) + \mathcal{O}(1) = \mathcal{O}_P(n).
\end{equation*}
There exists a sequence $R_n = c n$ such that $\mathbb{P}(\|\hat{\nu}_n\|_2 > R_n) \to 0$. We restrict the domain to $\mathcal{C}_{R_n} = \{\nu \in \mathbb{R}^p: \|\nu\|_2 \le R_n\}$. Let $\hat{\nu}_{R_n}$ denote the constrained estimator. Thus $\mathbb{P}(\hat{\nu}_n = \hat{\nu}_{R_n}) \to 1$.

Introducing $\mu \in \mathbb{R}^n$, the primary optimization is:
\begin{equation*}
    \min_{\nu \in \mathcal{C}_{R_n}} \max_{\mu \in \mathbb{R}^n} \left\{ \frac{1}{n}\|\nu + \beta^*\|_1 + \frac{1}{n} \mu^\top (\sqrt{n}\epsilon - Z\Sigma^{1/2}\nu) \right\}.
\end{equation*}
Applying the CGMT \citep{gordon2006milman,thrampoulidis2014gaussian,thrampoulidis2015regularized} over $\mathcal{C}_{R_n}$ with independent standard Gaussian vectors $g \sim \mathcal{N}(0, I_p)$ and $h \sim \mathcal{N}(0, I_n)$, and maximizing over the direction of $\mu$, yields $\tilde{\tau} = \|\mu\|_2 / \sqrt{n} \ge 0$:
\begin{equation*}
    \min_{\nu \in \mathcal{C}_{R_n}} \max_{\tilde{\tau} \ge 0} \left\{ \frac{1}{n}\|\nu + \beta^*\|_1 + \frac{\tilde{\tau}}{\sqrt{n}} \big\| \sqrt{n}\epsilon - \|\Sigma^{1/2}\nu\|_2 h \big\|_2 - \frac{\tilde{\tau}}{\sqrt{n}} g^\top \Sigma^{1/2} \nu \right\}.
\end{equation*}
Since $\nu$ is confined to the compact set $\mathcal{C}_{R_n}$, Sion's Minimax Theorem \citep{sion1958general} allows the commutation of the min and max operators. 

Define $\zeta = \frac{1}{\sqrt{n}}\|\Sigma^{1/2}\nu\|_2$. Evaluating the objective at the feasible point $\tilde{\nu}$ yields a value of order $\mathcal{O}(1)$. Hence the optimal value is upper bounded by a constant. For any fixed $\tilde{\tau} \ge 0$, the term
\begin{equation*}
\frac{\tilde{\tau}}{\sqrt{n}}\|\sqrt{n}\epsilon - \sqrt{n}\zeta h\|_2
\end{equation*}
satisfies
\begin{equation*}
\frac{1}{n}\|\sqrt{n}\epsilon - \sqrt{n}\zeta h\|_2^2
= \frac{1}{n}\sum_{j=1}^n (\sqrt{n}\epsilon_j - \sqrt{n}\zeta h_j)^2.
\end{equation*}
Since $\epsilon_j \sim \mathcal{N}(0,\sigma^2)$ and $h_j \sim \mathcal{N}(0,1)$ are independent, we have
\begin{equation*}
\mathbb{E}(\sqrt{n}\epsilon_j - \sqrt{n}\zeta h_j)^2 = n(\sigma^2 + \zeta^2).
\end{equation*}
Thus,
\begin{equation*}
\frac{1}{n}\|\sqrt{n}\epsilon - \sqrt{n}\zeta h\|_2^2 \to \sigma^2 + \zeta^2.
\end{equation*}
Hence,
\begin{equation*}
\frac{\tilde{\tau}}{\sqrt{n}}\|\sqrt{n}\epsilon - \sqrt{n}\zeta h\|_2
= \tilde{\tau}\sqrt{\sigma^2 + \zeta^2} + o_P(1).
\end{equation*}
As $\zeta \to \infty$, we have $\sqrt{\sigma^2 + \zeta^2} \sim \zeta$, hence the objective diverges linearly in $\zeta$. Therefore, any optimizer $\zeta^*$ must lie in a bounded interval $[0,C]$ uniformly in $n$. This justifies the application of the Law of Large Numbers \citep{ash2000probability,kolmogorov2018foundations}:
\begin{equation*}
\frac{1}{n}\|\sqrt{n}\epsilon - \sqrt{n}\zeta h\|_2^2 \to \sigma^2 + \zeta^2.
\end{equation*}

Using the variational upper bound (a rearrangement of the AM-GM inequality) $\sqrt{X} = \min_{b>0} \left( \frac{X}{2b} + \frac{b}{2} \right)$ for $X = \sigma^2 + \zeta^2$, the objective becomes:
\begin{equation*}
\max_{\tilde{\tau} \ge 0} \min_{b>0} \left\{ \frac{\tilde{\tau} b}{2} + \frac{\tilde{\tau} \sigma^2}{2b} + \frac{1}{n} \sum_{i=1}^p \min_{\nu_i} \left[ |\nu_i + \beta_i^*| + \frac{\tilde{\tau}\lambda_i}{2b} \nu_i^2 - \tilde{\tau}\sqrt{n\lambda_i} g_i \nu_i \right] \right\}.
\end{equation*}
Let $x_i = \nu_i + \beta_i^*$. Then:
\begin{align*}
&|\nu_i + \beta_i^*| + \frac{\tilde{\tau}\lambda_i}{2b} \nu_i^2 - \tilde{\tau}\sqrt{n\lambda_i} g_i \nu_i \\
&= |x_i| + \frac{\tilde{\tau}\lambda_i}{2b}(x_i - \beta_i^*)^2 - \tilde{\tau}\sqrt{n\lambda_i} g_i (x_i - \beta_i^*) \\
&= |x_i| + \frac{\tilde{\tau}\lambda_i}{2b} \left( x_i^2 - 2x_i\beta_i^* + (\beta_i^*)^2 \right)
- \tilde{\tau}\sqrt{n\lambda_i} g_i x_i + \tilde{\tau}\sqrt{n\lambda_i} g_i \beta_i^* \\
&= |x_i| + \frac{\tilde{\tau}\lambda_i}{2b} x_i^2
- \left( \frac{\tilde{\tau}\lambda_i}{b}\beta_i^* + \tilde{\tau}\sqrt{n\lambda_i} g_i \right) x_i + C \\
&= |x_i| + \frac{\tilde{\tau}\lambda_i}{2b}
\left[
x_i^2 - 2x_i\left( \beta_i^* + \frac{b\sqrt{n}}{\sqrt{\lambda_i}} g_i \right)
\right] + C \\
&= |x_i| + \frac{\tilde{\tau}\lambda_i}{2b}
\left(
x_i - \left( \beta_i^* + \frac{b\sqrt{n}}{\sqrt{\lambda_i}} g_i \right)
\right)^2 + C'.
\end{align*}
Thus,
\begin{equation*}
x_i^* = \mathcal{S}\left( \beta_i^* + \frac{b\sqrt{n}}{\sqrt{\lambda_i}} g_i ; \frac{b}{\tilde{\tau}\lambda_i} \right).
\end{equation*}
Define
\begin{equation*}
\tau^* = \tilde{\tau}\sqrt{n}, \quad \kappa_i = \frac{1}{\tau^*\sqrt{\lambda_i}}, \quad c_i = \frac{\sqrt{\lambda_i}}{b\sqrt{n}}\beta_i^*.
\end{equation*}
Using the homogeneity property $\mathcal{S}(cA;cB) = c\mathcal{S}(A;B)$, we obtain
\begin{equation*}
\nu_i^* = \frac{b\sqrt{n}}{\sqrt{\lambda_i}} \left[ \mathcal{S}(g_i + c_i; \kappa_i) - c_i \right].
\end{equation*}

For $i \le k^*$, the expected error is:
\begin{equation*}
    \mathcal{E}_{\mathrm{head}} = b^2 \frac{1}{n} \sum_{i=1}^{k^*} \mathbb{E}_{g_i}\left[ \left( \mathcal{S}(g_i + c_i; \kappa_i) - c_i \right)^2 \right].
\end{equation*}
Let $F(g) = \mathcal{S}(g+c; \kappa) - c$. Evaluating the three regions of $\mathcal{S}$:
If $g+c > \kappa$: $F(g) = g-\kappa$. Since $g > \kappa-c \implies -c < g-\kappa \le g+\kappa$. Thus $|F(g)| \le |g| + \kappa$.
If $g+c < -\kappa$: $F(g) = g+\kappa$. Since $g < -\kappa-c \implies -g-\kappa > c$. Thus $|F(g)| = -g-\kappa \le |g| + \kappa$.
If $|g+c| \le \kappa$: $F(g) = -c$. Since $-\kappa \le g+c \le \kappa \implies -\kappa-g \le c \le \kappa-g$, meaning $|c| \le |g| + \kappa$. Thus $|F(g)| \le |g| + \kappa$.
Therefore, $\left( \mathcal{S}(g_i + c_i; \kappa_i) - c_i \right)^2 \le (|g_i| + \kappa_i)^2 \le 2g_i^2 + 2\kappa_i^2$. Taking the expectation over $g_i \sim \mathcal{N}(0,1)$:
\begin{equation*}
    \mathcal{E}_{\mathrm{head}} \le 2b^2 \frac{1}{n} \sum_{i=1}^{k^*} (1 + \kappa_i^2) = \mathcal{O}\left( \sigma^2 \frac{k^*}{n} \right).
\end{equation*}

For $i > k^*$, $\beta_i^* = 0 \implies c_i = 0$, and $\kappa_i = \kappa_T$. The tail risk is:
\begin{equation*}
    \mathcal{E}_{\mathrm{tail}} = b^2 \frac{r_2}{n} \mathbb{E}[\mathcal{S}^2(g; \kappa_T)].
\end{equation*}
Let $P_T = \mathbb{P}(|g| > \kappa_T) = 2Q(\kappa_T)$. Differentiating the objective with respect to $b$ gives $\frac{1}{n}\sum_{i=1}^{k^*} \mathbb{P}(|g_i+c_i|>\kappa_i) + \frac{r_2}{n} P_T = 1$. Since $k^* = o(n)$, $\frac{r_2}{n}P_T \to 1$.
Using $P_T \sim \sqrt{\frac{2}{\pi}} \frac{1}{\kappa_T} e^{-\kappa_T^2/2}$ yields $\kappa_T^2 \sim 2 \log(r_2/n)$.
Integration by parts for the truncated second moment gives:
\begin{align*}
    \mathbb{E}[\mathcal{S}^2] &= 2 \int_{\kappa_T}^\infty (x-\kappa_T)^2 \varphi(x) dx \\
    &= 2 \left( \int_{\kappa_T}^\infty x^2\varphi(x) dx - 2\kappa_T \int_{\kappa_T}^\infty x\varphi(x) dx + \kappa_T^2 \int_{\kappa_T}^\infty \varphi(x) dx \right) \\
    &= 2 \left[ (\kappa_T\varphi(\kappa_T) + Q(\kappa_T)) - 2\kappa_T\varphi(\kappa_T) + \kappa_T^2 Q(\kappa_T) \right] \\
    &= 2 \left[ (1+\kappa_T^2)Q(\kappa_T) - \kappa_T\varphi(\kappa_T) \right].
\end{align*}
Using $Q(\kappa_T) \le \varphi(\kappa_T)\left(\frac{1}{\kappa_T} - \frac{1}{\kappa_T^3} + \frac{3}{\kappa_T^5}\right)$:
\begin{align*}
    \frac{\mathbb{E}[\mathcal{S}^2]}{2\varphi(\kappa_T)} &\le (1+\kappa_T^2)\left(\frac{1}{\kappa_T} - \frac{1}{\kappa_T^3} + \frac{3}{\kappa_T^5}\right) - \kappa_T \\
    &= \left( \frac{1}{\kappa_T} - \frac{1}{\kappa_T^3} + \frac{3}{\kappa_T^5} \right) + \left( \kappa_T - \frac{1}{\kappa_T} + \frac{3}{\kappa_T^3} \right) - \kappa_T \\
    &= \frac{2}{\kappa_T^3} + \frac{3}{\kappa_T^5}.
\end{align*}
For $\kappa_T \ge 1$, $\frac{3}{\kappa_T^5} \le \frac{3}{\kappa_T^3}$, yielding $\frac{\mathbb{E}[\mathcal{S}^2]}{2\varphi(\kappa_T)} \le \frac{5}{\kappa_T^3} = \frac{5}{\kappa_T^2} \frac{1}{\kappa_T}$.
Using $Q(\kappa_T) \ge \varphi(\kappa_T)\left(\frac{1}{\kappa_T} - \frac{1}{\kappa_T^3} + \frac{3}{\kappa_T^5} - \frac{15}{\kappa_T^7}\right)$:
\begin{align*}
    \frac{\mathbb{E}[\mathcal{S}^2]}{2\varphi(\kappa_T)} &\ge (1+\kappa_T^2)\left(\frac{1}{\kappa_T} - \frac{1}{\kappa_T^3} + \frac{3}{\kappa_T^5} - \frac{15}{\kappa_T^7}\right) - \kappa_T \\
    &= \left( \frac{1}{\kappa_T} - \frac{1}{\kappa_T^3} + \frac{3}{\kappa_T^5} - \frac{15}{\kappa_T^7} \right) + \left( \kappa_T - \frac{1}{\kappa_T} + \frac{3}{\kappa_T^3} - \frac{15}{\kappa_T^5} \right) - \kappa_T \\
    &= \frac{2}{\kappa_T^3} - \frac{12}{\kappa_T^5} - \frac{15}{\kappa_T^7} = \frac{1}{\kappa_T^3} \left( 2 - \frac{12}{\kappa_T^2} - \frac{15}{\kappa_T^4} \right).
\end{align*}
For sufficiently large $\kappa_T$, $2 - \frac{12}{\kappa_T^2} - \frac{15}{\kappa_T^4} \ge 1$. Thus, $\frac{\mathbb{E}[\mathcal{S}^2]}{2\varphi(\kappa_T)} \ge \frac{1}{\kappa_T^3} = \frac{1}{\kappa_T^2} \frac{1}{\kappa_T}$.
Using $\frac{\varphi(\kappa_T)}{\kappa_T} \sim Q(\kappa_T) = P_T / 2$ yields $\mathbb{E}[\mathcal{S}^2] = \Theta\left( \frac{1}{\kappa_T^2} P_T \right)$.
Substituting this into $\mathcal{E}_{\mathrm{tail}}$:
\begin{equation*}
    \mathcal{E}_{\mathrm{tail}} = b^2 \frac{r_2}{n} \Theta\left( \frac{1}{\kappa_T^2} P_T \right) = \Theta\left( \frac{b^2}{\kappa_T^2} \right) \left[ \frac{r_2}{n} P_T \right].
\end{equation*}
Applying $\frac{r_2}{n}P_T \to 1$, $\kappa_T^2 \sim 2\log(r_2/n)$, and $(b^*)^2 = \Theta(\sigma^2)$:
\begin{equation*}
    \mathcal{E}_{\mathrm{tail}} = \Theta\left( \frac{\sigma^2}{\log(r_2/n)} \right).
\end{equation*}

The expected risk is decomposed using the domain indicator function:
\begin{equation*}
    \mathbb{E}[\mathcal{E}(\hat{\beta}_\infty)] = \frac{1}{n}\mathbb{E}\|\Sigma^{1/2}\hat{\nu}_{R_n}\|_2^2 + \frac{1}{n}\mathbb{E}\left[ \|\Sigma^{1/2}\hat{\nu}_n\|_2^2 \mathbf{1}_{\{\|\hat{\nu}_n\|_2 > R_n\}} \right].
\end{equation*}
The first term converges to $\mathcal{E}_{\mathrm{head}} + \mathcal{E}_{\mathrm{tail}}$. By the Cauchy-Schwarz inequality:
\begin{equation*}
    \frac{1}{n}\mathbb{E}\left[ \|\Sigma^{1/2}\hat{\nu}_n\|_2^2 \mathbf{1}_{\{\|\hat{\nu}_n\|_2 > R_n\}} \right] \le \frac{1}{n} \sqrt{\mathbb{E}\|\Sigma^{1/2}\hat{\nu}_n\|_2^4} \sqrt{\mathbb{P}(\|\hat{\nu}_n\|_2 > R_n)}.
\end{equation*}
For $p/n \to \infty$, the polynomial moments of the Inverse-Wishart matrix $(Z Z^\top)^{-1}$ are finite, yielding $\mathbb{E}\|\Sigma^{1/2}\hat{\nu}_n\|_2^4 \le \mathcal{O}(n^C)$. 
The deviation probability satisfies $\mathbb{P}(\|\hat{\nu}_n\|_2 > R_n) \le e^{-c n}$. The residual term decays to 0. 
\begin{equation*}
    \mathbb{E}[\mathcal{E}(\hat{\beta}_\infty)] = \mathcal{O}\left( \sigma^2 \frac{k^*}{n} \right) + \Theta\left( \frac{\sigma^2}{\log(r_2/n)} \right).
\end{equation*}
This concludes the proof.
\end{proof}

\section{Missing Proof of Theorem~\ref{thm:early_stopping}}
\label{app:proof_thm_early_stopping}

\begin{proof}
We analyze the finite-time dynamic path of the continuous $\ell_2$-Boosting flow and derive the basic inequality at the analytical stopping time $t^*$.

We first prove that the maximum absolute empirical correlation $\rho(t) = \|g(t)\|_\infty$ is monotonically non-increasing. The negative gradient of the empirical risk is $g(t) = \frac{1}{n}\Phi^\top(Y - \Phi\beta(t))$. By standard differential inclusion theory, the trajectory $\beta(t)$ governed by $\dot{\beta}(t) \in \partial\|g(t)\|_\infty$ is absolutely continuous. Its temporal derivative evaluates to:
\begin{equation*}
\dot{g}(t) = -\frac{1}{n}\Phi^\top\Phi\dot{\beta}(t).
\end{equation*}
Let $s(t) \in \partial\|g(t)\|_\infty$ be a subgradient. By definition, we may choose $s(t) = \dot{\beta}(t)$. Utilizing Danskin's envelope theorem \citep{danskin2012theory} for the subdifferential of the $\ell_\infty$-norm, the time derivative of $\rho(t)$ resolves to the inner product:
\begin{equation*}
\frac{d}{dt}\rho(t) = \frac{d}{dt}\|g(t)\|_\infty = \langle s(t), \dot{g}(t)\rangle = \langle \dot{\beta}(t), \dot{g}(t)\rangle.
\end{equation*}
Substituting $\dot{g}(t)$ into the derivative, we obtain:
\begin{equation*}
\frac{d}{dt}\rho(t) = \left\langle \dot{\beta}(t), -\frac{1}{n}\Phi^\top\Phi\dot{\beta}(t) \right\rangle = -\frac{1}{n}\|\Phi\dot{\beta}(t)\|_2^2 \le 0.
\end{equation*}
Thus, $\rho(t)$ is continuous and monotonically non-increasing. Assume the initial correlation exceeds the noise floor, $\rho(0) = \left\|\frac{1}{n}\Phi^\top Y\right\|_\infty > 2\lambda_n$. By the Intermediate Value Theorem and monotonicity, there exists a unique time $t^*$ such that $\rho(t^*) = 2\lambda_n$, which exactly coincides with the stopping time definition $t^* = \inf\{t > 0 : \rho(t) \le 2\lambda_n\}$.

Next, we establish the high-probability bound for the noise correlation vector $\xi = \frac{1}{n}\Phi^\top\epsilon$. Conditional on the random design matrix $\Phi$, each component $\xi_j = \frac{1}{n}\Phi_j^\top\epsilon$ is a linear combination of independent sub-Gaussian random variables $\epsilon$. Thus, conditional on $\Phi$, $\xi_j$ is sub-Gaussian with variance proxy $\frac{\sigma^2}{n^2}\|\Phi_j\|_2^2$. By the column-normalization assumption $\frac{1}{n}\|\Phi_j\|_2^2 \le 1$, this conditional variance proxy is bounded by $\frac{\sigma^2}{n}$.

Consequently, for any $t > 0$, the conditional tail probability satisfies:
\begin{equation*}
\mathbb{P}\bigl(|\xi_j| \ge t \mid \Phi\bigr) \le 2\exp\left(-\frac{n t^2}{2\sigma^2}\right).
\end{equation*}
Setting the threshold $t = \lambda_n = \sigma\sqrt{\frac{2c\log p}{n}}$ for a constant $c > 1$ and applying a union bound conditionally on $\Phi$ yields:
\begin{equation*}
\mathbb{P}\bigl(\|\xi\|_\infty \ge \lambda_n \mid \Phi\bigr) \le \sum_{j=1}^p 2\exp\left(-\frac{n}{2\sigma^2} \cdot \frac{2c\sigma^2\log p}{n}\right) = 2p^{1-c}.
\end{equation*}
The right-hand side is deterministic. Integrating over the distribution of $\Phi$ via the law of total probability (Tonelli's theorem) yields the unconditional bound:
\begin{equation*}
\mathbb{P}\bigl(\|\xi\|_\infty \ge \lambda_n\bigr) = \mathbb{E}_\Phi\Bigl[\mathbb{P}\bigl(\|\xi\|_\infty \ge \lambda_n \mid \Phi\bigr)\Bigr] \le 2p^{1-c}.
\end{equation*}
We condition the remainder of the proof on this high-probability event $\Omega_0 = \{\|\xi\|_\infty \le \lambda_n\}$.

Let $\hat{\beta}^* = \beta(t^*)$ denote the early-stopped estimator. The empirical prediction error is given by:
\begin{equation*}
\frac{1}{n}\|\Phi(\hat{\beta}^* - \beta^*)\|_2^2 = \left\langle \frac{1}{n}\Phi^\top\Phi(\hat{\beta}^* - \beta^*), \hat{\beta}^* - \beta^* \right\rangle.
\end{equation*}
By substituting the generative model $Y = \Phi\beta^* + \epsilon$, we have $\Phi(\hat{\beta}^* - \beta^*) = \Phi\hat{\beta}^* - Y + \epsilon$. Expanding the inner product:
\begin{equation*}
\frac{1}{n}\|\Phi(\hat{\beta}^* - \beta^*)\|_2^2 = \left\langle \frac{1}{n}\Phi^\top(\Phi\hat{\beta}^* - Y), \hat{\beta}^* - \beta^* \right\rangle + \left\langle \frac{1}{n}\Phi^\top\epsilon, \hat{\beta}^* - \beta^* \right\rangle.
\end{equation*}
Notice that the negative gradient at $t^*$ is $g(t^*) = \frac{1}{n}\Phi^\top(Y - \Phi\hat{\beta}^*)$. Hence, the first term inside the inner product is exactly $-g(t^*)$. We bound both terms using Hölder's inequality $\langle a, b \rangle \le \|a\|_\infty \|b\|_1$:
\begin{equation}
\label{eq:holder_bound}
\frac{1}{n}\|\Phi(\hat{\beta}^* - \beta^*)\|_2^2 \le \|-g(t^*)\|_\infty \|\hat{\beta}^* - \beta^*\|_1 + \|\xi\|_\infty \|\hat{\beta}^* - \beta^*\|_1.
\end{equation}
Because the $\ell_\infty$-norm is symmetric, $\|-g(t^*)\|_\infty = \|g(t^*)\|_\infty$. By the established property of the stopping time $t^*$, we have exactly $\|g(t^*)\|_\infty = \rho(t^*) = 2\lambda_n$. Under the event $\Omega_0$, the noise term satisfies $\|\xi\|_\infty \le \lambda_n$. Substituting these bounds into Eq.~\eqref{eq:holder_bound} yields the fundamental basic inequality:
\begin{equation}
\label{eq:basic_inequality}
\frac{1}{n}\|\Phi(\hat{\beta}^* - \beta^*)\|_2^2 \le 2\lambda_n \|\hat{\beta}^* - \beta^*\|_1 + \lambda_n \|\hat{\beta}^* - \beta^*\|_1 = 3\lambda_n \|\hat{\beta}^* - \beta^*\|_1.
\end{equation}

By the triangle inequality, we expand the $\ell_1$-norm on the right-hand side:
\begin{equation*}
\|\hat{\beta}^* - \beta^*\|_1 \le \|\hat{\beta}^*\|_1 + \|\beta^*\|_1.
\end{equation*}
Conditioned on the premise that the algorithmic path norm at the stopping time satisfies $\|\hat{\beta}^*\|_1 \le C\|\beta^*\|_1$ for some universal constant $C \ge 1$, we obtain:
\begin{equation*}
\|\hat{\beta}^* - \beta^*\|_1 \le (C+1)\|\beta^*\|_1.
\end{equation*}
Substituting this back into Eq.~\eqref{eq:basic_inequality}, the empirical prediction error is deterministically controlled by:
\begin{equation*}
\frac{1}{n}\|\Phi(\hat{\beta}^* - \beta^*)\|_2^2 \le 3(C+1)\lambda_n \|\beta^*\|_1.
\end{equation*}
Inserting the definition of the threshold $\lambda_n = \sigma\sqrt{\frac{2c\log p}{n}}$, we attain the minimax optimal rate:
\begin{equation*}
\frac{1}{n}\|\Phi(\hat{\beta}^* - \beta^*)\|_2^2 \le 3(C+1)\|\beta^*\|_1 \sigma\sqrt{\frac{2c\log p}{n}} = \mathcal{O}\left(\|\beta^*\|_1 \sigma\sqrt{\frac{\log p}{n}}\right).
\end{equation*}
This bound holds with probability at least $1 - 2p^{1-c}$, completing the proof.
\end{proof}

\section{Exploratory Extension: Adaptive Trees and XGBoost}
\label{app:xgb_ver}

\begin{figure}[htbp]
    \centering
    \includegraphics[width=0.95\textwidth]{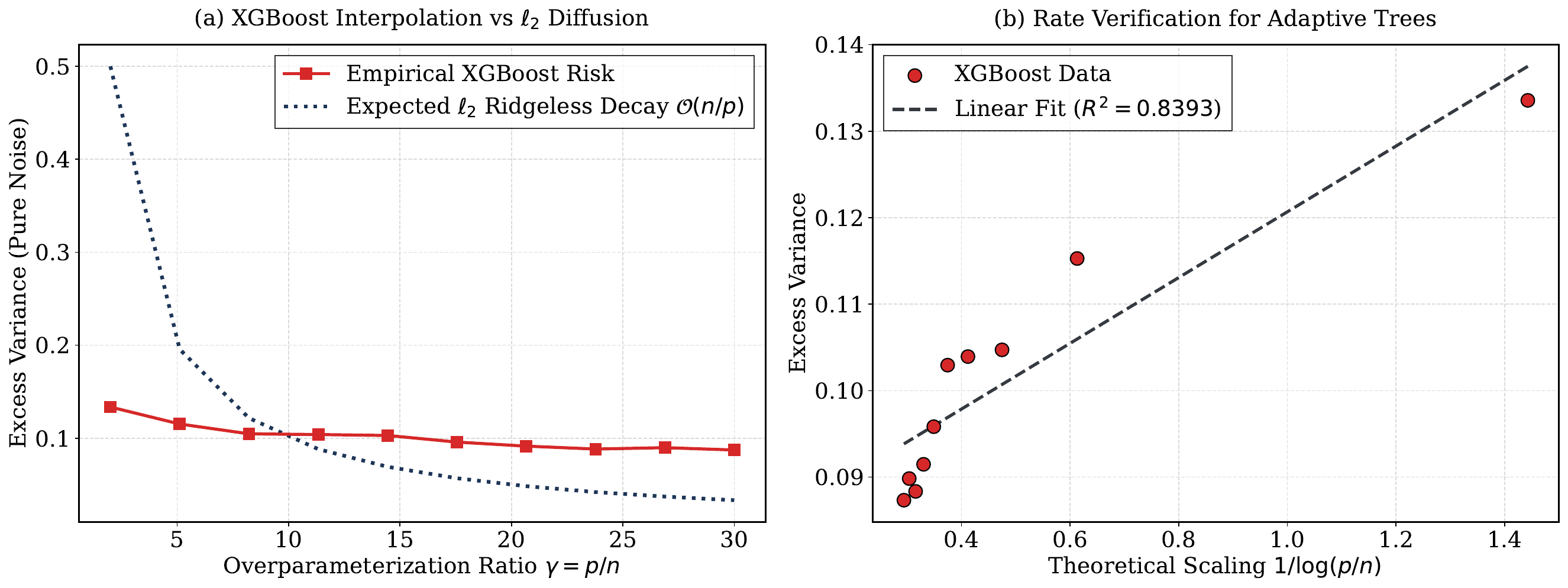} 
    \caption{Empirical evaluation of XGBoost interpolating pure noise ($n=800$). \textbf{(a)} The XGBoost risk trajectory diverges severely from the $\mathcal{O}(n/p)$ theoretical decay of $\ell_2$ interpolation, maintaining an elevated plateau. \textbf{(b)} OLS regression against $1/\log(p/n)$ yields $R^2 = 0.8393$. This strong correlation suggests that the greedy feature generation in adaptive trees inherits the structural noise localization penalty established in our fixed-dictionary analysis.}
    \label{fig:xgboost}
\end{figure}

\paragraph{Experimental Setup}
The risk asymptotics in this paper are mathematically derived for $\ell_2$-Boosting over a fixed dictionary. To explore whether the logarithmic variance penalty persists empirically under adaptive feature generation, we simulated pure noise interpolation ($\beta^* = \mathbf{0}, \sigma=1.0$) using the \texttt{XGBRegressor}. We scaled the system size to $n=800$ and evaluated the overparameterization ratio $\gamma = p/n \in [2, 30]$. To isolate the algorithmic implicit bias from explicit regularization, we utilized a limit-stopping protocol: we deactivated all explicit penalization (\texttt{reg\_alpha=0}, \texttt{reg\_lambda=0}, \texttt{gamma=0}), employed shallow trees (\texttt{max\_depth=2}), set \texttt{learning\_rate=0.1}, and ensured full interpolation of the training data.

\paragraph{Empirical Results and Analysis}
\textcolor{red}{Figure \ref{fig:xgboost}} presents the empirical excess variance of the interpolating XGBoost model. In \textcolor{red}{Figure \ref{fig:xgboost} (a)}, the empirical variance of XGBoost drastically diverged from the $\mathcal{O}(n/p)$ noise diffusion typical of $\ell_2$ ridgeless interpolants. Instead of uniformly dissipating noise energy, the XGBoost risk trajectory maintained a significantly elevated and slowly decaying profile. To quantitatively evaluate this trend, we regressed the XGBoost empirical risk against the theoretical $1/\log(p/n)$ scaling coordinate derived from our fixed-dictionary analysis (\textcolor{red}{Figure \ref{fig:xgboost} (b)}). The regression yielded a strong positive correlation with $R^2 = 0.8393$. While the dynamic, data-dependent nature of tree generation introduces algorithmic complexities that deviate from exact fixed-dictionary Basis Pursuit, this robust correlation suggests a profound underlying connection. It implies that the greedy splitting heuristics of adaptive trees inherently perform a dual $\ell_\infty$-norm maximization analogous to greedy coordinate selection. Therefore, XGBoost empirically inherits the noise localization penalty characteristic of the $\ell_1$ geometry, fundamentally resisting the rapid variance dissipation required for benign overfitting.

\end{document}